\title{On Theoretical Identifiability of Discrete Latent Causal Graphical Models}
\author{\name Seunghyun Lee \email sl4963@columbia.edu \\
      \addr Department of Statistics \\
      Columbia University
      \AND
      \name Yuqi Gu \email yuqi.gu@columbia.edu \\
       \addr Department of Statistics \\
      Columbia University}
\setlist[enumerate]{topsep=0pt, partopsep=0pt, itemsep=2.5pt, parsep=0pt}
\newtheorem{theorem}{Theorem}
\newtheorem{assumption}{Assumption}
\newtheorem{lemma}{Lemma}
\newtheorem{proposition}{Proposition}
\newtheorem{definition}{Definition}
\newtheorem{example}{Example}
\newtheorem{remark}{Remark}
\newtheorem*{theorem*}{Theorem}
\newcommand{\PP}{\mathbb P}
\newcommand{\mcx}{\mathcal X}
\newcommand{\pa}{\textsf{\upshape pa}}
\newcommand{\rk}{\textsf{\upshape rk}}
\newcommand{\ch}{\textsf{\upshape ch}}
\newcommand{\diag}{\textsf{\upshape diag}}
\def\hat{\widehat}
\def\tilde{\widetilde}
\newcommand{\blue}[1]{\textcolor{black}{#1}}
\tikzstyle{qedge}=[->,thick,black]
\tikzstyle{pre}=[->,thick,dotted]
\definecolor{myblue}{rgb}{0.0265,    0.6137,    0.8135}
\definecolor{myyellow}{rgb}{0.9290,    0.6940,    0.1250}
\tikzstyle{neuron}=[draw,circle,minimum size=26pt,inner sep=0pt, fill=black!10]
\tikzstyle{hidden}=[draw,circle,minimum size=26pt,inner sep=0pt, fill=white]
\tikzset{>={Latex[width=3mm,length=2mm]}}
\tikzstyle{arr}=[->, thick, black]
\tikzset{
    double color fill/.code 2 args={
        \pgfdeclareverticalshading[%
            tikz@axis@top,tikz@axis@middle,tikz@axis@bottom%
        ]{diagonalfill}{100bp}{%
            color(0bp)=(tikz@axis@bottom);
            color(50bp)=(tikz@axis@bottom);
            color(50bp)=(tikz@axis@middle);
            color(50bp)=(tikz@axis@top);
            color(100bp)=(tikz@axis@top)
        }
        \tikzset{shade, left color=#1, right color=#2, shading=diagonalfill}
    }
}
\begin{document}
\maketitle

\begin{abstract}
  This paper considers a challenging problem of identifying a causal graphical model under the presence of latent variables. While various identifiability conditions have been proposed in the literature, they often require multiple pure children per latent variable or restrictions on the latent causal graph. Furthermore, it is common for all observed variables to exhibit the same modality. %
    Consequently, the existing identifiability conditions are often too stringent for complex real-world data. We consider a general nonparametric measurement model with arbitrary observed variable types and binary latent variables, and propose a \emph{double triangular} graphical condition that guarantees identifiability of the entire causal graphical model. The proposed condition significantly relaxes the popular pure children condition. 
    We also establish necessary conditions for identifiability and provide valuable insights into fundamental limits of identifiability. Simulation studies verify that latent structures satisfying our conditions can be accurately estimated from data. {We also illustrate the practicality of our conditions with a real data example.}
\end{abstract}

\section{Introduction}

Discovering causal relationships under the presence of latent variables is a fundamental challenge in observational studies such as psychology and education, as well as modern machine learning and generative modeling \citep{bollen2002latent,pearl2014probabilistic}. Latent variables are especially valuable in these domains as they offer a low-dimensional, interpretable representation that captures substantively meaningful concepts and  effectively summarizes high-dimensional data.
For instance, in educational assessments designed to evaluate students' understanding of key concepts, only their responses are observable, whereas the underlying conceptual knowledge is latent. Often, these latent variables have interpretable causal relationships among themselves. Consequently, an important problem is to identify this latent graph structure using only the noisy, observed measurements.

This problem -- also commonly known as \emph{identifiability of latent causal structures} -- has been studied from numerous perspectives. Traditional studies have largely focused on models with continuous variables with linear or additive structural equations \citep{silva2006learning,anandkumar2013learning,huang2022latent,xie2022identification,xie2024generalized,montagna2023causal,montagna2025demystifying,moran2025towards}, and established various conditions that guarantee model identifiability. 
Despite the attractive parsimony, this modeling assumption can be unrealistic in practice, as the latent concepts can be discrete and the relationship between the latent and the observed may be non-additive. For example, in educational assessments, it is common to consider binary latent variables to model cognitive abilities, whose values correspond to mastery or deficiency of a certain skill, say addition or multiplication \citep{junker2001dina, von2008general}. These models are often highly non-linear, motivated by the fact that solving a question correctly requires the interaction of multiple latent skills.

Consequently, causal discovery with \emph{discrete} latent variables has received growing attention in recent years. As causal graphical models with latent variables are non-identifiable without further assumptions \citep{spirtes2001causation}, existing works have focused on proposing sufficient conditions for identifiability. In particular, {common identifiability conditions for discrete latent variable models} include (a) structural constraints on the latent graph, such as trees \citep{pearl1986structuring, song2014nonparametric,wang2017spectral} or graphs with no triangles alongside atomic cover assumptions \citep{kong2024learning},
(b) requiring two or three pure children (observed variables with exactly one latent parent) per latent variable \citep{gu2023bayesian,chen2024learning,chen2025identification,lee2025deep},
or (c) assuming a mixture oracle that recovers the order of all marginal modes \citep{kivva2021learning}\footnote{{While there exist identifiability results for \textit{linear} models with \textit{continuous} latents that do not fall into the three categories (a)--(c) (e.g. see \cite{anandkumar2013learning,adams2021identification,li2025recovery}), they do not extend to \textit{nonlinear} models with \textit{discrete} latents}.}. However, these existing results do not guarantee identifiability for complex causal structures where both the latent graph and the latent-to-observed graph are allowed to have many edges. For instance, only \cite{kivva2021learning} can identify the graphical structure in \cref{fig:intro}, by relying on the strong assumption of a mixture oracle.

\begin{figure}[h!]
\centering
\resizebox{0.49\textwidth}{!}{
  \begin{tikzpicture}[node distance={15mm},
                      empty/.style = {}
                     ]

  \node[neuron] (X1) {$X_1$}; 
  \node[neuron] (X2) [right of=X1]  {$X_2$}; 
  \node[neuron] (X3) [right of=X2]  {$X_3$}; 
  \node[neuron] (X4) [right of=X3]  {$X_4$}; 
  \node[neuron] (X5) [right of=X4]  {$X_5$}; 
  \node[neuron] (X6) [right of=X5]  {$X_6$};
  \node[neuron] (X7) [right of=X6]  {$X_7$};
  \node[neuron] (X8) [right of=X7]  {$X_8$};

  \node[hidden] (H1) [above=1.5cm of X3] {$H_1$};    
  \node[hidden] (H2) [above = 2.5cm of $(X4)!0.5!(X5)$] {$H_2$};    
  \node[hidden] (H3) [above=1.5cm of X6] {$H_3$};

  \node[empty] (Gam) at (-0.9,0.3) {$\Gamma$}; 
  \node[empty,red] (Lamb) at (2.1,2.5) {$\Lambda$};

  \node[empty] (inv) at (2,1.5) {};

  \node [draw=black,dashed,fit=(X1)(X8)(inv), inner sep=0.1cm, rounded corners] (GBbox) {};
  \node [draw=red,dashed,fit=(H1)(H2)(H3), inner sep=0.1cm] (GHbox) {};

  \draw[->, red, pre] (H1) -- (H2);
  \draw[->, red, pre] (H2) -- (H3);
  \draw[->, red, pre] (H1) -- (H3);

  \draw[->] (H1) -- (X1);
  \draw[->] (H1) -- (X2);
  \draw[->] (H1) -- (X3);
  \draw[->] (H1) -- (X4);
  \draw[->] (H1) -- (X5);

  \draw[->] (H2) -- (X2);
  \draw[->] (H2) -- (X4);
  \draw[->] (H2) -- (X7);
  \draw[->] (H2) -- (X6);
  
  \draw[->] (H3) -- (X3);
  \draw[->] (H3) -- (X4);
  \draw[->] (H3) -- (X5);
  \draw[->] (H3) -- (X7);

  \end{tikzpicture}
}
  \caption{Example of an identifiable causal graphical structure. Here, $H = \{H_1, H_2, H_3\}$ denotes the unobserved hidden/latent variables, and $X = \{X_1, \ldots, X_{8}\}$ denotes the observed responses. The edges of the DAG are partitioned into the latent graph $\Lambda$ (indicated by the dotted red arrows) and the latent-to-observed bipartite graph $\Gamma$ (indicated by the solid black arrows).} %

  \label{fig:intro}
  \end{figure}
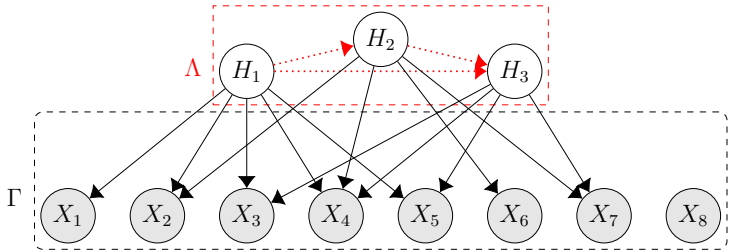

This work proposes a new and significantly weaker set of conditions under which all components of the causal graphical model (both the graphs and conditional probabilities) are identifiable. We work under a setting where the latent variables are binary but the observed responses can be arbitrary and have flexible non-linear distributions. Binary latent variables are common in many applications, and can model a students' mastery/non-mastery of a skill or a patients' presence/absence of a disease. In particular, our proposed identifiability result shows that complex causal structures such as that in \cref{fig:intro} can be identified.

\paragraph{Organization and main contributions}
This paper is organized as follows. \cref{sec:setup} formalizes the model setup and assumptions. \cref{sec:main result} provides our main theoretical contributions regarding identifiability, which we further elaborate below.
\begin{enumerate}[(a)]
    \item We propose \emph{sufficient conditions for identifiability} without any parametric or latent structural requirements. Our main condition is the \emph{double-triangular} graphical condition, which requires two distinct triangular structures in the latent-to-observed graph $\Gamma$ (see Theorems \ref{thm:identify K} and \ref{thm:identifiability}). These conditions are easily checkable and significantly weaker than those in the existing literature. In particular, we make no assumptions on the latent causal structure, do not require pure children or parametric forms, and allow arbitrary response types for the observed variables. Thus, our framework is broadly applicable to a wide range of real-world problems.

    \item We also propose \emph{necessary conditions for identifiability}. In Theorems \ref{thm:necessary} and \ref{thm:subset necessary}, we show that each latent variable must have at least three observed children, and the latent-to-observed graph must have non-nested columns (the so-called subset condition, see \cref{def:subset condition}). While these necessary conditions do not exactly match our sufficient conditions, they help depict the fundamental limits of identifying causal models with binary latent variables.
\end{enumerate}

{In \cref{sec:sim}, we conduct experiments on both simulated and real data to demonstrate the implications and practical usefulness of our identifiability theory. Finally, \cref{sec:discussion} discusses directions for future work.}

\subsection{Other related works}\label{sec:related works}
\paragraph{Structure learning with latent variables}
Learning graphical structures in the presence of latent variables has been widely studied, and there exist many works that learn (a) the latent structure up to a certain class of ambiguities (e.g. ancestral graphs in \cite{richardson2002ancestral} and \cite{ali2009ancestral} or atomic covers in \cite{xie2022identification}) 
or (b) the graphical structure among observed variables \citep{evans2018margins,frot2019robust,gordon2023causal}. In particular, \cite{evans2018margins} considered a flipped setting (compared to ours) with discrete observations and arbitrary latent variables. %
However, our problem of identifying the causal structure \emph{over all variables}, including those related to latent variables, is distinct from these line of works. {Mainly, we work under the so-called ``measurement model'' (see \cref{assmp:measurement model}) to resolve such ambiguities and identify the complex causal relations among the latent variables.}

\paragraph{Identifiability of hierarchical latent structures}
Recently, identifiability issues have been explored for hierarchical latent structure models that allow latent variables without any observed children %
\citep{anandkumar2013learning, xie2022identification, kong2024learning, xie2024generalized, lee2025deep, dong2023versatile}. Under our general setup without any structural/parametric assumptions, it is impossible to recover hierarchical structures since such latent variables not directly measured by observed variables can be marginalized out without changing the likelihood, and we do not consider this direction.

Directly related works that consider discrete latent variables \citep{kivva2021learning, chen2024learning,chen2025identification, lee2025deep} are discussed in detail in \cref{sec:literature review}.

\section{Background}\label{sec:setup}
\subsection{Model setup and assumptions}
We start by defining notations for the considered causal graphical models.
Let $G = (V, E)$ be a directed acyclic graph (DAG), where the vertex set is partitioned as $V = (X, H)$, with observed variables $X = (X_1, \ldots, X_J) \in \prod_{j=1}^J \mathcal{X}_j$ and binary hidden/latent variables $H = (H_1, \ldots, H_K) \in \{0,1\}^K$. {The binary latent variable assumption turns out to be crucial for our technical arguments as well as for allowing minimal restrictions on observed sample space $\mathcal{X}_j$s, as further elaborated in \cref{sec:binary}.}
The observed variables are not necessarily discrete, and can be \emph{arbitrary} types of responses, as long as each sample space $\mathcal{X}_j$ is a nondegenerate separable metric space. Throughout the paper, we will work under this setting, which can allow flexible modeling of multi-modal data that includes both discrete and continuous responses. %

We separate our global model assumptions into those on the graph structure $G$ and that on the probability distribution on the vertices $V$. First, regarding the graph $G$, we work under the following \emph{measurement model assumption \citep{silva2004generalized, silva2006learning, kivva2021learning}.}
\begin{assumption}[Measurement model]\label{assmp:measurement model}
    Assume that there are no edges (a) between the observed variables $X$ and (b) from observed to hidden variables.
\end{assumption}
\cref{assmp:measurement model} states that the observations are noisy measurements of the latent. Then, we decompose the edge set as $E = \Lambda \cup \Gamma$, where $\Lambda$ collects the edges among the hidden variables $H$ (i.e. the latent causal graph), and $\Gamma$ collects the edges from one hidden variable to one observed variable (i.e. the latent-to-observed bipartite graph). With a slight abuse of notation, also let $\Gamma$ denote the $J \times K$ bipartite graph adjacency matrix where the $(j,k)$-th entry $\gamma_{j,k}$ is an indicator of whether there is an edge from $H_k$ to $X_j$.
See Figure \ref{fig:intro} for a visualization. Note that an observed variable can have \emph{all} latent variables as parents ($X_4$) or \emph{no} parent ($X_{8}$).

Next, we discuss assumptions on the probability distribution $\PP(V) = \PP(X,H)$. We assume that $\PP(V)$ follows the causal Markov property under $G$ and is faithful, which are fundamental assumptions for learning graphical models \citep{spirtes2001causation}. 
\begin{assumption}[Basic graphical model assumptions]\label{assmp:causal markov}
    Assume the causal Markov property 
    \begin{align*}
        \PP(V) &= \prod_{V_i \in V}\PP(V_i \mid \pa(V_i)) \\
        &= \prod_{k=1}^K \PP(H_k \mid \pa(H_k)) \prod_{j=1}^J \PP(X_j \mid \pa(X_j)).
    \end{align*}
    Also, assume that $\PP(V)$ is faithful to $G$ in the sense that the conditional independence relationships embedded in the DAG $G$ is equal to that in $\PP(V)$.
\end{assumption}

We additionally impose the following nondegeneracy assumption.

\begin{assumption}[Nondegeneracy]\label{assmp:nondegenerate}
The probability distribution of $V=(X,H)$ satisfies:
\begin{enumerate}[(a)]
    \item All latent configurations are allowed, that is $\PP(H = h) > 0$ for all $h \in \{0,1\}^K$.
    \item For each $j$, the conditional distributions of $X_j$ are distinct for each latent configuration: $\PP(X_j \mid \pa(X_j) = h) \neq \PP(X_j \mid \pa(X_j) = h')$ for all $h \neq h' \in \{0,1\}^{|\pa(X_j)|}.$
    \item Each latent variable $H_k$ has at least one observed child. In other words, $\Gamma$ has no all-zero columns.
\end{enumerate}
\end{assumption}

We elaborate more on each condition. Here, Assumption \ref{assmp:nondegenerate}(a) ensures full support over all latent configurations. Assumption \ref{assmp:nondegenerate}(b) ensure nondegeneracy of all conditional distributions that correspond to each edge in the latent-to-observed graph $\Gamma$. Note that we only require the conditional distributions of $X_j$ to be distinct for each configuration of the latent parents, and do not assume full rankness of the matrix $\PP(X_j \mid \pa(X_j))$. %
These two conditions are standard for identifying discrete latent causal models (e.g. see Assumption 2.4 in \cite{kivva2021learning} or Condition 4.1 (i) in \cite{chen2024learning}). We refer the interested reader to Appendix A in \cite{kivva2021learning} for examples arguing the necessity of these conditions. In particular, in Appendix \ref{sec:counterex} of this paper, we provide an alternative example with binary responses that highlights the need for the second condition.
Assumption \ref{assmp:nondegenerate}(c) rules out unmeasured latent variables without any observed child. While this is often considered as part of the measurement model setting, we spell this out to clarify that no hierarchical (or multi-layer) latent structures are considered, as discussed in \cref{sec:related works}.

Based on \cref{assmp:nondegenerate}, we formalize the class of models that we will consider as \emph{binary latent causal models}.
\begin{definition}[BLCM]\label{def:blcm}
    A Binary Latent Causal Model (BLCM) on the graph $G = (V, E)$ is a probability distribution $\PP(V)$ that satisfies Assumptions \ref{assmp:measurement model}--\ref{assmp:nondegenerate}. %
\end{definition}

\begin{remark}[Comparison with assumptions in the literature]
    One distinction of this paper compared to other discrete latent causal models \citep{kivva2021learning, chen2024learning, kong2024learning} is that we assume a priori that all latent variables are \emph{binary}. While this is a stronger assumption, it is highly interpretable and also grants additional flexibility. To elaborate, the cited papers require additional conditions on top of \cref{assmp:nondegenerate}, such as no twins and maximality, to deal with the merging and splitting of latent variables. We do not require such assumptions. 

    Also, note that we do not impose any structural assumptions on the latent graph $\Lambda$; it can be an arbitrary graph that ranges from a tree to a complete graph, and can even allow isolated latent variables (without any latent neighbor). This stands in contrast to many previous approaches that restrict the latent graph to be a tree \citep{pearl1986structuring, choi2011learning} or forbids triangles \citep{kong2024learning}.
\end{remark}

\subsection{Objective: Identifiability}\label{sec:objective} 
Under the above modeling assumptions, our goal is to identify all model components of the BLCM from the observed distribution $\PP(X)$:
\begin{enumerate}[1.]
    \item The number of latent variables $K$,
    \item The latent DAG $\Lambda$ and the latent-to-observed bipartite graph $\Gamma$,
    \item The distribution of the latent variables $\PP(H)$ and the conditional observed distributions $\PP(X_j \mid H)$.
\end{enumerate}
Here, it is important to clearly define the notion of ``identifiability'', as the exact definition varies by different models and papers. We wish to adopt the most stringent, statistical notion of parameter identifiability. In other words, by viewing the above model components as a parameter vector $\theta$, we say that an identical observed distribution $\PP(X; \theta) = \PP(X; \theta')$ implies identical parameters $\theta = \theta'$.

Unfortunately, identifiability in latent variable models is subject to various fundamental ambiguities, and the above definition does not directly apply.
First, we consider identifiability up to latent variable \emph{label permutation} (e.g., $H_1$ and $H_2$ can be re-labeled in \cref{fig:intro}) and \emph{sign-flipping} (e.g., the meaning of $H_1 = 0$ and $H_1 = 1$ can be permuted). These two ambiguities are fundamental under the presence of categorical latent variables \citep{lee2025deep}. 
Additionally, in terms of identifying the graphical structure, we identify the latent graph $\Lambda$ up to its \emph{Markov equivalence} class. It is well known that the edges $E = (\Lambda, \Gamma)$ are identifiable only up to Markov equivalence, even when the entire distribution $\PP(X, H)$ is known \citep{pearl2014probabilistic, spirtes2001causation}. Under the measurement model assumption described earlier (\cref{assmp:measurement model}), the bipartite graph $\Gamma$ does not suffer from this problem and can be fully identified. {A rigorous definition of these trivial ambiguities are provided in \cref{sec:ambiguities}.}

Based on the above considerations, we formally define the notion of identifiability that will be used in the remainder of the paper.
\begin{definition}[Identifiability]\label{def:identifiability}
    Consider a BLCM with distribution $\PP(V)$ on $G = (V, E)$, with variables $V = (X, H)$ and edges $E = \Lambda \cup \Gamma$. 
    We say that the \emph{latent dimension} $K$ is identifiable if there is no alternate BLCM with distribution $\tilde{\PP}(\tilde{V})$ on $\tilde{G} = (\tilde{V}, \tilde{E})$ that satisfies the following:
    \begin{enumerate}[(a)]
        \item $\tilde{V} = (X, \tilde{H})$ for latent variables $\tilde{H} \in \{0,1\}^{\tilde{K}}$ such that $\tilde{K} < K$.
        \item $\PP(X) = \tilde{\PP}(X)$, that is, $\PP$ and $\tilde{\PP}$ define identical marginal distributions.
    \end{enumerate}
    Next, given the latent dimension $K$, we say that the remaining \emph{model components} $(E, \PP)$ are identifiable when there is no alternate BLCM with distribution $\tilde{\PP}(\tilde{V})$ that satisfies the above condition (b) and the following conditions (a'), (c):
    \begin{enumerate}
        \item[(a')] $\tilde{V} = (X, \tilde{H})$ for latent variables $\tilde{H} \in \{0,1\}^{K}$.
        \item[(c)] $(E, \PP)$ is distinct from $(\tilde{E}, \tilde{\PP})$ up to label permutation, sign-flipping, and Markov equivalence (see \cref{def:ambiguities} for details).
    \end{enumerate}
\end{definition}

\cref{def:identifiability} ensures that no alternate, nontrivial BLCM with $\tilde{K} \le K$ latent variables defines the same observed distribution $\PP(X)$. 
The alternate latent dimension $\tilde{K}$ is restricted to address a final source of trivial non-identifiability that arises from the existence of redundant latent variables (e.g. see Lemma 1 in \cite{evans2016graphs}, or the notion of minimality in \cite{markham2020measurement}). This restriction can be avoided at the cost of imposing additional constraints on alternate BLCMs (see \cref{rmk:alternative K} and \cref{prop:identifiability of K}). %

\subsection{Notations}\label{sec:notation}
For an integer $K$, write $[K] := \{1, \ldots, K\}$.
Recall that $V = (X, H)$ collects all observed and hidden variables. For any variable $v \in V$, we use standard graphical model notations and let $\pa(v), \ch_X(v)$ denote the parent variables and observed children (in $X$) of $v$, respectively. For example, in the setting of \cref{fig:intro}, we have $\pa(X_2) = \{H_1,H_2\}$ and $\ch_X(H_2) = \{X_2, X_4, X_6, X_{7}\}.$
Let $H_{(-k)} := H \setminus \{H_k\}$ denote the set of all hidden variables except for $H_k$. 
Write $\PP_{j, h} := \PP(X_j \mid H = h)$ as the conditional probability for the observation $X_j$, and let $\pi_h := \PP(H = h)$ be the probability of the latent configuration $h$. {Under categorical observations with $|\mathcal{X}_j|<\infty$, for any $S \subseteq [J]$, let $\PP(X_S \mid H)$ denote the conditional probability table whose rows and columns are indexed by $\prod_{j \in S} \mathcal{X}_j$ and $\{0,1\}^k$, respectively.}
Let $\pi = (\pi_h)_{h \in \{0,1\}^K}$ collect all mixture proportions, and let $\diag(\pi)$ be the $2^K \times 2^K$ diagonal matrix with entries of $\pi$ on the diagonal. %
For two binary vectors $h, h'$ of the same length, write $h \succeq h'$ when $h_k \ge h_k'$ for all $k$.

\section{Main Identifiability Results}\label{sec:main result}
\subsection{Sufficient Condition}
We present our main result on sufficient conditions that guarantee that the entire BLCM is identifiable. Interestingly, all our conditions are neatly stated in terms of the bipartite graph $\Gamma$. The key concept is the following ``double triangular'' structure in $\Gamma$.

\begin{definition}[Triangular $\Gamma$-matrix]\label{def:triangular}
    A $K \times K$ matrix $\Gamma_1$ with binary entries is ``triangular'' when it can be written as the following form \emph{after arbitrary row and column permutations}:
    \begin{equation}\label{eq:triangular gamma}
        \Gamma_1 = \begin{pmatrix}
        1 & 0 & 0 & \ldots & 0 \\
        * & 1 & 0 & \ldots & 0 \\
        * & * & 1 & \ldots & 0 \\
        \vdots & \vdots & \vdots & \ddots & \vdots \\
        * & * & * & \ldots & 1
    \end{pmatrix}.
    \end{equation}
    Here, each $* \in \{0,1\}$ denotes an arbitrary value.

    Additionally, we say that the $J \times K$ bipartite adjacency matrix $\Gamma$ is ``double triangular'' when it can be written (after possibly permuting the rows) as:
    \begin{align}\label{eq:double triangular}
        \Gamma = \begin{pmatrix}
        \Gamma_1 \\ \Gamma_2 \\ \Gamma_3
    \end{pmatrix},
    \end{align}
    where the $K \times K$ matrices $\Gamma_1, \Gamma_2$ are triangular (with potentially \emph{different column permutations}). %
     The remaining $(J - 2K) \times K$ matrix $\Gamma_3$ can be an arbitrary matrix, and may have zero rows.
\end{definition}

\begin{example}\label{ex:double triangular checking}
    Still consider the graphical structure with $K = 3$ latent variables and $J = 8$ observed variables from \cref{fig:intro}. Here, we verify that this structure is double triangular by explicitly defining the row permutation that gives \eqref{eq:double triangular}. For example, if we define $\Gamma_1, \Gamma_2, \Gamma_3$ to correspond to the observed variables $(X_1, X_2, X_3), (X_6, X_7, X_5), (X_4, X_8)$, we get:
    \begin{align*}
        \Gamma_1 &= \begin{pmatrix}
        1 & 0 & 0 \\
        1 & 1 & 0 \\
        1 & 0 & 1 \\
    \end{pmatrix}, \quad \Gamma_2 = \begin{pmatrix}
        0 & 1 & 0 \\
        0 & 1 & 1 \\
        1 & 0 & 1 \\
    \end{pmatrix} \stackrel{\text{permute columns}}{\sim} \begin{pmatrix}
        1 & 0 & 0 \\
        1 & 1 & 0 \\
        0 & 1 & 1 \end{pmatrix}, \quad \Gamma_3 = \begin{pmatrix}
        1 & 1 & 1 \\
        0 & 0 & 0 \\
    \end{pmatrix}.    
    \end{align*}
    It is immediate that $\Gamma_1$ is a $K \times K$ triangular matrix, and $\Gamma_2$ can also be written as a triangular matrix after permuting the columns.
\end{example}

\begin{remark}[Comparison with the pure children condition]
    The double triangular condition \eqref{eq:double triangular} generalizes the popular ``two pure children per latent'' condition that guarantees identifiability of various latent variable models \citep{xie2022identification,chen2022identification,lee2025deep}. Note that an observed variable $X_j$ is called a \emph{pure child} when it has exactly one latent parent $H_k$, or equivalently, the corresponding ($j$th) row in $\Gamma$ is a standard basis vector $e_k$. By setting all the arbitrary values indexed by $*$ in \eqref{eq:triangular gamma} equal to zero, we get $\Gamma_1 = \Gamma_2 = I_K$. This is equivalent to assuming two pure children per latent variable. Thus, the double triangular condition relaxes the two pure children condition by allowing a significantly denser bipartite graph $\Gamma$.

{To illustrate this we consider applications in educational testing, where there are often complex skills that are measured alongside another skill, as opposed to having observed pure children. Various real-world test designs and estimated graphical structures satisfy the double-triangular condition, but violates the usual two pure children condition. For example, the test design for TOEFL reading (see Table A2 in \cite{von2008general}) does not contain any pure children for the complex skill “Synthesize and organize”, but this design still satisfies our double triangular condition. The same phenomenon can be observed for the estimated bipartite graph $\hat{\Gamma}$ of a benchmark educational dataset as well (see \cref{tab:fraction} in Appendix \ref{sec:real data details}).}
\end{remark}

\begin{remark}[{Verifying the double triangular condition}]
    To verify the double triangular condition, one can take a greedy approach to sequentially search triangular sub-matrices. To elaborate, first find a standard basis row-vector $e_k$. For example, this is the bold row in the left term in the below equation. This becomes the first row in the triangular structure in \eqref{eq:triangular gamma} after row/column permutations. Let $\Gamma^{(1)}$ be the $(J-1) \times (K-1)$ matrix obtained from $\Gamma$ by removing the corresponding row and $k$th column (marked blue below).

    Next, we search for a standard basis row-vector in $\Gamma^{(1)}$, which is computationally as easy as the step above. For example, this is the bold row in the right term of the below equation. This procedure searches the second row in \eqref{eq:triangular gamma}. This is because after removing the first column in \eqref{eq:triangular gamma}, the second row in \eqref{eq:triangular gamma} (of length $K-1$) must also be a standard basis vector. By continuing this procedure inductively, we can identify a triangular structure. We can repeat the same procedure for the remaining rows to search for the second triangular structure. 
    $$\Gamma = \begin{pmatrix} 1 & {\color{blue} 0} & 1 \\ \textbf{\color{blue} 0} & \textbf{\color{blue} 1} & \textbf{\color{blue} 0} \\ 0 & {\color{blue} 1} & 1 \end{pmatrix} \xrightarrow{\text{remove 2nd row / col}} \Gamma^{(1)} = \begin{pmatrix} {1} & 1 \\ \textbf{0} & \textbf{1} \end{pmatrix}.$$

In practice, the double-triangular condition can be verified in multiple ways. For example in Figure \ref{fig:intro}, the row-indices for the triangular matrices $\Gamma_1,\Gamma_2$ can be chosen as (a) $\{1,2,3\}$, $\{5,6,7\}$, (b) $\{1,3,7\}$, $\{2,5,6\}$, or (c) $\{1,5,7\}$, $\{2,3,6\}$. Thus, while the success of the above procedure may require multiple restarts in worst-case settings, it can practically verify the double-triangular condition. %
\end{remark}

Now, we present our main identifiability results. We start with identifying the latent dimension $K$.
\begin{theorem}[Identifying $K$]\label{thm:identify K}
    For each $j$, let $\mathcal{X}_j$ be a nondegenerate separable metric space. Suppose that the true $\Gamma$-matrix is double triangular. Then, the number of latent variables $K$ is identifiable.
\end{theorem}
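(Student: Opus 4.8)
The plan is to prove identifiability of $K$ by a dimension-counting / rank argument on the matrix of observed probabilities, exploiting the double-triangular structure. The key observation is that under the measurement model and the Markov property, for categorical $X_j$'s the joint table $\PP(X_{[J]})$ factors through the $2^K$ latent configurations: stacking appropriately, $\PP(X_{[J]}) = \big(\bigodot_{j} \PP(X_j \mid H)\big)\,\diag(\pi)\,\mathbf{1}$, and more usefully, the $2K$ rows of $\Gamma$ selected as $\Gamma_1$ and $\Gamma_2$ give two blocks $X_{S_1}, X_{S_2}$ with $|S_1| = |S_2| = K$ such that $\PP(X_{S_1}, X_{S_2} \mid H)$ has a tensor/Khatri–Rao structure. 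First I would reduce to the categorical case: since each $\mathcal{X}_j$ is a nondegenerate separable metric space, I can choose a measurable binary (or finite) partition of each $\mathcal{X}_j$ that preserves the distinctness in Assumption~\ref{assmp:nondegenerate}(b) — this is exactly the kind of coarsening argument used in \cite{kivva2021learning} — so WLOG every $X_j$ is binary-valued and $\PP(X_{S} \mid H)$ is a genuine finite matrix. The distinctness of conditional laws guarantees we can do this without collapsing any two latent configurations that $X_j$ actually separates.

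Next I would show that, for a double-triangular $\Gamma$, the $2^K \times 2^K$ matrix $M := \PP(X_{S_1}, X_{S_2} \mid H)$ (rows indexed by joint outcomes of the $2K$ selected binary variables, columns by $h \in \{0,1\}^K$) has full column rank $2^K$. The triangular shape is what makes this work: order the $K$ variables in $S_1$ so that $X_{(i)}^{S_1}$ depends on $H_{\sigma(1)}, \ldots, H_{\sigma(i)}$; then the conditional table of the $i$-th variable, given that we have already ``resolved'' $H_{\sigma(1)}, \ldots, H_{\sigma(i-1)}$, still genuinely depends on the fresh coordinate $H_{\sigma(i)}$ (by nondegeneracy), so inductively one peels off one latent coordinate per variable and builds up a Kronecker-type product of $2\times 2$ rank-2 blocks. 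Doing this for $\Gamma_1$ resolves the latent configuration along one column-ordering, and $\Gamma_2$ — possibly with a different column permutation, which does not matter since we only need the joint column space — confirms it; the upshot is $\rk M = 2^K$. I would make the induction precise via a ``generalized'' triangular argument: even with the off-diagonal $*$ entries present, conditioning in the right order reduces each step to a $2\times 2$ full-rank block, because an extra already-determined parent only re-indexes rows, never kills the rank contribution of the new parent.

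Then the identifiability of $K$ follows by contradiction in the style of \cref{def:identifiability}: suppose an alternate BLCM with $\tilde K < K$ gives the same $\PP(X)$. Its joint table on the same $2K$ observed variables factors through only $2^{\tilde K} < 2^K$ latent configurations, hence has column rank at most $2^{\tilde K}$; but it must equal $M$, whose rank is $2^K$ — contradiction, since $\rk M \le 2^{\tilde K} < 2^K$. (One small point: in the alternate model the chosen $2K$ observed variables need not be ``triangular'', but that is irrelevant — we only need the rank \emph{upper} bound $2^{\tilde K}$ on the alternate side, which holds for \emph{any} latent model with $\tilde K$ binary latents, and the rank \emph{lower} bound $2^K$ on the true side, which is where double-triangularity is used.) I would then note $K = \log_2 \rk \PP(X_{S_1}, X_{S_2})$ is a function of the observed distribution alone, giving constructive identifiability.

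The main obstacle I anticipate is the rank lower bound $\rk M = 2^K$ in the presence of the arbitrary off-diagonal entries $*$: one must argue carefully that conditioning on already-resolved latent coordinates leaves each new $2\times 2$ block nonsingular. Nondegeneracy gives $\PP(X_j \mid \pa(X_j) = h) \neq \PP(X_j \mid \pa(X_j) = h')$ for $h \neq h'$, but to run the Kronecker induction I need the slightly stronger ``stepwise'' statement that for fixed values of the other parents the two laws corresponding to $H_{\sigma(i)} = 0$ vs.\ $1$ differ — which is \emph{not} literally Assumption~\ref{assmp:nondegenerate}(b) but follows from it: if they agreed for every fixed value of the remaining parents, then $H_{\sigma(i)}$ would be irrelevant to $X_j$, contradicting that it is a parent (faithfulness, Assumption~\ref{assmp:causal markov}). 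Handling this reduction cleanly — and propagating full rank through the tensor product over the $K$ triangular steps, with the second triangular block $\Gamma_2$ only needed to certify the column ordering is consistent — is the technical heart of the argument. The second triangular block $\Gamma_2$ may in fact be exploited differently: it could be used so that the two blocks $X_{S_1}$ and $X_{S_2}$ are conditionally independent given $H$ (they are, by the measurement model), making $M$ literally a Khatri–Rao product $\PP(X_{S_1}\mid H) \odot \PP(X_{S_2}\mid H)$, and Kruskal-type rank arguments then give $\rk M = 2^K$ as soon as each factor has Kruskal rank growing appropriately under the triangular shape — I would pick whichever of these two routes turns out cleaner once the induction is written out.
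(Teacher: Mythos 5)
Your overall strategy---discretize each $\mathcal{X}_j$ to a binary variable while preserving Assumption~\ref{assmp:nondegenerate}(b), use the triangular structure to force a full-rank conditional table, and then derive a contradiction from a rank upper bound of $2^{\tilde K}$ for any alternate model---is exactly the paper's route; the full-rank statement is the paper's \cref{lem:full rank}, proved by induction on $K$ via a block-determinant computation on the transform $\PP(X \succeq x \mid H = h)$. However, as written your rank comparison is carried out on the wrong object, and this obscures why \emph{two} triangular blocks are needed.

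You define $M := \PP(X_{S_1}, X_{S_2} \mid H)$ (a $2^{2K} \times 2^K$ conditional table), argue it has full column rank, and then say the alternate model's table ``must equal $M$'' yet has rank at most $2^{\tilde K}$. That step does not parse: $M$ is not observable, and the alternate model's conditional table is $2^{2K} \times 2^{\tilde K}$, so the two cannot be equated; all the two models share is the marginal $\PP(X_{S_1}, X_{S_2}) = M\pi = \tilde M \tilde\pi$, a single probability vector carrying no rank information by itself. The correct observable certificate is the \emph{reshaped} $2^{|S_1|} \times 2^{|S_2|}$ matrix with entries $\PP(X_{S_1} = x_1, X_{S_2} = x_2)$, which factors as $\PP(X_{S_1} \mid H)\,\diag(\pi)\,\PP(X_{S_2} \mid H)^\top$; for its rank to equal $2^K$ you need full column rank of \emph{each} factor $\PP(X_{S_a} \mid H)$ separately. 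Full column rank of the Khatri--Rao product $M$ is strictly weaker and insufficient: if one factor had two equal columns, $M$ could still have independent columns while the reshaped observable matrix drops rank. This is also why your reading of $\Gamma_2$ as merely ``certifying that the column ordering is consistent'' is off---column permutations are irrelevant to rank, and the second triangular block is there precisely so that the second factor is also full rank. Your closing remarks ($K = \log_2 \rk \PP(X_{S_1}, X_{S_2})$ and the Khatri--Rao formulation with per-factor rank conditions) do point at the correct argument, so the fix is to commit to that version throughout. Two minor points: the ``stepwise'' nondegeneracy you worry about is literally Assumption~\ref{assmp:nondegenerate}(b) applied to two configurations differing in one coordinate, so no appeal to faithfulness is needed; and the per-block full-rank induction, while intuitively a Kronecker peeling, needs the care the paper puts into \cref{lem:full rank} to handle the off-diagonal $*$ entries.
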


The proof argument is based on the following key property, which shows that a triangular matrix $\Gamma_1$ forces the corresponding conditional probability table to have full rank.
We prove \cref{lem:full rank} in Appendix \ref{sec:pf of lemmas}, and \cref{thm:identify K} in \cref{sec:proof}.
\begin{lemma}\label{lem:full rank}
    Consider a BLCM with binary responses, that is $\mcx_j = \{0,1\}$ for all $j$. %
    Suppose that the $\Gamma$-matrix corresponding to $X_1, \ldots, X_K$ is lower-triangular and can be written as in \eqref{eq:triangular gamma}. 
    Then, the $2^K \times 2^K$ conditional probability table $\PP(X_{1:K} \mid H)$ %
    has full column-rank.
\end{lemma}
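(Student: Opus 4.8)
The plan is to combine the conditional-independence structure of the measurement model with an induction on $K$.

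First I would rewrite the table columnwise. By \cref{assmp:measurement model} and the Markov factorization in \cref{assmp:causal markov}, the observed coordinates are mutually independent given $H$, so for each latent configuration $h \in \{0,1\}^K$ the corresponding column of $\PP(X_{1:K}\mid H)$ is the Kronecker product $\mathbf v_1(h)\otimes\cdots\otimes\mathbf v_K(h)$, where $\mathbf v_j(h):=\big(\PP(X_j=0\mid H=h),\ \PP(X_j=1\mid H=h)\big)^\top\in\mathbb{R}^2$. Since $\mathcal X_j=\{0,1\}$ the table is $2^K\times 2^K$, so it suffices to show its $2^K$ columns are linearly independent. The triangular form \eqref{eq:triangular gamma} supplies the two facts I will lean on: (i) $\pa(X_j)\subseteq\{H_1,\dots,H_j\}$, so $\mathbf v_j(h)$ depends on $h$ only through $h_{1:j}$; and (ii) the unit diagonal means $H_j\in\pa(X_j)$, so by \cref{assmp:nondegenerate}(b), $\mathbf v_j(h_{1:j-1},0)\ne\mathbf v_j(h_{1:j-1},1)$ for every value of $h_{1:j-1}$.

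Next I would run the induction on $K$. The base case $K=1$ is just that $\mathbf v_1(0)$ and $\mathbf v_1(1)$ are distinct probability vectors in $\mathbb{R}^2$, hence linearly independent (two distinct points of the $1$-simplex cannot be proportional). For the inductive step, suppose $\sum_{h}c_h\,\mathbf v_1(h)\otimes\cdots\otimes\mathbf v_K(h)=0$. Writing $h=(h',h_K)$ with $h'\in\{0,1\}^{K-1}$ and using fact (i) to pull the $h_K$-dependence into the last tensor slot, the left side becomes $\sum_{h'}\mathbf u(h')\otimes\mathbf w(h')$, where $\mathbf u(h'):=\mathbf v_1(h')\otimes\cdots\otimes\mathbf v_{K-1}(h')$ and $\mathbf w(h'):=c_{(h',0)}\mathbf v_K(h',0)+c_{(h',1)}\mathbf v_K(h',1)$. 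The vectors $\{\mathbf u(h')\}_{h'}$ are precisely the columns of the analogous $2^{K-1}\times2^{K-1}$ table for the subsystem on $(X_1,\dots,X_{K-1})$ and $(H_1,\dots,H_{K-1})$, whose $\Gamma$-block is again of the form \eqref{eq:triangular gamma}; hence they are linearly independent by the inductive hypothesis. Pairing $\sum_{h'}\mathbf u(h')\otimes\mathbf w(h')=0$ with an arbitrary linear functional on the last factor and invoking this independence forces $\mathbf w(h')=0$ for every $h'$, i.e. $c_{(h',0)}\mathbf v_K(h',0)+c_{(h',1)}\mathbf v_K(h',1)=0$. By fact (ii), $\mathbf v_K(h',0)$ and $\mathbf v_K(h',1)$ are distinct probability vectors in $\mathbb{R}^2$ and therefore linearly independent, so $c_{(h',0)}=c_{(h',1)}=0$; as $h'$ was arbitrary, all $c_h$ vanish and the columns are independent.

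I expect the only delicate point to be the bookkeeping at the inductive step: one must check that passing to the first $K-1$ latents (marginalizing out $H_K$, dropping $X_K$) preserves exactly the ingredients the argument uses — the conditional-independence factorization and the nondegeneracy in \cref{assmp:nondegenerate}(b). This holds because the conditional laws $\PP(X_j\mid\pa(X_j))$ for $j<K$ do not involve $H_K$ and are unchanged, and the top-left $(K-1)\times(K-1)$ block of a matrix of the form \eqref{eq:triangular gamma} is again of that form; faithfulness plays no role. The only other thing worth stating explicitly is the elementary fact that two distinct probability vectors are automatically linearly independent, which is what converts the mere ``distinctness'' granted by \cref{assmp:nondegenerate}(b) into the linear independence the rank claim demands.
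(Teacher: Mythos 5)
Your proof is correct, and while it shares the paper's overall skeleton (induction on $K$, peeling off the last latent variable, using that $X_1,\dots,X_{K-1}$ do not depend on $H_K$ while $X_K$ separates $H_K=0$ from $H_K=1$), the mechanics are genuinely different. The paper first passes to the survival-function table $T(x,h)=\PP(X\succeq x\mid H=h)$ via elementary row operations, writes it as a $2\times2$ block matrix, observes $T_{00}=T_{01}$, column-reduces, and then proves the key algebraic identity that $T_{11}-T_{10}$ equals $T_{00}$ with each column rescaled by $\eta_{h_{(-1)}}=\theta_{K,(h_{(-1)},1)}-\theta_{K,(h_{(-1)},0)}\neq0$, so that $\det(\bar T)=\det(T_{00})^2\prod_{h_{(-1)}}\eta_{h_{(-1)}}\neq0$. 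You instead exploit the Kronecker-product structure of the columns directly: grouping a putative vanishing linear combination by $h_{1:K-1}$, contracting the last tensor factor against arbitrary functionals, and invoking the inductive linear independence to reduce to the two-dimensional fact that distinct probability vectors $(p,1-p)\neq(q,1-q)$ are linearly independent. Your route avoids the determinant identity and the explicit $\eta$ bookkeeping, is arguably cleaner, and adapts more readily to observed variables with more than two categories (where the paper's $\PP(X\succeq x)$ reshaping and $2\times2$ block determinant would need reworking); the paper's route has the minor advantage of producing an explicit nonzero determinant formula. Your closing remarks correctly identify and dispatch the one delicate bookkeeping point, namely that the $(K-1)$-variable subsystem inherits the triangular structure, the conditional-independence factorization, and the nondegeneracy needed for the inductive hypothesis.
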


Here, for simplicity, we prove \cref{thm:identify K} assuming binary responses. The full proof under general responses builds upon discretizing the sample space $\mathcal{X}_j$, and is postponed to \cref{sec:proof}.

\begin{proof}[Simplified proof of \cref{thm:identify K}]
As the true model is double triangular, let $S_1$ and $S_2$ denote two disjoint subsets of $[J]$ that index the rows of $\Gamma_1$ and $\Gamma_2$, respectively. %
    Under the modeling assumptions and recalling that $\pi := (\pi_h)_{h \in \{0,1\}^K}$ for $\pi_h = \PP(H=h)$, we can write
    \begin{align}\label{eq:pmf matrix decomposition}
        \PP(X_{S_1} , X_{S_2}) & = \underbrace{{\PP}(X_{S_1} \mid {H})}_{2^{|S_1| \times 2^{K}}} \times \underbrace{\diag(\pi)}_{2^K \times 2^K} \times \underbrace{{\PP}(X_{S_2} \mid {H})^\top}_{2^{K} \times 2^{|S_2|}} \\
        &= \sum_{h \in \{0,1\}^K} \pi_h \PP(X_{S_1} \mid H=h) \circ \PP(X_{S_2} \mid H=h). \notag
    \end{align}
    Here, $\circ$ denotes the outer product of vectors.
    Now, \cref{lem:full rank} implies that ${\PP}(X_{S_1} \mid {H}), {\PP}(X_{S_2} \mid {H})$ have full column rank of $2^K$. Additionally, under part (a) of the nondegeneracy \cref{assmp:nondegenerate}, the diagonal matrix $\diag(\pi)$ also has full rank $2^K$. Thus, $\PP(X_{S_1}, X_{S_2})$ has rank $2^K$.

    Now, consider an alternative model with $L < K$ latent variables (indexed by $\tilde{G}, \tilde{\PP}$) that defines an identical marginal distribution for the observed variables $\tilde{\PP}(X) = \PP(X)$. Then, writing $\tilde{\pi}_h := \tilde{\PP}(\tilde{H}=h)$, we must have 
    \begin{align*}
        \PP(&X_{S_1} , X_{S_2}) = \tilde{\PP}(X_{S_1}, X_{S_2}) = \underbrace{\tilde{\PP}(X_{S_1} \mid \tilde{H})}_{2^{|S_1| \times 2^{L}}} \times \underbrace{\diag(\tilde{\pi})}_{2^L \times 2^L} \times \underbrace{\tilde{\PP}(X_{S_2} \mid \tilde{H})^\top}_{2^{L} \times 2^{|S_2|}},
    \end{align*}
    which gives $\rk(\PP(X_{S_1}, X_{S_2})) \le 2^L < 2^K$. Thus, we have a contradiction, and such an alternative model does not exist. 
\end{proof}

\begin{remark}[Allowing alternative models with more latent variables]\label{rmk:alternative K}
    The restriction on the number of latent variables ``$\tilde{K} < K$'' in property (a) of \cref{def:identifiability} can be relaxed, at the cost of considering alternative BLCMs that are double triangular. The proof argument is similar to \cref{thm:identify K}, see Proposition \ref{prop:identifiability of K} in Appendix \ref{sec:proof} for a formal statement.
\end{remark}

Now, we move on to identifying the remaining model components. We show that the bipartite graph $\Gamma$ of a BLCM with given $K$ is identifiable when $\Gamma$ is double triangular. To identify the latent DAG $\Lambda$ as well as all conditional distributions $\PP_{j,h}$ and latent proportions $\pi$, we additionally require the following subset condition. This condition was first proposed in \cite{pearl1992statistical} for latent causal graphical models, and was also assumed in several recent works as well \citep{evans2016graphs, kivva2021learning}.

\begin{definition}[Subset condition]\label{def:subset condition}
    We say that the $J \times K$ matrix $\Gamma$ satisfies the subset condition if for different latent variables $H_k \neq H_l$, $\ch_X(H_k)$ is not a subset of $\ch_X(H_l)$ and vice versa; or equivalently, there exists no partial order between any two columns of $\Gamma$ indexed by $k \neq l$.
\end{definition}
The subset condition is not directly implied by the double triangular assumption, as the columns of $\Gamma$ may exhibit a partial order when all * values in \eqref{eq:triangular gamma} are set to 1, and both $\Gamma_1, \Gamma_2$ in \eqref{eq:double triangular} exhibit an identical triangular structure. However, the subset condition is satisfied when there exist a pure child for each latent variable, so this condition is weaker than assuming pure children.

\begin{theorem}[Identifiability of model components]\label{thm:identifiability}
    Consider a BLCM with a known latent dimension $K$.
    Assuming that (i) the true $\Gamma$-matrix is double triangular, and that (ii) all columns in $\Gamma_3$ of \cref{def:triangular} are not empty, $\Gamma$ is identifiable. Additionally, when the true $\Gamma$-matrix satisfies the subset condition (in addition to (i) and (ii)), the latent graph $\Lambda$ and probability distribution $\PP$ are identifiable.
\end{theorem}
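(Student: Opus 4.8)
The plan is to break the argument into three stages, mirroring the three groups of model components. First I would identify the bipartite graph $\Gamma$ itself. The idea is to use \cref{lem:full rank} again, but now run it for all size-$K$ subsets of rows: a set $S \subseteq [J]$ with $|S| = K$ indexes a triangular submatrix if and only if $\rk(\PP(X_S, X_{S'})) = 2^K$ for a companion triangular block $S'$, and more refined rank computations on $\PP(X_{S \cup \{j\}})$ should reveal, row by row, which entries of $\Gamma$ are $1$. Concretely, for a candidate row $j$ and a latent coordinate $k$, whether $\gamma_{j,k} = 1$ can be detected by checking whether $\PP(X_j \mid H)$ varies as the $k$-th latent coordinate is flipped — but since $H$ is not yet observed, this has to be phrased through observable rank drops of matrices of the form $\PP(X_j, X_{S_1}, X_{S_2})$ after conditioning/slicing. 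Assumption \ref{assmp:nondegenerate}(b) (distinct conditionals for distinct parent configurations) is what makes these rank criteria sharp. The requirement (ii) that every column of $\Gamma_3$ is nonempty is what guarantees each latent variable has a third child, which (together with the two triangular blocks) gives the $\ge 3$ children needed to pin down the graph unambiguously and to separate $\Gamma$ from $\Lambda$-induced dependence.

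Second, given $\Gamma$, I would recover the joint latent law $\pi = \PP(H)$ and the conditional tables $\PP(X_j \mid H)$ via a tensor/matrix decomposition argument. Using a triangular block $S_1$ (so $\PP(X_{S_1}\mid H)$ has full column rank $2^K$ by \cref{lem:full rank}) and a second block $S_2$, plus a third "pointer" variable, the decomposition in \eqref{eq:pmf matrix decomposition} becomes a Kruskal-type three-way decomposition whose factors are unique up to simultaneous permutation of the $2^K$ latent configurations — this is exactly where the label-permutation and sign-flipping ambiguities of \cref{def:identifiability} enter. The subset condition (\cref{def:subset condition}) is needed here to rule out the degenerate case where two triangular blocks share the same column order and the columns of $\Gamma$ carry a partial order, which would otherwise let a relabeling of latent configurations masquerade as a genuinely different model; with the subset condition, the column pattern of $\Gamma$ breaks all symmetries of $\{0,1\}^K$ except coordinatewise permutation and sign flips. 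Once the $2^K$-way decomposition is unique, reading off the marginal structure recovers $\pi$ and each $\PP(X_j\mid H)$ up to those trivial ambiguities.

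Third, having recovered $\PP(H)$, the latent DAG $\Lambda$ is identified up to Markov equivalence by the standard constraint-based argument (Assumption \ref{assmp:causal markov}: the faithful distribution $\PP(H)$ determines the CPDAG), so this last step is essentially a citation to \citep{spirtes2001causation, pearl2014probabilistic} applied to the now-known $\PP(H)$.

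The main obstacle I expect is the first stage: translating "$\gamma_{j,k}=1$" into an observable rank condition without yet having access to $H$, and doing so uniformly over all rows $j$ — especially distinguishing, for a row in $\Gamma_3$, a genuine edge $H_k \to X_j$ from apparent dependence of $X_j$ on $H_k$ transmitted through other latent variables via $\Lambda$. Resolving this cleanly is where requirement (ii), the subset condition, and \cref{lem:full rank} all have to be combined carefully; I would isolate it as a lemma stating that for any $j$, the set $\pa(X_j)$ equals the set of latent coordinates $k$ for which a specific family of $\PP$-submatrices (built from $X_j$ together with the two triangular blocks) drops rank when the $k$-th coordinate is "marginalized," and prove that lemma first.
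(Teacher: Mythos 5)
Your proposal assembles the right ingredients (the full-rank property of the two triangular blocks from \cref{lem:full rank}, Kruskal's theorem, the subset condition for resolving symmetries of $\{0,1\}^K$, and faithfulness for $\Lambda$), but it runs them in the wrong order, and the step you yourself flag as "the main obstacle" --- identifying $\Gamma$ directly from observable rank drops of marginals like $\PP(X_j, X_{S_1}, X_{S_2})$ before any decomposition --- is precisely the step that the paper does \emph{not} attempt and that you do not resolve. The paper's proof inverts your stages 1 and 2: it first applies Kruskal's theorem (\cref{lem:kruskal}) to the three-way tensor \eqref{eq:tensor decomposition}, using \cref{lem:full rank} for the two triangular factors and condition (ii) only to guarantee Kruskal rank $\ge 2$ of the third factor. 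This recovers every conditional-probability matrix $M_j = \PP(X_j \in \cdot \mid H)$ up to one common, completely unknown permutation $\sigma_\Pi$ of the $2^K$ latent configurations. Only \emph{then} is $\Gamma$ read off, from the pattern of equal columns within each recovered $M_j$. The missing idea in your write-up is the combinatorial device that makes this possible: the rows $j_1,\ldots,j_K$ realizing the second triangular block induce nested partitions $\mathcal{Q}_k = \cap_{l\le k}\mathcal{P}_l$ of the $2^K$ recovered components (clustering components with identical columns of $M_{j_k}$), and an induction on $k$ shows these partitions must have cardinality $2^k$ and hence force $\sigma_\Pi$ to respect a coordinate structure up to a label permutation $\tau$ (the paper's \cref{lem:identify gamma}); a backwards induction on $k$ then recovers each row of $\Gamma$ by counting distinct columns within cells of $\mathcal{Q}_{k-1}$. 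Without something like this, your stage 1 has no proof, and your stage 2's claim that "reading off the marginal structure recovers $\pi$" skips the same alignment problem.

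A secondary misplacement: you invoke the subset condition to make the Kruskal decomposition unique, but uniqueness of the decomposition needs no such thing --- and indeed the theorem asserts that $\Gamma$ alone is identifiable \emph{without} the subset condition. The subset condition enters only afterwards, to determine for each coordinate $k$ which half of the $2^K$ recovered components corresponds to $h_k=0$ versus $h_k=1$ (by merging clusters across all children of $H_k$ and using that no other $H_l$ is measured by \emph{all} of them); this is what turns the abstract $2^K$-way labeling into a genuine distribution $\pi$ on $\{0,1\}^K$, from which $\Lambda$ follows by faithfulness as you say. Finally, the theorem covers arbitrary separable metric response spaces, which requires a discretization/separating-class argument (the paper's Step 0 and the use of \cref{lem:separating class}) that your proposal omits entirely.
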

\begin{proof}[Proof sketch of \cref{thm:identifiability}]
For simplicity, suppose all sample spaces $\mathcal{X}_j$ are binary. Similar to earlier, let $S_1, S_2, S_3$ be the partition of $[J]$ such that each $S_a$ index the rows of $\Gamma_a$.
Our key idea is to show that the following tensor CP decomposition (a three-way generalization of the matrix decomposition \eqref{eq:pmf matrix decomposition}) of the full marginal distribution $\PP(X)$ is unique:
\begin{align}\label{eq:tensor decomposition}
    \PP(X_{S_1}, X_{S_2}, X_{S_3}) &= \Big\llbracket \underbrace{\PP(X_{S_1} \mid H)}_{2^{|S_1|} \times 2^K} \times \underbrace{\diag(\pi)}_{2^K \times 2^K},~ \underbrace{\PP(X_{S_2} \mid H)}_{2^{|S_2|} \times 2^K}, ~\underbrace{\PP(X_{S_3} \mid H)}_{2^{|S_3|} \times 2^K}\Big\rrbracket  \\
    =&\sum_{h \in \{0,1\}^K} \pi_h \PP(X_{S_1} \mid H=h) \circ \PP(X_{S_2} \mid H=h \circ \PP(X_{S_3} \mid H=h). \notag
\end{align}
See \cref{lem:kruskal} for additional details and notations for tensor decompositions.

We separate the proof into three steps as follows.

\textit{Step 1: Apply Kruskal's theorem to identify the components in \eqref{eq:tensor decomposition}.}
By \cref{lem:full rank}, the first two components of \eqref{eq:tensor decomposition} have full column rank. Also, 
\cref{assmp:nondegenerate} implies that all columns of the third component are distinct. This allows us to apply Kruskal's theorem for the uniqueness of tensor decompositions \citep[][also see \cref{lem:kruskal}]{kruskal1977three}. Hence, we can identify the three components of \eqref{eq:tensor decomposition} up to a permutation of $2^K$ components.

\textit{Step 2: Identify the bipartite graph $\Gamma$.}
We identify the bipartite graph $\Gamma$ up to a permutation of its $K$ columns. The main idea is to partition the $2^K$ tensor component indices based on the corresponding columns of $\PP(X_{S_2} \mid H)$ in \eqref{eq:tensor decomposition}, and resolve the label permutation based on the triangular structure of $\Gamma_2$. Then, we identify each row of $\Gamma$ via an induction on $k=1,\ldots,K$.

\textit{Step 3: Identify $\pi, \PP_{j,h},$ and the latent DAG $\Lambda$.}
Using the assumption that $\Gamma$ satisfies the subset condition, we map each tensor component in \eqref{eq:tensor decomposition} to a binary vector representation, up to label permutation and sign flipping. Given this, $\pi$ and $\PP_{j,h}$ are identified by marginalizing out each component of \eqref{eq:tensor decomposition}. Finally, using faithfulness (\cref{assmp:causal markov}), the DAG $\Lambda$ can be identified up to its Markov equivalence class from $\pi$.
\end{proof}

\begin{remark}[Extension to general responses]
    {The above proof argument generalizes to arbitrary observed variables that take values in any nondegenerate metric spaces (as stated at the beginning of \cref{sec:setup}) by discretizing the sample space. By choosing a wise discretization, the discretized observations still satisfy Assumption \ref{assmp:nondegenerate}(b), and a similar argument using Kruskal's theorem can be applied to identify the DAGs and the latent proportion vector $\pi$. Identifying the conditional distribution itself is trickier, for which we use ideas from measure theory (separating classes) to identify the full conditional distributions (e.g., continuous densities) from the discretized p.m.f.}
\end{remark}

Next, we revisit our running example in \cref{fig:intro} and illustrate that the conditions in \cref{thm:identifiability} are easy to check.
\begin{example}
    Recall our running example from \cref{fig:intro} and \cref{ex:double triangular checking}. We have already verified that $\Gamma$ is double triangular, and the remaining matrix $\Gamma_3 = \begin{pmatrix}
        1 & 1 & 1 \\ 0 & 0 & 0
    \end{pmatrix}$ has no empty columns. Also, $\Gamma$ satisfies the subset condition since there is no partial order between any two columns. 
    Thus, the BLCM parameters with a graphical structure as in \cref{fig:intro} is identifiable by \cref{thm:identifiability}.
\end{example}

\begin{remark}[Discussion on the identifiability notion]\label{rmk:identifiability}
    While \cref{thm:identifiability} imposes assumptions on the true model components, it establishes identifiability against \emph{arbitrary alternative models} $(\tilde{G}, \tilde{\PP})$ that only satisfy the minimal BLCM requirements in \cref{def:blcm}. In other words, we do not establish identifiability by assuming that the alternative model also satisfies the double triangular condition. 
    {Note that this is more general than the notion of \emph{recoverability} often used in causal discovery.}
\end{remark}

\begin{remark}[Relaxing the subset condition]
    One may assume alternative conditions on behalf of the subset condition in \cref{thm:identifiability}. One possibility is to assume \emph{monotonicity} of the latent variable $H$, by supposing 
    \begin{align}\label{eq:monotonocity}
        \PP_{j,h}(C_j) > \PP_{j,h'}(C_j), ~~ \forall h \neq h' ~~ \text{s.t} ~~ h_{\pa(X_j)} \succ h'_{\pa(X_j)},
    \end{align} for some fixed baseline set $C_j \subsetneq \mathcal{X}_j$. For example, one may take $C_j = \{1\}$ for binary responses, and $C_j = (0, \infty)$ for continuous responses. See the following proposition for a formal statement. While assuming monotonicity does restrict the parameter space, it additionally resolves the sign-flipping ambiguity as well as enhancing interpretability.
\end{remark}

\begin{proposition}[Modification of \cref{thm:identifiability} under monotonicity]\label{prop:identifiability monotone}
    Consider a BLCM with a known latent dimension $K$ that satisfies the above monotonicity condition \eqref{eq:monotonocity}.
    Then, the BLCM is identifiable under conditions (i), (ii) in \cref{thm:identifiability}.
\end{proposition}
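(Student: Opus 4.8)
The plan is to reuse the first two steps of the proof of \cref{thm:identifiability} verbatim and to replace only its third step, where the subset condition enters, by an argument based on monotonicity. Since conditions (i) and (ii) are in force, Steps 1 and 2 apply without change: neither invokes the subset condition, so (after discretizing the sample spaces when the responses are not binary, exactly as in the extension remark) Kruskal's theorem recovers the $2^K$ rank-one terms of the CP decomposition \eqref{eq:tensor decomposition} --- each consisting of a scalar weight and three probability-vector factors --- and also pins down the bipartite graph $\Gamma$, in both cases up to the unavoidable permutation of the $K$ latent labels. Since $\Gamma$ is now known, we relabel the latent variables so that the triangular block $\Gamma_1$ takes the lower-triangular form \eqref{eq:triangular gamma}, and write $X_{j_k}$ for the observed variable indexing its $k$-th row, so that $\pa(X_{j_k}) \subseteq \{H_1,\ldots,H_k\}$ with $H_k \in \pa(X_{j_k})$. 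It remains to (a) match each of the $2^K$ tensor terms to a latent configuration $h \in \{0,1\}^K$, and then (b) read off $\pi$, the conditional tables $\PP(X_j \mid H)$, and the Markov equivalence class of $\Lambda$.

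For part (a) I would run an induction on $k = 1,\ldots,K$ that uses the triangular ordering to guarantee that all confounders are already resolved. Suppose the coordinates $h(c)_1,\ldots,h(c)_{k-1}$ of the label $h(c)$ attached to every term $c$ have been determined. Put $P := \pa(X_{j_k}) \setminus \{H_k\} \subseteq \{H_1,\ldots,H_{k-1}\}$, which is therefore known for each $c$. Extracting the first factor of term $c$ (normalizing it to a probability vector, which also isolates the scalar $\pi_{h(c)}$) and marginalizing down to $X_{j_k}$ yields $\PP(X_{j_k} \mid H = h(c))$, a quantity depending on $h(c)$ only through $(h(c)_k, h(c)_P)$. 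Partition the terms by their known pattern $h(c)_P$; within each block the number $\PP_{j_k, h(c)}(C_{j_k})$ takes at most two values, and since the two parent configurations in question differ only in the $k$-th coordinate --- so one strictly dominates the other in the partial order on $\pa(X_{j_k})$ --- the monotonicity condition \eqref{eq:monotonocity} forces these two values to be distinct and strictly ordered. Declaring $h(c)_k = 1$ for the terms attaining the larger value and $h(c)_k = 0$ for the rest advances the induction (the base case $k=1$ has $P=\emptyset$ and a single block). After $k = K$ the map $c \mapsto h(c)$ is fully determined; note that this also fixes the meaning of each $H_k = 1$ as the value maximizing the baseline probability, so, unlike in \cref{thm:identifiability}, no sign-flipping ambiguity remains.

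Part (b) is then immediate. Because all three factor matrices of every term are normalized probability vectors, the scalar weight of the term matched to $h$ equals $\pi_h$, so $\pi = \PP(H)$ is identified; the three factors are exactly $\PP(X_{S_1} \mid H = h)$, $\PP(X_{S_2} \mid H = h)$, $\PP(X_{S_3} \mid H = h)$, and since $S_1, S_2, S_3$ partition $[J]$ this recovers every conditional table $\PP(X_j \mid H)$. Finally, applying faithfulness (\cref{assmp:causal markov}) to the now-known latent distribution $\PP(H) = \pi$ identifies $\Lambda$ up to Markov equivalence, as in \cref{thm:identifiability}. As in \cref{rmk:identifiability}, the alternative model is not assumed monotone: Kruskal uniqueness is driven entirely by the true model, so the recovered $\pi$, $\PP(X_j \mid H)$, $\Gamma$, and Markov class must coincide with those of any alternative BLCM inducing the same $\PP(X)$.

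I expect the inductive matching in part (a) to be the main obstacle. Because no pure children are assumed, each witness variable $X_{j_k}$ is confounded by $H_1,\ldots,H_{k-1}$, and the argument hinges on verifying that the triangular ordering makes exactly this confounding already resolved when $H_k$ is processed, and that \eqref{eq:monotonocity} is invoked only between configurations differing in the single coordinate $k$, so that it yields a genuine strict two-way split within each block. A secondary, more routine obstacle is checking --- in the non-binary case --- that the discretization used to import Kruskal's theorem can be chosen compatibly with the monotone set $C_j \subsetneq \mathcal{X}_j$, so that \eqref{eq:monotonocity} survives discretization.
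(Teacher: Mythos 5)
Your proposal is correct and follows essentially the same route as the paper: keep Steps 1--2 of the proof of \cref{thm:identifiability}, then replace Step 3 by an induction along the triangular ordering, using \eqref{eq:monotonocity} on pairs of configurations differing only in coordinate $k$ to split each already-resolved block into $h_k=0$ versus $h_k=1$ (which also removes the sign-flip). The paper runs this induction on the rows $j_1,\ldots,j_K$ of $\Gamma_2$ using the partitions $\mathcal{Q}_k$ from Step 2, whereas you use $\Gamma_1$ and partition by $h(c)_P$; this is an immaterial variation, and your flagged discretization concern is handled in the paper simply by taking $C_j = C_{1,j} \in \mathcal{C}_j$.
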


\subsection{Necessary Condition}
Next, we establish necessary conditions for identifiability. To summarize, we show that each column of $\Gamma$ must not be too sparse (\cref{thm:necessary}) as well as not too dense (\cref{thm:subset necessary}).

When all responses are categorical, trivial requirements for identifiability follow from comparing the number of equations and parameters. 
The following theorem strengthens this by showing that each latent variable must be measured by at least three observed variables. This is a fundamental requirement for categorical observed and latent variables, and in fact we prove \cref{thm:necessary} under the general assumption of categorical latent variables $H$ (with known cardinality). 
This strengthens the usual requirement for two measurements per latent \citep[e.g. see Lemma 3 in][]{evans2016graphs}. 

\begin{theorem}[Three measurements per latent]\label{thm:necessary}
    For a BLCM with known $K$ and categorical observed variables (that is, $|\mathcal{X}_j| < \infty$ for all $j$) to be identifiable, it is necessary for each latent variable $H_k$ to have at least \emph{three} observed children. 
\end{theorem}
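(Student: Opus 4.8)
The plan is to prove the contrapositive: assuming some latent variable $H_k$ has at most two observed children, I will construct a nontrivial alternate BLCM with the same latent dimension $K$ that induces the identical observed distribution $\PP(X)$, thereby violating identifiability in the sense of \cref{def:identifiability}. Since the argument is for categorical observed variables, everything reduces to manipulating conditional probability tables. First I would isolate $H_k$: write $\PP(X) = \sum_{h} \pi_h \prod_j \PP_{j,h}$ and group terms by the value of $H_k$. Because $H_k$ has at most two observed children, say $\ch_X(H_k) \subseteq \{X_a, X_b\}$, the dependence of the observed distribution on $H_k$ is entirely funneled through the two conditional tables $\PP(X_a \mid \pa(X_a))$ and $\PP(X_b \mid \pa(X_b))$ (and through how $H_k$ enters $\PP(H)$ as a parent of other latents). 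The key counting fact is that with only two categorical children, the "slice" of the model attached to $H_k$ does not have enough free structure to be pinned down — there is a one-parameter (or higher-dimensional) family of $(\pi, \PP_{a,\cdot}, \PP_{b,\cdot})$ giving the same marginals on $(X_a, X_b)$ jointly with everything else.

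Concretely, I would fix all latents other than $H_k$ and all conditional tables not involving $X_a$ or $X_b$, and view the remaining object as a mixture in $H_k$: for each configuration $g$ of $H_{(-k)}$, the contribution is $\pi_{g,0}\,\PP_{a,(g,0)}\otimes\PP_{b,(g,0)} + \pi_{g,1}\,\PP_{a,(g,1)}\otimes\PP_{b,(g,1)}$ as a matrix over $\mathcal{X}_a\times\mathcal{X}_b$ (suitably accounting for which of $X_a,X_b$ actually has $H_k$ as a parent, and for the marginal effect of $H_k$ on downstream latents — which can be absorbed by also reparametrizing the relevant $\PP(H_\ell\mid\pa(H_\ell))$ tables). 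A rank-two matrix has a nontrivial continuum of decompositions into a sum of two rank-one terms with prescribed column/row sums once we only require the entries to remain valid probabilities; perturbing within this continuum yields $\tilde\pi, \tilde\PP_{a}, \tilde\PP_{b}$ that are not related to the originals by label permutation or sign flip, while $\tilde\PP(X)$ is unchanged. I would then check that the perturbed model still satisfies Assumptions~\ref{assmp:measurement model}--\ref{assmp:nondegenerate} (nondegeneracy of the $X_j$-conditionals, full support of $\tilde\PP(H)$), which holds for small enough perturbations, so the alternate model is a genuine BLCM.

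The main obstacle is handling the case where $X_a$ or $X_b$ has $H_k$ as one of \emph{several} latent parents, and the case where $H_k$ is a parent of other latent variables: then $H_k$'s influence is not simply an isolated rank-two block, and one must verify that reparametrizing only the tables local to $H_k$ (the $X_a, X_b$ conditionals and the $H_\ell\mid\pa$ tables for $\ell$ with $H_k\in\pa(H_\ell)$) can still absorb the perturbation without touching any other part of the model. I expect this to go through because, conditioning on $H_{(-k)}$, the joint law of everything downstream factors through the binary variable $H_k$, so the whole problem localizes to: given two probability vectors indexed by $H_k\in\{0,1\}$ over the product space (all of $X_a, X_b, H_{(-k)}$-children-of-$H_k$, and children latents), together with the mixing weights, reconstruct them — and two points plus weights never determine a line segment's endpoints. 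A secondary technical point is ensuring the perturbation direction is nonzero and not a mere relabeling; this follows by a dimension count comparing the number of free parameters in the $H_k$-local tables (which grows with $|\mathcal{X}_a|,|\mathcal{X}_b|$ but is at least, say, $2(|\mathcal{X}_a|-1) + \ldots$) against the number of constraints imposed by matching the two-child marginal, which is strictly smaller. I would also cross-reference the counterexample in Appendix~\ref{sec:counterex} to make the two-child construction fully explicit in the simplest binary case, then indicate how it scales to general categorical alphabets and to the presence of extra parents.
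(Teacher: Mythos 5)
Your overall architecture is the same as the paper's: argue the contrapositive, factor $\PP(H)=\PP(H_{(-k)})\,\PP(H_k\mid H_{(-k)})$ by the chain rule (valid for any latent DAG, which is how the paper disposes of your worry about $H_k$ having latent children), condition on each configuration of $H_{(-k)}$ so that only the two-child block $\PP(X_a,X_b\mid H_{(-k)})$ sees $H_k$, and then exploit non-identifiability of a one-latent, two-observation mixture. The paper isolates that last step as Lemma~\ref{lem:product mixture nonidentifiable} and proves it by exhibiting an identically-vanishing linear dependence among the columns of the likelihood Jacobian, which gives local non-identifiability at \emph{every} nondegenerate parameter (and for general categorical, not just binary, latents). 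The one place your sketch is genuinely shaky is the dimension count you offer as a "secondary technical point": matching the two-child marginal imposes on the order of $|\mathcal{X}_a|\,|\mathcal{X}_b|-1$ equations against only $O(|\mathcal{X}_a|+|\mathcal{X}_b|)$ free parameters, so for alphabets of size $4$ or more the constraints outnumber the parameters and the naive count proves nothing (this is precisely why the paper does the Jacobian computation rather than count parameters). Your other stated mechanism — the continuum of rank-one decompositions of the rank-two matrix $\pi_{g,0}\PP_{a,(g,0)}\otimes\PP_{b,(g,0)}+\pi_{g,1}\PP_{a,(g,1)}\otimes\PP_{b,(g,1)}$, obtained by inserting an invertible $2\times2$ matrix $R$ near the identity between the factors while preserving column sums and nonnegativity — is correct and would close the gap (nondegeneracy guarantees both factor matrices have two distinct columns, so the $R$-orbit is nontrivial); if you make that argument explicit you have a complete proof that is, if anything, more elementary than the paper's Jacobian route, though restricted to binary $H_k$.
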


\cref{thm:necessary} justifies the decomposition of $\Gamma$ into three components as in \eqref{eq:double triangular}. The sufficient condition in \cref{thm:identifiability} requires all columns of $\Gamma_a$ to be non-empty, so each $\Gamma_a$ measures each latent variable $H_k$ at least once.

Our next necessity result proves the necessity of the subset condition (see \cref{def:subset condition}). This condition is closely related to the sign-flipping of $H_k = 0/ 1$ for each latent variable, and the following result shows its necessity even under a known graphical structure $\Gamma$. 

\begin{theorem}[Subset condition]\label{thm:subset necessary}
    For a BLCM with known $K \ge 2$ and known $\Gamma$ to be identifiable, the $\Gamma$-matrix must satisfy the subset condition.
\end{theorem}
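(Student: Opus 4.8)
The plan is to show the contrapositive: if $\Gamma$ is known and violates the subset condition, then the remaining model components are not identifiable, by exhibiting an explicit alternate BLCM with the same $\Gamma$, same $K$, and the same observed distribution $\PP(X)$, which is nontrivially distinct from the original (i.e., not related by label permutation, sign-flipping, or Markov equivalence). Suppose without loss of generality that $\ch_X(H_1) \subseteq \ch_X(H_2)$, i.e., column $1$ of $\Gamma$ is dominated by column $2$: every observed child of $H_1$ is also a child of $H_2$. The key structural consequence is that for every $j$ with $\gamma_{j,1}=1$ we also have $\gamma_{j,2}=1$, so whenever the value of $X_j$ can "see" $H_1$ it can also "see" $H_2$. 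I intend to exploit this to re-route the effect of $H_1$ through a relabeling of the joint latent configuration.

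The construction I would use is the following. Define a bijection $\phi:\{0,1\}^K \to \{0,1\}^K$ that flips the first coordinate conditionally on the second — concretely, $\phi(h)_1 = h_1 \oplus h_2$ (XOR) and $\phi(h)_k = h_k$ for $k \ge 2$; this is an involution and a bijection on $\{0,1\}^K$. Now define the alternate model $\tilde\PP$ by $\tilde\pi_h := \pi_{\phi(h)}$ and, for each $j$, $\tilde\PP(X_j \mid H = h) := \PP(X_j \mid H = \phi(h))$. I would first check that this $\tilde\PP$ induces the same observed marginal: since summing over $h$ and substituting $h' = \phi(h)$ is just a reindexing, $\sum_h \tilde\pi_h \prod_j \tilde\PP(X_j\mid H=h) = \sum_{h'} \pi_{h'}\prod_j \PP(X_j\mid H=h') = \PP(X)$. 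The crucial point is that $\tilde\PP$ must still factor according to the \emph{same} bipartite graph $\Gamma$ — that is, $\tilde\PP(X_j \mid H=h)$ must depend on $h$ only through $h_{\pa(X_j)}$. This is exactly where the subset hypothesis enters: if $H_1 \notin \pa(X_j)$, then $\PP(X_j\mid H=h)$ does not depend on $h_1$, and since $\phi$ only alters the first coordinate, $\tilde\PP(X_j\mid H=h)=\PP(X_j\mid H=\phi(h))$ depends only on $(h_2,\dots,h_K)$ as required; if $H_1 \in \pa(X_j)$, then by the subset condition $H_2 \in \pa(X_j)$ as well, so $\PP(X_j \mid H=h)$ depends on $h$ only through coordinates including both $h_1$ and $h_2$, and $\phi$ acts as a bijection on the relevant sub-configurations, so $\tilde\PP(X_j \mid \cdot)$ again depends on $h$ only through $h_{\pa(X_j)}$. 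Hence $\tilde\PP$ is a valid BLCM on a graph with the same $\Gamma$ (the latent DAG $\tilde\Lambda$ being whatever DAG is faithful to $\tilde\pi$, which exists by a standard argument, or can be arranged to share $\Lambda$'s skeleton structure after checking nondegeneracy). I would also verify $\tilde\PP$ satisfies Assumption \ref{assmp:nondegenerate}: part (a) holds since $\phi$ is a bijection so $\tilde\pi_h = \pi_{\phi(h)} > 0$; part (b) holds because the map $h_{\pa(X_j)} \mapsto \phi(h)_{\pa(X_j)}$ is injective, so distinctness of conditionals is preserved; part (c) holds since $\Gamma$ is unchanged.

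Finally I would argue non-triviality: the two models $(\Gamma,\Lambda,\PP)$ and $(\Gamma,\tilde\Lambda,\tilde\PP)$ are not related by any allowed ambiguity. Label permutation is ruled out because $\Gamma$ is fixed and a nontrivial column permutation of $\Gamma$ would change $\Gamma$ unless it is a graph automorphism — and even then $\phi$ is not a coordinate permutation; sign-flipping is a coordinate-wise bit flip, whereas $\phi$ is the correlated flip $h_1 \mapsto h_1\oplus h_2$, which is not coordinate-wise for $K\ge 2$; and Markov equivalence only concerns $\Lambda$, not the conditional tables $\PP_{j,h}$ or the proportions $\pi$, which genuinely differ. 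To nail this down cleanly I would pick the original model so that the map $h \mapsto \PP_{j_0, h}$ (for a fixed child $j_0$ of both $H_1$ and $H_2$) takes four distinct values on the four settings of $(h_1,h_2)$ in a generic, non-symmetric way, so that no relabeling by a composition of permutation and sign-flip can carry $\{\PP_{j,h}\}$ to $\{\tilde\PP_{j,h}\}$; concretely a counting/generic-position argument on these $2^{|\pa(X_{j_0})|}$ conditional laws suffices. The main obstacle I anticipate is precisely this last step — rigorously certifying that the constructed $\tilde\PP$ is \emph{not} equivalent to $\PP$ under the combined group of label permutations, sign-flips, and Markov equivalence, rather than just "looking different" — since one must rule out the (small but nonempty) set of symmetric configurations where the XOR relabeling could coincide with an honest coordinate relabeling. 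This is handled by exhibiting one concrete non-symmetric BLCM for each $\Gamma$ violating the condition, which is enough since identifiability is a statement about all BLCMs on that $\Gamma$.
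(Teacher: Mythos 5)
Your construction is essentially the paper's own proof: the map $\phi(h)_1 = h_1\oplus h_2$ is exactly the paper's device of flipping the dominated latent's value within one stratum of the dominating latent, with the same reindexing argument showing $\tilde\PP(X)=\PP(X)$ and the same use of the subset violation to check that every child of the dominated variable is also a child of the dominating one, so the new conditionals still factor according to $\Gamma$. The non-triviality step you flag as the main obstacle requires no generic-position argument or hand-picked example: by \cref{assmp:nondegenerate}(c) and the subset violation, $H_1$ has an observed child $j_0$ that is also a child of $H_2$, and \cref{assmp:nondegenerate}(b) makes the four laws $\PP_{j_0,(h_1,h_2,\cdot)}$ pairwise distinct, so the partial flip (which fixes two of the four $(h_1,h_2)$-cells and swaps the other two) cannot be realized by any composition of label permutation and coordinate-wise sign flip --- hence the construction yields a nontrivially distinct alternative for \emph{every} BLCM on such a $\Gamma$, which is the full strength of the theorem.
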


In the following two examples, we motivate each necessary condition in Theorems \ref{thm:necessary} and \ref{thm:subset necessary}.
\begin{figure}[h!]
\centering
\begin{minipage}[c]{0.53\linewidth}
\centering
\begin{subfigure}[c]{0.46\textwidth}
\resizebox{\textwidth}{!}{
\begin{tikzpicture}[scale=1.8]
    \node (v1)[neuron] at (0, 0) {$X_{1}$};
    \node (v2)[neuron] at (0.8, 0) {$X_{2}$};
    \node (v3)[neuron] at (1.6, 0) {$X_{3}$};
    \node (v4)[neuron] at (2.4, 0) {$X_{4}$};
    \node (v5)[neuron] at (3.2, 0) {$X_{5}$};
       
    \node (h1)[hidden] at (0.8, 1.2) {$H_{1}$};
    \node (h2)[hidden] at (2.4, 1.2) {$H_{2}$};

    \draw[qedge] (h1) -- (v1) node [midway,above=-0.12cm,sloped] {}; 
    
    \draw[qedge] (h1) -- (v2) node [midway,above=-0.12cm,sloped] {};  
    
    \draw[qedge] (h1) -- (v3) node [midway,above=-0.12cm,sloped] {};

    \draw[qedge] (h2) -- (v4) node [midway,above=-0.12cm,sloped] {};
    \draw[qedge] (h2) -- (v5) node [midway,above=-0.12cm,sloped] {};
\end{tikzpicture}
}
\end{subfigure}
~~
\begin{subfigure}[c]{0.46\textwidth}
\resizebox{\textwidth}{!}{
\begin{tikzpicture}[scale=1.8]
    \node (v1)[neuron] at (0, 0) {$X_{1}$};
    \node (v2)[neuron] at (0.8, 0) {$X_{2}$};
    \node (v3)[neuron] at (1.6, 0) {$X_{3}$};
    \node (v4)[neuron] at (2.4, 0) {$X_{4}$};
    \node (v5)[neuron] at (3.2, 0) {$X_{5}$};
       
    \node (h1)[hidden] at (0.8, 1.2) {$H_{1}$};
    \node (h2)[hidden] at (2.4, 1.2) {$H_{2}$};

    \draw[red, pre] (h2) -- (h1) node [midway,above=-0.12cm,sloped] {}; 

    \draw[qedge] (h1) -- (v1) node [midway,above=-0.12cm,sloped] {}; 
    
    \draw[qedge] (h1) -- (v2) node [midway,above=-0.12cm,sloped] {};  
    
    \draw[qedge] (h1) -- (v3) node [midway,above=-0.12cm,sloped] {}; 
    
    \draw[qedge] (h1) -- (v4) node [midway,above=-0.12cm,sloped] {}; 
    
    \draw[qedge] (h2) -- (v4) node [midway,above=-0.12cm,sloped] {};
    \draw[qedge] (h2) -- (v5) node [midway,above=-0.12cm,sloped] {};
\end{tikzpicture}
}
\end{subfigure}
\caption{Counterexamples of three measurements per latent.}
\label{fig:counterex pure child}
\end{minipage}
\quad
\begin{minipage}[c]{0.42\linewidth}
        \centering
        \resizebox{0.98\textwidth}{!}{
        \begin{tabular}{ccccc}
            \toprule
            $h = (h_1, h_2)$ & $(0, 0)$ & $(0, 1)$ & $(1, 0)$ & $(1, 1)$  \\
            \midrule
            $\pi_h$ & 0.36 & 0.24 & 0.24 & 0.16 \\
            $\tilde{\pi}_h$ & 0.36 & 0.24 & 0.16 & 0.24 \\
            \bottomrule
        \end{tabular}
        }
        \captionof{table}{Counterexample of the subset condition. The first/second row denotes true/alternative proportion parameters $\pi$/$\tilde{\pi}$.}
        \label{tab:counterexample}
\end{minipage}
\end{figure}

\begin{example}[Violation of three measurements per latent variable]
    Suppose that the true model follows the DAG in the left panel of \cref{fig:counterex pure child}, where the latent variable $H_2$ is measured only twice. For simplicity, suppose that the graphical structures $(\Gamma, \Lambda)$ are known and that all observed variables are binary. As $H_1$ and $H_2$ are independent, $(X_1,X_2,X_3) \perp H_2$. Thus, one has to identify the conditional distributions associated with $H_2$ only based on $\PP(X_4, X_5)$. Simply comparing the number of effective parameters for this structure ($1+2\times2 = 5$) versus the number of equations provided by the marginal distribution of $(X_4, X_5)$ ($2^2-1 = 3$) demonstrates non-identifiability. On the contrary, the latent variable $H_1$ does not suffer from this issue, as it has three measurements as opposed to two. In fact, the conditional distributions $\PP(X_j \mid H_1)$ for $j=1,2,3$ are still identifiable.
    
    \cref{thm:necessary} extends this non-identifiability argument to \emph{arbitrary} latent structures such as that on the right panel of \cref{fig:counterex pure child}, where dependent latent variables $H_1, H_2$ as well as additional latent parents of $X_4$ are allowed.
\end{example}

\begin{example}[Violation of the subset condition]
    Consider a BLCM with $K=2$ latent variables that are independent: $H_1 \perp H_2$ (i.e. $\Lambda = \phi$) and $\PP(H_1 = 1) = \PP(H_2 = 1) = 0.4$. Suppose that we can write $\Gamma$ as in \eqref{eq:double triangular}, where $\Gamma_1 = \Gamma_2 = \Gamma_3 = \begin{pmatrix}
        1 & 0 \\
        1 & 1 \\
    \end{pmatrix}$. This $\Gamma$ satisfies all conditions in \cref{thm:identifiability} except for the subset condition.
    Then, even though $\Gamma$ is identifiable by \cref{thm:identifiability}, the latent causal graph $\Lambda$ is non-identifiable. %
    {For this goal, the proof of \cref{thm:subset necessary} constructs an alternative distribution $\tilde{\PP}$ with $\tilde{\pi}_h = \tilde{\PP}(H=h)$ as in the second row of \cref{tab:counterexample}, and similarly define conditional probabilities $\tilde{\PP}_{j,h}$.} Under $\tilde{\PP}$, $H_1$ and $H_2$ are dependent, 
    so the latent DAG $\Lambda$ is non-identifiable.
\end{example}

In Appendix \ref{sec:counterex}, we illustrate that the necessary conditions in Theorems \ref{thm:necessary} and \ref{thm:subset necessary} themselves do not guarantee identifiability, which justifies the gap between our necessary and sufficient conditions. {Closing this gap is certainly important, but also known to be notoriously challenging. One exception is the finite mixture model with $L$ mixture components and $J$ binary observations, where $J \ge 2L-1$ was recently shown to be the tight identifiability requirement \citep{tahmasebi2018identifiability,pmlr-v291-lyu25a}. However, under a weaker notion of ``generic identifiability'' that allows a measure-zero non-identifiable parameter space, \cite{allman2009} showed a much weaker sufficient condition of $J\ge 2\log_2 L+1$ (whose tightness is unknown).

Interestingly, viewing our causal model as a finite mixture with $L = 2^K$ components, the double-triangular condition (more generally, \cref{thm:identifiability}) requires $J\ge 2K+1 = 2\log_2 L +1$ observed variables. It is quite surprising that our non-generic result matches the generic result in \cite{allman2009}, and significantly relaxes the requirement $J\ge 2^{K+1}-1$. Additionally, our lower bound on $J$ also relaxes that in other discrete causal discovery works, such as $J\ge 3K$ in \cite{chen2024learning}.}

\section{Experiments}\label{sec:sim}
We conduct experiments to empirically validate our identifiability results under finite samples. Building upon the identifiability guarantees of the entire model, we implement a score-based estimator by discretizing the continuous responses and maximizing the regularized log-likelihood. {Our method first utilizes a penalized EM algorithm \citep{ma2023learning} to learn the bipartite graph $\Gamma$ alongside the latent proportion vector $\pi$. Then, we use the estimated latent proportions $\hat{\pi}$ to recover the latent graph $\Lambda$.} The implementation details are provided in Appendix \ref{subsec:algo}.

\paragraph{Experiments under varying latent structures}
Following \cref{fig:intro}, assume a true BLCM with $K = 3$ latent variables and $J = 8$ observed variables, where $X_1, X_2, X_3, X_4$ are binary and $X_5, X_6, X_7, X_8$ are continuous. We assume that $X_5, X_6$ are generated from a Normal and that $X_7, X_8$ are generated from a Cauchy distribution. Note that the distributional assumptions are not used for estimation. In terms of the graphical structures, $\Gamma$ is defined as in \cref{fig:intro}, and we consider three settings for $\Lambda$: (a) chain ($H_1 \to H_2 \to H_3$), (b) collider ($H_1 \to H_2 \leftarrow H_3$), (c) all dependent (as in \cref{fig:intro}). We consider varying sample sizes of $N = 1000, 5000, 10000$, and conduct 300 independent simulation trials for each setting.

\begin{table}[h!]
    \centering
    \begin{tabular}{ccccccc}
    \toprule
        \multirow{2}{*}{$\Lambda \setminus N$} &
      \multicolumn{3}{c}{SHD$(\hat{\Gamma},\Gamma)$} & \multicolumn{3}{c}{SHD$(\hat{\Lambda},\Lambda)$} \\
      \cmidrule{2-7}
      & 1k & 5k & 10k & 1k & 5k & 10k \\
    \midrule
    Chain & 1.90 & 1.64 & 1.48 & 0.57 & 0.46 & 0.38 \\
    Collider & 2.51 & 2.24 & 2.05 & 0.36 & 0.27 & 0.24 \\
    Dependent & 1.83 & 1.56 & 1.35 & 0.45 & 0.41 & 0.35 \\
    \bottomrule
    \end{tabular}
    \vspace{0.2cm}
    \caption{SHD for estimating $\Gamma$ and $\Lambda$, \textbf{smaller is better}. Maximum error for $\Gamma$, $\Lambda$ is $24, 3$, respectively.}
    \label{tab:shd}
\end{table}

In \cref{tab:shd}, we report the average structured Hamming distance (SHD) between the estimated and true DAG, separately for $\Gamma$ and $\Lambda$. The SHD computes the number of incorrectly estimated edges. %
The results in \cref{tab:shd} illustrate that both $\Gamma$ and $\Lambda$ can be effectively estimated regardless of the true latent structure $\Lambda$, and that the accuracy increases as the sample size $N$ increases. This clearly validates that latent structures can be accurately recovered under our weak identifiability conditions. See \cref{fig:boxplot} in Appendix \ref{sec:simulation details} for boxplots corresponding to \cref{tab:shd}. {In Appendix \ref{subsec:comparison}, we also compare the  performance of our approach against the mixture oracle baseline \citep{kivva2021learning} under a slightly modified setting with continuous responses.}

\paragraph{Ablation studies} We conduct ablation studies by assessing robustness when the conditions in \cref{thm:identifiability} does not hold. Here, $\Gamma^{\text{DT}}$ is the `double-triangular' bipartite graphical structure in \cref{fig:intro}. $\Gamma^{\text{dense}}, \Gamma^{\text{sparse}}$ denote alternative bipartite graphs that do not satisfy our identifiability conditions, as their third column is either too dense or sparse (see \eqref{eq:gamma formula} in Appendix \ref{subsec:true param} for the explicit forms).

\begin{table}[h!]
    \centering
    \begin{tabular}{ccccccc}
    \toprule
        \multirow{2}{*}{$\Gamma \setminus N$} &
      \multicolumn{3}{c}{SHD$(\hat{\Gamma},\Gamma)$} & \multicolumn{3}{c}{SHD$(\hat{\Lambda},\Lambda)$} \\
      \cmidrule{2-7}
      & 1k & 5k & 10k & 1k & 5k & 10k \\
    \midrule
    $\Gamma^{\text{DT}}$ & 1.90 & 1.64 & 1.48 & 0.57 & 0.46 & 0.38 \\
    $\Gamma^{\text{dense}}$	& 4.62 & 3.58 & 3.01 & 1.46 & 1.59 & 1.66 \\
    $\Gamma^{\text{sparse}}$ & 3.95 & 4.04 & 4.00 & 0.81 & 0.76 & 0.77 \\
    \bottomrule
    \end{tabular}
    \caption{SHD under varying $\Gamma$ and $N$, where $\Lambda$ is fixed as the chain graph. Only $\Gamma^{\text{DT}}$ satisfy the proposed identifiability conditions.}
    \label{tab:robust}
\end{table}
The results in \cref{tab:robust} reveal performance degradation when the proposed conditions in Theorem \ref{thm:identifiability} are violated. More importantly, the error in these non-double-triangular cases does not decrease as the sample size $N$ increases. This suggests the fundamental failure of identifiability even at the population level, and empirically verifies our necessary conditions. 

In \cref{tab:gamma-accuracy-entry} in Appendix \ref{subsec:true param}, we additionally report entry-wise estimation errors corresponding to the third column of \cref{tab:robust}, which illustrates an accuracy drop for edges associated with the theoretically non-identifiable latent variables.

\paragraph{Real data analysis}
Finally, we analyze a benchmark educational testing dataset on fraction subtraction \citep{tatsuoka2002data}, which consists of binary responses that record students' correct/incorrect answers. Estimating the causal graph $\Gamma$ between the observed and latent variables has been an important problem in educational measurement, and this dataset has been extremely popular due to the explicit nature of the questions (e.g. $\frac{3}{4} - \frac{3}{8}$ or $3 \frac{1}{2} - 2 \frac{3}{2}$).

By fitting the model with $K=2,3,4$ latent variables, we chose $K=3$ latent variables based on BIC. The estimated bipartite graph $\hat{\Gamma}$ (see Table \ref{tab:fraction} in the appendix) closely resembles that learned from the earlier analysis in \cite{chen2015statistical}, and the three latent skills can be interpreted similarly: computing common denominator, writing integer as fraction, and subtracting integers. Notably, the estimated $\hat{\Gamma}$ does not have two pure children per latent, but satisfies the double triangular condition in \cref{thm:identifiability}. This illustrates the practical usefulness of our proposed conditions. Additionally, the estimated latent graph $\hat{\Lambda}$ is fully dependent, which illustrates that all skills are closely correlated.

\section{Discussion}\label{sec:discussion}
This work opens up many directions for future work. 
First, it would be interesting to consider more complex graphical structures that go beyond measurement models under additional (but still weak) assumptions. For example, one may allow direct causal relationships between the observed variables or allow hierarchical latent variables, under additional structural constraints. 
Second, it would be important to generalize our identifiability conditions for causal models with general types of latent variables. This includes considering categorical latents (that may not be binary), as well as potentially allowing both categorical and continuous latents. In particular, for the case of categorical latents, {we conjecture that our sufficient identifiability conditions may be extended at the cost of modifying \cref{assmp:nondegenerate}(b) to a stronger ``full-rank'' requirement for the conditional probabilities $\PP(X_j \mid \pa(X_j))$. Note that extending  the score-based estimator in \cref{sec:sim} to categorical latents is not a bottleneck, since extending the penalized likelihood estimation strategy is immediate.}
Finally, it would be important to develop a nonparametric method that can fully leverage continuous responses to estimate individual conditional distributions, for example using kernel methods.
\bibliography{ref}
\bibliographystyle{tmlr}

\clearpage
\appendix
\section*{Appendix} This Appendix is organized as follows. Section \ref{sec:literature review} discusses three closely related works in-depth. Section \ref{sec:counterex} provides additional examples. Section \ref{sec:proof} proves all main theorems, and Section \ref{sec:pf of lemmas} proves all technical lemmas. Section \ref{sec:simulation details} provides all implementation details for our simulations alongside comparison studies. Section \ref{sec:real data details} provides details for real data analysis, and Section \ref{sec:binary} provides detiled theoretical and practical justifications regarding our setup with binary latent variables.

\section{Detailed literature review}\label{sec:literature review}
\paragraph{Comparison to \cite{kivva2021learning}}
Both our work and \cite{kivva2021learning} consider discrete latent variables without any assumptions on the true latent structure.
While we require binary latent variables, \cite{kivva2021learning} allow arbitrary categorical latent variables with unknown number of categories. However, their arguments rely on the existence of a mixture oracle, which gives the number of mixture components of each marginal distribution $\PP(X_S)$ for $S \subset [J]$. This does not hold under our minimal model definition. For example, for binary responses with $\mathcal{X}_1=\{0,1\}$, the one-dimensional marginal $\PP(X_1 = 1)$ does not contain enough information and its mixture components are fundamentally non-identifiable. The same argument holds for continuous responses as well, unless we impose parametric assumptions or separation between the conditional distributions $\PP_{1,h}$. %

One may point out that to identify the bipartite graph $\Gamma$, \cite{kivva2021learning} only utilizes the mixture oracle to obtain the number of mixture components for three observed variables, say $X_S = (X_1, X_2, X_3)$. Without further assumptions, the mixture oracle does not exist for this goal as well. To see this, consider a simple setting with $K=2$ and $J=3$, where $H_1 \to \{H_2, X_1, X_2\}, H_2 \to X_3$. Then, the minimal number of mixture components in $(X_1, X_2, X_3)$ is $2$ because $X_1, X_2, X_3$ are conditionally independent given $H_1$. Hence, under our setup, it is impossible to conclude that there must be $2\times 2=4$ mixture components, which corresponds to all configurations of $(H_1, H_2)$.

Another point of comparison is regarding the use of tensor decompositions. \cite{kivva2021learning} also utilizes tensor decompositions to identify the bipartite graph $\Gamma$, but looks at the three-way tensor whose $(j_1, j_2, j_3)$-th element corresponds to the (log) \emph{number of mixture components} of the distribution $(X_{j_1}, X_{j_2}, X_{j_3})$. Computing the value of this tensor requires a mixture oracle. %
On the other hand, our three-way tensor arises from re-shaping the marginal probability mass function (pmf) $\PP(X_1, \ldots, X_J) = \PP\Big((X_1,\ldots, X_K), (X_{K+1}, \ldots, X_{2K}), (X_{2K+1}, \ldots, X_J) \Big)$ based on the double triangular structure. Hence, our tensor does not require any additional information beyond that in the observed data $X$.

A final point of comparison is about learning $\Lambda$. %
While the proof strategy (especially reducing the ambiguity of latent variable value up to sign flip) shares the similar spirit, the key distinction arises again from the mixture oracle. \cite{kivva2021learning} utilizes the number of mixture components to construct a projection mapping between the mixture components of $X_S$ and the mixture components of each mode $X_j$. 
As this information is not available in our work, we take a detour and utilize the sparsity patterns in the conditional distributions $\PP_{j,h}$ instead.

\paragraph{Comparison to \cite{chen2024learning,chen2025identification}} Both our work and \cite{chen2024learning,chen2025identification} establish identifiability by utilizing tensor decompositions of the observed variables. {In particular, \cite{chen2025identification} allows both discrete and continuous responses by using the key idea of discretizing the continuous responses, similar to our paper.} While both works can allow general categorical latent variables, they require stronger identifiability conditions compared to this paper.

First, both works require three pure children per latent variable, as opposed to our double triangular condition. Second, they require the cardinality of the observed variables $|\mathcal{X}_j|$ to be \emph{strictly} greater than the cardinality of any latent variable. This does not allow binary responses as opposed to our result. Third, they operate under a stronger nondegeneracy assumption that requires full-rankness of the conditional probability table $\PP(X_{j} \mid \pa(X_j))$. In contrast, we do not require any full rank assumptions and assume a minimal nondegeneracy condition (see \cref{assmp:nondegenerate}).

We also mention that one of their main contribution is on algorithmic recoverability based on a novel tensor rank criteria, and they do not establish identifiability by considering alternative models that do not contain pure children (in the sense of \cref{rmk:identifiability}).

\paragraph{Comparison to \cite{lee2025deep}} The works \cite{lee2024new, lee2025deep} also establishes identifiability of binary latent variable models for arbitrary responses. As our proof argument for establishing nonparametric identifiability is motivated by this line of work, it is important to clarify the differences. Since \cite{lee2025deep} extends the conclusion of \cite{lee2024new}, which requires the bipartite graph $\Gamma$ to be known, we mainly focus on comparing our work with \cite{lee2025deep}.

First, the strict identifiability result in \cite{lee2025deep} require two pure children per latent variable, and mainly assumes the conditional distributions $\PP_{j,h}$ to be a parametric generalized linear model:
$$\PP_{j,h} \stackrel{d}{\equiv} \textsf{ParFam}(\beta_{j,0} + \sum_{k=1}^K \beta_{j,k} h_k, \gamma_j), \quad \forall j, h.$$
Under the weaker notion of ``generic'' identifiability \blue{that allows a measure-zero subset of the true parameter space to be non-identifiable}, they establish weaker identifiability conditions that do not require any pure children. We note that this condition is even weaker than our double triangular criteria, as they do not require any triangular structure. By assuming a multi-layer architecture, they additionally establish identifiability of hierarchical latent structures.

In comparison to the strict identifiability result in \cite{lee2025deep}, we do not require any pure children nor any parametric structure. The generic identifiability result there also requires the generalized linear model parametric form, \blue{and is not able to pinpoint the non-identifiable true parameter values.
In contrast, our results do not allow any sort of non-identifiability.}
Additionally, the theoretical results in \cite{lee2025deep} treats the latent dimension $K$ as known, whereas we establish its identifiability. Finally, we consider different latent architectures, as \cite{lee2025deep} considers a multi-layer latent structure without any between layer edges, but we consider a one-layer latent structure with arbitrary dependencies between the latent variables.

\section{Examples}\label{sec:counterex}
Our first example illustrates the necessity of condition (b) in \cref{assmp:nondegenerate}.
\begin{example}[Degenerate conditional distributions]\label{ex:degenerate}
    For simplicity, suppose that the responses are binary. We construct an example with degenerate conditional distributions that lead to a non-identifiable $\Gamma$ matrix.
    For each $k \in [K]$, assume that the corresponding $\Gamma$ is lower triangular with all ones:
    $$\Gamma = \begin{pmatrix}
        1 & 0 & 0 & \ldots & 0 \\
        1 & 1 & 0 & \ldots & 0 \\
        1 & 1 & 1 & \ldots & 0 \\
        \vdots & \vdots & \vdots & \ddots & \vdots \\
        1 & 1 & 1 & \ldots & 1
    \end{pmatrix}.$$
    For constants $a \neq b \in [0,1]$, suppose that the conditional distributions $\PP_{k,h} := \text{Ber}(\theta_{k,h})$ are defined as:
    $$\theta_{k,h} = \begin{cases}
        a & \text{if } d_H(h, 0) \text{ is } even, \\
        b & \text{if } d_H(h, 0) \text{ is } odd. \\
    \end{cases}$$
    Here, $d_H$ denotes the Hamming distance between two binary vectors. The conditional distributions are clearly degenerate, as $\theta_{k,h}$ can take only two possible values regardless of the configuration $h$, for all $k$. 

    Then, we can construct an alternative labeling $\tilde{H}$ that corresponds to $\tilde{\Gamma} = I_K$. See the following table for an illustration with $K=3$. This illustrates the fundamental ambiguity resulting from nondegeneracy. 
    
\begin{table}[h!]
    \centering
    \begin{tabular}{ccccccccc}
        \toprule
        $j \setminus h$ & (0,0,0) & (1,0,0) & (0,1,0) & (1,1,0) & (0,0,1) & (1,0,1) & (0,1,1) & (1,1,1) \\
        \midrule
        1 & a & b & a & b & a & b & a & b \\
        2 & a & b & b & a & a & b & b & a \\
        3 & a & b & b & a & b & a & a & b \\
        \midrule
        $\tilde{h}$ & (0,0,0) & (1,1,1) & (0,1,1) & (1,0,0) & (0,0,1) & (1,1,0) & (0,1,0) & (1,1,1) \\
        \bottomrule
    \end{tabular}
    \vspace{2mm}
    \caption{A degenerate pmf table of $\theta_{j,h} = \PP(X_j = 1 \mid H = h)$, which can define multiple graphical structures: $\Gamma = \begin{pmatrix}
        1 & 0 & 0 \\
        1 & 1 & 0 \\
        1 & 1 & 1
    \end{pmatrix}$ and $ \begin{pmatrix}
        1 & 0 & 0 \\
        0 & 1 & 0 \\
        0 & 0 & 1
    \end{pmatrix}$.}
    \label{tab:degenerate}
\end{table}
\end{example}

Note that under further degeneracy specifications such as the conjunctive assumption in \cite{lee2024latent}, it is possible to identify degenerate causal graphical models.

Our final example illustrates the gap between the necessary and sufficient conditions.
\begin{example}[Gap between necessary and sufficient conditions]\label{ex:condition gap}
    Consider a toy model with $K = 3$ latent variables and $J = 4$ binary observed variables, where the true bipartite graph is
    $$\Gamma = \begin{pmatrix}
        1 & 1 & 0 \\ 1 & 0 & 1 \\ 0 & 1 & 1 \\ 1 & 1 & 1
    \end{pmatrix}.$$ 
    This $\Gamma$ satisfies both the three measurement per latent condition (see \cref{thm:necessary}) and the subset condition (see \cref{thm:subset necessary}). However, simply observing that this model has 39 parameters (7 parameter for $\pi$ and 32 for the conditional distributions $\PP_{j,h}$) but only 15 equations are given by the marginal distribution $\PP(X)$, we can deduce that the model is non-identifiable. This illustrates that, without the double triangular condition to guarantee a sufficient number of observed variables, the model may be non-identifiable.
\end{example}

\begin{remark}[The necessary conditions can be weakened under additional assumptions]
    Our necessary conditions are stated under our minimal, nonparametric modeling framework. Under additional assumptions such as linearity and pure-children structures, the model can still be identifiable when there are two measurements per latent variable \citep{xie2020generalized, gu2025blessing}. Also, the subset condition is not required under (generalized) linear parametrizations \citep[Theorem 2,][]{lee2025deep}.
\end{remark}

\section{Proof of main results}\label{sec:proof}

\subsection{Rigorous definition of trivial ambiguities}\label{sec:ambiguities}

Before proving individual results, we formalize the three notions of ambiguities from \cref{sec:objective}. Recall from \cref{sec:notation} that $\pi_h = \PP(H =h)$ and $\PP_{j,h}$ denotes the conditional distribution of $X_j$ given $H=h$.
\begin{definition}\label{def:ambiguities}
    Consider two BLCMs with a known $K$ and model components $\big(\Lambda, \Gamma, \pi, \{\PP_{j,h}\}\big)$ and $\big(\tilde{\Lambda}, \tilde{\Gamma}, \tilde{\pi}, \{\tilde{\PP}_{j,h}\} \big)$, respectively. We formally say that these two models are equivalent up to \emph{label permutation} when there exists a permutation $\sigma:[K] \to [K]$ such that for $\tilde{h}_{h,\sigma} := (h_{\sigma_{(1)}}, \ldots, {h_{\sigma(K)}})$, we have
    \begin{align*}
        \pi_h  = \tilde{\pi}_{\tilde{h}_{h,\sigma}}, \quad \gamma_{j,k} &= \tilde{\gamma}_{j,\sigma(k)}, \quad \PP_{j,h} = \tilde{\PP}_{j,\tilde{h}_{h,\sigma}}, \quad \forall h \in \{0,1\}^K, ~ j \in [J], ~ k \in [K], \\
        \quad 
        \{k \to \ell\} \in \Gamma &\iff \{\sigma(k) \to \sigma(\ell)\} \in \tilde{\Gamma}, \quad \forall k \neq \ell \in [K].
    \end{align*}
    Next, we say that the two models are equivalent up to \emph{sign-flip} when $\Lambda = \tilde{\Lambda}, \Gamma = \tilde{\Gamma}$ and there exists permutations $\tau_k:\{0,1\} \to \{0,1\}$ for each $k \in [K]$ that satisfy
    \begin{align*}
        \pi_h = \tilde{\pi}_{(\tau_1(h_1), \ldots, \tau_K(h_K))}, \quad \PP_{j,h} = \tilde{\PP}_{j,(\tau_1(h_1), \ldots, \tau_K(h_K))}, \quad \forall h \in \{0,1\}^K, j \in [J].
    \end{align*}
    Finally, we say that the two models are equivalent up to \emph{Markov equivalence} when $\pi = \tilde{\pi}, \Gamma = \tilde{\Gamma}, \PP =\tilde{\PP},$ and the latent graphs $\Lambda$ and $\tilde{\Lambda}$ induce the same conditional independence relations \citep{pearl2014probabilistic}. Note that the bipartite graph $\Gamma$ does not suffer from Markov equivalence due to \cref{assmp:measurement model}, which restricts the direction of the edges in $\Gamma$.
\end{definition}

\subsection{Proof of \cref{thm:identify K} and \cref{prop:identifiability of K}}
\begin{proof}[Proof of \cref{thm:identify K} for general responses]
    We proceed in a similar spirit as in the sketch in the main text, where technical challenges arise from the arbitrary response types. Here, we reduce the responses to binary responses noting that the rank information is still preserved. 
    For each $j$, let $C_{j} \subsetneq \mathcal{X}_j$ be a non-empty subset of the sample space. Let $Y_j$ be a binary random variable defined by setting 
    \begin{align}\label{eq:discretized y_j}
        Y_j := I(X_j \in C_j).
    \end{align}
    Note that this implies $$\PP(Y_j = 1 \mid H = h) = \PP(X_j \in C_j \mid H = h).$$
    Then, by assumption \cref{assmp:nondegenerate}, we have $\PP(Y_j = 1 \mid H = h) \neq \PP(Y_j = 1 \mid H = h')$ for any $h \neq h'$ such that $h_{\pa(X_j)} \neq h_{\pa(X_j)}'$. Hence, the second part of \cref{assmp:nondegenerate} holds for the discretized random variable $Y_j$. 

    Now, the argument in the main text can be applied by changing the notation $X$ to $Y$, and the proof is complete.
\end{proof}

Next, we present the delegated statement (from \cref{rmk:alternative K}) for identifying $K$ under the class of BLCMs, where alternate models with more than $K$ latent variables are allowed.
\begin{proposition}\label{prop:identifiability of K}
    Among the class of double triangular BLCMs, $K$ is identifiable in the weaker sense that the lower-dimensional restriction ``$\tilde{K} < K$'' in condition (a) of \cref{def:identifiability} can be relaxed to ``$\tilde{K} \neq K$''. 
\end{proposition}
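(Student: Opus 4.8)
\textbf{Proof proposal for Proposition \ref{prop:identifiability of K}.}

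The plan is to extend the rank argument used in the simplified proof of \cref{thm:identify K} so that it rules out \emph{both} directions of mismatch, $\tilde K < K$ and $\tilde K > K$. The case $\tilde K < K$ is already handled by \cref{thm:identify K} (the general-response version proved above), so the only new content is the case $\tilde K > K$. First I would set up the double triangular decomposition of the true $\Gamma$ and pick the two disjoint index sets $S_1, S_2 \subseteq [J]$ indexing the rows of $\Gamma_1$ and $\Gamma_2$. Passing to binary discretizations $Y_j := I(X_j \in C_j)$ with a well-chosen $C_j$ (as in the proof above, so that \cref{assmp:nondegenerate}(b) is inherited), \cref{lem:full rank} gives that $\PP(Y_{S_1}\mid H)$ and $\PP(Y_{S_2}\mid H)$ each have full column rank $2^K$, and $\diag(\pi)$ has full rank $2^K$ by \cref{assmp:nondegenerate}(a); hence $\rk \PP(Y_{S_1}, Y_{S_2}) = 2^K$ exactly (it is also at most $2^K$ since the middle factor is $2^K\times 2^K$).

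Next, suppose for contradiction that an alternative \emph{double triangular} BLCM $(\tilde G, \tilde\PP)$ with $\tilde K > K$ latent variables satisfies $\tilde\PP(X) = \PP(X)$, hence $\tilde\PP(Y) = \PP(Y)$ for the discretized variables. The key point is that the \emph{same} observed index sets $S_1, S_2$ that are triangular for the true model are, in general, \emph{not} triangular for the alternative model, so one cannot directly read off $\rk \PP(Y_{S_1}, Y_{S_2}) = 2^{\tilde K}$. Instead I would argue via an \emph{upper} bound in the alternative model and a matching \emph{lower} bound there. Because the alternative model is double triangular with $\tilde K$ latents, it has its own disjoint row sets $\tilde S_1, \tilde S_2 \subseteq [J]$ making $\tilde\Gamma_{\tilde S_1}, \tilde\Gamma_{\tilde S_2}$ triangular of size $\tilde K\times \tilde K$; applying \cref{lem:full rank} to the alternative model (after discretizing those coordinates appropriately) gives $\rk \tilde\PP(Y_{\tilde S_1}, Y_{\tilde S_2}) = 2^{\tilde K} > 2^K$. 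So the contradiction will come from comparing $\rk\PP(Y_{\tilde S_1},Y_{\tilde S_2})$ computed in the true model against $2^{\tilde K}$. In the true model, $\PP(Y_{\tilde S_1}, Y_{\tilde S_2}) = \PP(Y_{\tilde S_1}\mid H)\diag(\pi)\PP(Y_{\tilde S_2}\mid H)^\top$ factors through the $2^K$-dimensional latent space, so its rank is at most $2^K < 2^{\tilde K}$. Since the discretization used to form $Y$ is a property of the observed distribution alone (we are free to choose $C_j$ depending only on which coordinates we want to split, and a common refinement works for both models simultaneously), $\rk\PP(Y_{\tilde S_1},Y_{\tilde S_2}) = \rk\tilde\PP(Y_{\tilde S_1},Y_{\tilde S_2})$, giving $2^{\tilde K} \le 2^K$, a contradiction. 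Combined with \cref{thm:identify K}, this rules out every $\tilde K \neq K$.

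The main obstacle I anticipate is a technical one about discretization compatibility: \cref{lem:full rank} is stated for \emph{binary} responses with a \emph{specific} triangular $\Gamma$, so to invoke it for the alternative model's coordinates $\tilde S_1 \cup \tilde S_2$ I need a discretization $Y_j = I(X_j \in C_j)$ that simultaneously preserves \cref{assmp:nondegenerate}(b) for the alternative model on those coordinates, while the rank-$\le 2^K$ bound in the true model holds for \emph{any} discretization. This is fine because the upper bound $\rk \le 2^K$ is automatic (it follows purely from the true model's factorization through $2^K$ latent states and needs no nondegeneracy), whereas the lower bound $= 2^{\tilde K}$ only needs \emph{one} good choice of $C_j$ on the $\tilde S_1 \cup \tilde S_2$ coordinates, which exists by the argument in the general-response proof of \cref{thm:identify K} applied to the alternative model. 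A minor point to be careful about: one must confirm $\tilde S_1, \tilde S_2$ can be taken disjoint and of size $\tilde K$ each, which is exactly the content of the double triangular assumption on the alternative model (its first $2\tilde K$ rows after permutation). Assembling these pieces, the proposition follows.
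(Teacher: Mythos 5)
Your proposal is correct and follows essentially the same route as the paper's proof: apply \cref{lem:full rank} to the \emph{alternative} model's own triangular row sets to obtain $\rk\,\tilde\PP(Y_{\tilde S_1},Y_{\tilde S_2})=2^{\tilde K}$, note that the true model's factorization through $2^K$ latent states bounds this rank by $2^K$, and invoke \cref{thm:identify K} for the case $\tilde K<K$. Your extra care about the discretization being compatible with the alternative model's nondegeneracy is a point the paper's proof leaves implicit, but it does not change the argument.
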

\begin{proof}[Proof of \cref{prop:identifiability of K}]
    Consider an alternative model with $L > K$ latent variables, that is double triangular and defines an identical marginal distribution $\tilde{\PP}(X) = \PP(X)$. For each $j$, define the discretized random variable $Y_j$ as in \eqref{eq:discretized y_j}.
    By the double triangular condition and \eqref{eq:pmf matrix decomposition}, there exists some disjoint set of observed variables $X_{S_1}, X_{S_2} \subset (X_1, \ldots, X_J)$ such that the discretized pmf $\tilde{\PP}(Y_{S_1}, Y_{S_2})$ has full rank of $2^L > 2^K$. However, since the true model has $K$ latent variables, we can write
    $$\PP(Y_{S_1}, Y_{S_2}) = \PP(Y_{S_1} \mid H) \diag(\pi) \PP(Y_{S_2} \mid H)^\top,$$
    which gives $\rk(\PP(Y_{S_1}, Y_{S_2})) \le 2^K$. Thus, we have a contradiction.

    Now, the proof is complete, since \cref{thm:identify K} has already established that there exists no alternative model with $L < K$ latent variables.
\end{proof}

\subsection{Proof of \cref{thm:identifiability} and \cref{prop:identifiability monotone}}
Next, we show \cref{thm:identifiability} and \cref{prop:identifiability monotone}. 
Our first tool is the following technical lemma, which allows us to boil down the identifiability of models with arbitrary types of random variables to that with categorical variables. 
We need to use the notion of separable metric spaces and separating classes from standard probability textbooks (e.g. Chapter 1 of \cite{billingsley2013convergence}).

\begin{lemma}\label{lem:separating class}
    For any separable metric space $\mathcal{X}_j$, there exists a countable separating class $\mathcal{C}_j$ whose values determine the probability measure on $\mathcal{X}_j$.
\end{lemma}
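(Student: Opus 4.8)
The plan is to construct $\mathcal{C}_j$ explicitly from a countable base of the metric topology and then invoke Dynkin's $\pi$--$\lambda$ theorem for the uniqueness of probability measures. This is a standard measure-theoretic fact; the work is entirely in assembling the right countable collection.

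First, using separability, fix a countable dense set $D = \{x_1, x_2, \ldots\} \subseteq \mathcal{X}_j$ and form the countable family of open balls
\[
\mathcal{B} := \{ B(x_i, q) : i \in \mathbb{N},\ q \in \mathbb{Q}_{>0}\}.
\]
A routine argument shows $\mathcal{B}$ is a base for the topology: given an open $U$ and $x \in U$, choose $r>0$ with $B(x,r) \subseteq U$, then pick $x_i \in D$ with $d(x,x_i) < r/3$ and a rational $q \in (r/3, 2r/3)$, so that $x \in B(x_i, q) \subseteq B(x,r) \subseteq U$. Hence every open set is a countable union of members of $\mathcal{B}$, so $\sigma(\mathcal{B})$ is exactly the Borel $\sigma$-algebra $\mathcal{B}(\mathcal{X}_j)$.

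Second, since $\mathcal{B}$ need not be closed under finite intersections, I would enlarge it to
\[
\mathcal{C}_j := \Big\{ \textstyle\bigcap_{m=1}^{M} A_m : M \in \mathbb{N},\ A_1, \ldots, A_M \in \mathcal{B} \Big\},
\]
which is again countable, is a $\pi$-system by construction, and still satisfies $\sigma(\mathcal{C}_j) = \mathcal{B}(\mathcal{X}_j)$ because $\mathcal{B} \subseteq \mathcal{C}_j \subseteq \sigma(\mathcal{B})$. Now if two Borel probability measures $P, Q$ on $\mathcal{X}_j$ agree on every $C \in \mathcal{C}_j$, the collection $\mathcal{L} := \{A \in \mathcal{B}(\mathcal{X}_j) : P(A) = Q(A)\}$ is a $\lambda$-system: it contains $\mathcal{X}_j$ since $P, Q$ are probability measures, and it is closed under proper differences and increasing countable unions. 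Since the $\pi$-system $\mathcal{C}_j$ lies in $\mathcal{L}$, Dynkin's theorem yields $\mathcal{B}(\mathcal{X}_j) = \sigma(\mathcal{C}_j) \subseteq \mathcal{L}$, i.e.\ $P = Q$. Thus $\mathcal{C}_j$ is a countable separating class.

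The only point needing a little care is the passage from the topological base $\mathcal{B}$ --- which generates the Borel $\sigma$-algebra but is not a $\pi$-system --- to the countable $\pi$-system $\mathcal{C}_j$; closing under finite intersections repairs this without breaking countability, and Dynkin's $\pi$--$\lambda$ theorem then does the rest. Note that nondegeneracy of $\mathcal{X}_j$ plays no role here; only separability is used.
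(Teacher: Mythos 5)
Your proof is correct and is the standard argument (countable base of rational-radius balls around a dense set, closure under finite intersections to get a countable $\pi$-system generating the Borel $\sigma$-algebra, then Dynkin's $\pi$--$\lambda$ theorem), which is essentially what the paper relies on by citing Billingsley and Step~1 of Theorem~1 of \cite{lee2024new} rather than proving the lemma itself. The only cosmetic remark is that the paper's later use of $\mathcal{C}_j$ (in the proof of Theorem~2) takes one of the sets to be $\mathcal{X}_j$ itself, so one should adjoin $\mathcal{X}_j$ to your collection --- this preserves countability, the $\pi$-system property, and the generated $\sigma$-algebra.
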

The existence of such a separating class $\mathcal{C}_j$ is a consequence of $\mathcal{X}_j$ being a separating metric space, which was assumed in the first paragraph of \cref{sec:setup}. For a proof, see Step 1 in the proof of Theorem 1 in \cite{lee2024new}. For example, for binary responses with $\mathcal{X}_j = \{0,1\}$, we can simply take $\mathcal{C}_j = \{\{0\}, \{0,1\}\}$. For continuous responses with $\mathcal{X}_j = \mathbb{R}$, we can take $\mathcal{C}_j = \{(-\infty, q): q \in \mathbb{Q}\} \cup (-\infty, \infty)$, where $\mathbb{Q}$ is the set of rational numbers.

Having reduced the response types to categorical, our second tool is the celebrated Kruskal's theorem \citep{kruskal1977three, rhodes2010concise}, which guarantees the unique CP decomposition of a three-way tensor. Before presenting the result, define the Kruskal rank of a $n \times r$ matrix $T_1$ (denoted as $\rk_k(T_1)$) as the largest number $R \le r$ such that every distinct $R$ columns of $T_1$ are linearly independent. Also, for vectors $t_1, t_2, t_3$, denote their outer product as $t_1 \circ t_2 \circ t_3$.

\begin{lemma}[Kruskal's Theorem]\label{lem:kruskal}
    For $a = 1,2,3$, let $T_a$ be a $n_a \times r$ matrix, whose $\ell$th column is $t_{a,\ell}$. Let $\mathcal{T} := \llbracket T_1, T_2, T_3 \rrbracket = \sum_{l=1}^r t_{1,\ell} \circ t_{2,\ell} \circ t_{3,\ell}$ be a three-way tensor with dimension $n_1 \times n_2 \times n_3$. Then, when the following condition holds:
    $$\rk_k(T_1) + \rk_k(T_2) + \rk_k(T_3) \ge 2r +2,$$
    the rank $r$ decomposition of $\mathcal{T}$ is unique up to a common column permutation and rescaling of $T_a$s. In other words, if $\mathcal{T} := [\tilde{T}_1, \tilde{T}_2, \tilde{T}_3]$ for some $n_a \times r$ matrices $\tilde{T}_a$, there exists a permutation matrix $\Pi$ and invertible diagonal matrices $D_1, D_2, D_3$ with $D_1 D_2 D_3 = I_r$ such that $\tilde{T}_a = T_a D_a \Pi$.
\end{lemma}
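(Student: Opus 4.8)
This is the classical uniqueness theorem of \cite{kruskal1977three}, and the plan is to reproduce the streamlined proof of \cite{rhodes2010concise}, which rests on two ingredients: Kruskal's combinatorial \emph{permutation lemma}, and an elementary rank bound for products of column-selected matrices. Throughout, for a vector $v$ write $\nu(v)$ for the number of its nonzero entries, and for a matrix $M$ and index set $S$ write $M^{(S)}$ for the submatrix on the columns in $S$.

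\emph{Permutation lemma and first reductions.} I would first record the lemma: if $A$ is an $m\times r$ matrix with no zero column, and $\bar A$ is an $m\times r$ matrix such that every $w$ with $\nu(w^\top\bar A)\le r-\rk_k(A)+1$ also satisfies $\nu(w^\top A)\le\nu(w^\top\bar A)$, then $\bar A=A\,\Pi\,\Lambda$ for an $r\times r$ permutation matrix $\Pi$ and a nonsingular diagonal $\Lambda$. This is proved by a support-counting argument (induction on $r-\rk_k(A)$) with no reference to tensors, so I would either cite it or include the short proof. Next, suppose $\mathcal T=\llbracket T_1,T_2,T_3\rrbracket=\llbracket \tilde T_1,\tilde T_2,\tilde T_3\rrbracket$, both with $r$ columns. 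Since $\rk_k(T_a)\le r$ for each $a$, the hypothesis $\rk_k(T_1)+\rk_k(T_2)+\rk_k(T_3)\ge 2r+2$ forces $\rk_k(T_a)\ge 2$ for every $a$; in particular no two columns of any $T_a$ (and, running the argument symmetrically, of any $\tilde T_a$) are proportional, and in particular no $T_a$ has a zero column. It then suffices to prove $\tilde T_3=T_3\,\Pi\,\Lambda_3$: applying the identical argument to modes $1$ and $2$ gives $\tilde T_1=T_1\,\Pi_1\,\Lambda_1$, $\tilde T_2=T_2\,\Pi_2\,\Lambda_2$, after which relabeling the columns of the second decomposition by $\Pi$ and contracting out the third mode forces $\Pi_1=\Pi_2=\Pi$ and $\Lambda_1\Lambda_2\Lambda_3=I_r$.

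\emph{Contracted slices — the heart of the argument.} For $w\in\mathbb R^{n_3}$ form the matrix slice $M_w:=T_1\,\diag(T_3^\top w)\,T_2^\top=\tilde T_1\,\diag(\tilde T_3^\top w)\,\tilde T_2^\top$. Writing $q:=\nu(w^\top T_3)$ and $S$ for the support of $T_3^\top w$, we have $M_w=T_1^{(S)}\,D'\,(T_2^{(S)})^\top$ with $D'$ an invertible $q\times q$ diagonal, so by the Sylvester rank inequality together with $\rk(T_a^{(S)})\ge\min(q,\rk_k(T_a))$, one gets $\rk(M_w)\ge\min(q,\rk_k(T_1))+\min(q,\rk_k(T_2))-q$; from the $\tilde T$-side, $\rk(M_w)\le\nu(w^\top\tilde T_3)=:\tilde q$. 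I would then run a short three-case analysis on whether $q$ lies below, between, or above $\rk_k(T_1)$ and $\rk_k(T_2)$, using $\rk_k(T_a)\le r$ and the sum condition, to show that $\tilde q\le r-\rk_k(T_3)+1$ forces $\min(q,\rk_k(T_1))+\min(q,\rk_k(T_2))-q\le\tilde q$, hence $q\le\tilde q$; that is, $\nu(w^\top T_3)\le\nu(w^\top\tilde T_3)$ on exactly the set of $w$ that the permutation lemma requires.

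\emph{Conclusion and main obstacle.} The previous step is precisely the hypothesis of the permutation lemma with $A=T_3$, $\bar A=\tilde T_3$, so it yields $\tilde T_3=T_3\,\Pi\,\Lambda_3$; conjugating $\Pi$ across $\Lambda_3$ rewrites this as $\tilde T_3 = T_3\,D_3\,\Pi$ with $D_3$ diagonal, and the reduction above then gives $\tilde T_a=T_a\,D_a\,\Pi$ for $a=1,2,3$ with $D_1D_2D_3=I_r$. The main obstacle is the slice analysis: making the rank bookkeeping for $M_w$ tight enough that it is the inequality $\rk_k(T_1)+\rk_k(T_2)+\rk_k(T_3)\ge 2r+2$ — and nothing weaker — that triggers the permutation lemma, which is exactly where Kruskal's bound is used optimally. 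Everything else is either elementary linear algebra or the self-contained permutation lemma, so in practice I would present this as the Rhodes argument and cite \cite{kruskal1977three,rhodes2010concise} rather than re-derive every estimate.
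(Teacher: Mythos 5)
The paper does not prove this lemma at all: it states Kruskal's theorem as a known classical result and simply cites \cite{kruskal1977three} and \cite{rhodes2010concise}. Your outline is a faithful sketch of exactly the cited Rhodes argument (permutation lemma plus the contracted-slice rank case analysis), so it is correct and matches the paper's intended source.
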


\begin{proof}[Proof of \cref{thm:identifiability}]
    Consider a BLCM with a known $K$ and model parameters $(\Gamma, \Lambda,\PP)$ where $\Gamma$ satisfy the conditions in the Theorem statement (double triangular, $\Gamma_3$ does not have empty columns, subset condition).
    Suppose that there exists alternative model parameters $\tilde{\Gamma}, \tilde{\Lambda}, \tilde{\PP}$ that define the same marginal distribution $\PP(X) = \tilde{\PP}(X).$ We show that $(\tilde{\Gamma}, \tilde{\Lambda}, \tilde{\PP})$ must be equal to $(\Gamma, \Lambda,\PP)$ up to index/value permutation and Markov equivalence. 
    We separate the proof into three parts, after introducing the necessary notations.

    \emph{Step 0:} Additional notations regarding discretization.

    We define the following notations under model parameters $(\Gamma, \Lambda,\PP)$. Assume analogous definitions under the alternative parameters $(\tilde{\Gamma}, \tilde{\Lambda}, \tilde{\PP})$.

    For each $j \in [J]$ and $h \in \{0,1\}^k$, let $\PP_{j,h}$ be the conditional distribution of $X_j \mid H = h$:
    \begin{align}\label{eq:p_j,aaa def}
        \PP_{j,h}(C) := \PP (X_j \in C \mid H = h), \quad \forall C \subseteq \mathcal{X}_j.
    \end{align}

    For each $j \in [J]$, fix an integer $\kappa_j \ge 2$, and consider distinct measurable subsets $C_{1,j}, \ldots, C_{\kappa_j, j} \in \mathcal{C}_j$ with $C_{\kappa_j,j} = \mcx_j$.
    For each $j \in [J]$, define a $\kappa_j \times 2^K$ matrix $M_j$ by setting $$M_j(l_j, h) := \PP_{j,h}({C}_{l_j, j}) = \PP(X_j \in {C}_{l_j, j} \mid H = h).$$ 
    Note that $C_{\kappa_j,j} = \mcx_j$ implies that $M_j(\kappa_j, h) = 1$ for all $h$, so the last row of $M_j$ is all-one.

    Since the true model is double triangular, let $S_1$ and $S_2$ denote two disjoint subsets of $[J]$ that index the rows of $\Gamma_1$ and $\Gamma_2$. Also, let $S_3 := [J] \setminus (S_1 \cup S_2)$ denote the row indices of $\Gamma_3$.
    For each $a=1,2,3$, set $\eta_a := \prod_{j \in S_a} \kappa_j$, and note that $\eta_1, \eta_2 \ge 2^K$. 
    For each $a=1,2,3$, let $T_a$ be a $\eta_a \times 2^K$ matrix defined as
    $$T_a\big((l_j: j \in S_a), h\big) := \PP(X_j \in C_{l_j, j}, ~ \forall j \in S_a \mid H = h) = \prod_{j \in S_a} M_j(l_j, h).$$
    Here, each row of $T_a$ is indexed by a vector $(l_j: j \in S_a)$, where $l_j \in [\kappa_j]$ for all $j$. Also, note that $T_a\big((\kappa_j: j \in S_a), h\big) = 1$ for all $h$.
    
    Finally, define a $\eta_1 \times \eta_2 \times \eta_3$ tensor $\mathcal{T}$ by setting 
    \begin{align*}
        \mathcal{T}\Big((l_j: j \in S_1), (l_j: j \in S_2), (l_j: j \in S_3)\Big) &:= \PP(X_j \in C_{l_j, j}, ~ \forall j \in [J]) \\
        &= \sum_{h} \pi_h \prod_{a=1}^3 T_a((l_j: j \in S_a), h).
    \end{align*}
    Note that $\mathcal{T}$ merely reshapes the discretization of the marginal distribution $\PP(X)$, and is identical under true and alternative model parameters.
    
    Using these notations, we can generalize \eqref{eq:tensor decomposition} in the main paper and write
    \begin{align}\label{eq:tensor decomposition matrix}
        \mathcal{T} = \Big\llbracket T_1 \diag(\pi), T_2, T_3 \Big\rrbracket = \Big\llbracket \tilde{T}_1 \diag(\tilde{\pi}), \tilde{T}_2, \tilde{T}_3 \Big\rrbracket .
    \end{align}

    \emph{Step 1:} Apply Kruskal's theorem to identify the components in \eqref{eq:tensor decomposition matrix}.

    By \cref{lem:full rank}, we have $\rk(T_1), \rk(T_2) = 2^K$. Note that even though $T_1, T_2$ are allowed to have more than $2^K$ rows, they still have full column rank, and consequently full Kruskal rank:
    $$\rk_k(T_1), \rk_k(T_2) = 2^K.$$
    For any binary vectors $h \neq h'$, there exists some $k \le K$ such that $h_k \neq h_k'$. Recalling that $\Gamma_3$ has no empty columns, there exists some $j_k \in S_3$ such that $\gamma_{j_k, k} = 1$. In other words, $X_{j_k}$ measures $H_k$. Then, part (b) of \cref{assmp:nondegenerate} implies that $\PP_{j_k,h} \ne \PP_{j_k,h'}$, so the $h$-th and $h'$-th column of $T_3$ are distinct. Recalling that the last row of $T_3$ is $1$ in every column, every two columns in $T_3$ are linearly independent, and we have 
    $$\rk_k(T_3) \ge 2.$$
    Hence, the inequality in \cref{lem:kruskal} holds, and Kruskal's theorem can be applied. This implies that the tensor decomposition \eqref{eq:tensor decomposition matrix} is unique in the following sense: there exists some $2^K \times 2^K$ permutation matrix $\Pi$ such that 
    \begin{align}\label{eq:kruskal consequence}
        \diag(\tilde{\pi}) = \diag(\pi) \Pi, \quad \tilde{T}_1  = T_1 \Pi, \quad \tilde{T}_2= T_2 \Pi, \quad \tilde{T}_3 = T_3 \Pi.
    \end{align}
    Note that the diagonal matrices $D_a$ in \cref{lem:kruskal} can be omitted since the last rows of $T_a$ and $\tilde{T}_a$ are all-one. 

    The conclusion \eqref{eq:kruskal consequence} implies
    $$\tilde{M}_j = M_j \Pi, \quad \forall j \in [J],$$
    where $\tilde{M}_j, M_j$ are $\kappa_j \times 2^K$ matrices that describe the conditional distribution $\PP_{j,h}$. For notational convenience, view the matrix $\Pi$ as a permutation mapping $\sigma_\Pi : \{0,1\}^K \to \{0,1\}^K$ that maps the column index $h$ to $\tilde{h} = \sigma_\Pi(h)$. As $\sigma_\Pi$ can be an arbitrary permutation, we still need to structure it so that $h$ and $\sigma_\Pi(h)$ are equivalent up to label permutation and sign-flipping.

    \emph{Step 2:} Identify the bipartite graph $\Gamma$.

    Using the triangular structure of the rows, we identify the $\tilde{\Gamma}$ matrix up to label permutation. Our main idea is to again utilize the triangular structure of $\Gamma_2$ to first determine the label permutation, and then identify each row of $\tilde{\Gamma}.$
    Recall that $S_2$ denotes the row indices that correspond to $\Gamma_2$ (assumed to be ordered as in \eqref{eq:triangular gamma}). Write $S_2 = \{j_1, \ldots, j_K\}$ so that the $j_k$-th row of $\Gamma$ is $\gamma_{j_k} = (\underbrace{*, \ldots, *}_{k-1}, 1, 0, \ldots, 0)$.

For each $1 \le k \le K$, we partition the set $\{0,1\}^K$ by clustering together indices $\tilde{h}$ that have identical columns in $\tilde{M}_{j_k}$. Denote this partition as $\mathcal{P}_k$. Also, define $\mathcal{Q}_k$ as the intersection of all partitions $\mathcal{P}_l$ for $l \le k$:
$$\mathcal{Q}_k := \cap_{l \le k} \mathcal{P}_l = \{\cap_{l \le k} P_l: P_l \in \mathcal{P}_l, ~~ \forall l \le k \}.$$
Our main ingredient is to show the following Lemma, which uniquely determines the label permutation $\tau:[K]\to[K]$ from the partition $\mathcal{Q}_k$. We postpone its proof to Appendix \ref{sec:pf of lemmas}.

\begin{lemma}\label{lem:identify gamma}
We claim the following: For any $k \le K$,
\begin{align}\label{eq:Q_k}
    \mathcal{Q}_k &= \{\{\sigma_\Pi(h_{1:k}, h'): h' \in \{0,1\}^{K-k}\}, ~~ \forall h_{1:k} \in \{0,1\}^k \} \\
    &=\{\{\tilde{h}: \tilde{h}_{\tau(1)} = c_1, \ldots, \tilde{h}_{\tau(k)} = c_k\}, ~~ \forall (c_1,\ldots,c_k) \in \{0,1\}^k \}
    \label{eq:Q_k alternative}
\end{align}
for some label permutation $\tau: [K] \to [K]$. Note that \eqref{eq:Q_k} implies $|\mathcal{Q}_k| = 2^k$, and $|P| = 2^{K-k}$ for any $P \in \mathcal{Q}_k$.
\end{lemma}

Now, given the label permutation $\tau$, we identify each row of $\tilde{\Gamma}$.
Fix any $j \in [J]$.
We use backwards induction on $k$ to show that $\tilde{\gamma}_{j,\tau(k)} = \gamma_{j,k}$ for all $k$. 
\begin{itemize}
\item We start with $k = K$. For any $P \in \mathcal{Q}_{K-1}$, \cref{lem:identify gamma} implies 
$$P = \{\sigma_\Pi(h_{1:K-1},0), \sigma_\Pi(h_{1:K-1},1)\} =  \{\tilde{h}: \tilde{h}_{\tau(1)} = c_1, \ldots, \tilde{h}_{\tau(K-1)} = c_{K-1} \}$$ 
for some $h_{1:K-1}, c_{1:K-1}.$
Note that $\gamma_{j,k} = 1$ if and only if the columns of $M_j$ indexed by $\sigma_\Pi^{(-1)}(P) = \{(h_{1:K-1},0), (h_{1:K-1},1)\}$ are distinct. By the above characterization, this is equivalent to the columns of $M_j$ indexed by $P$ being distinct, which is equivalent to $\tilde{\gamma}_{j,\tau(K)} = 1$. Hence, $\gamma_{j,K} = \tilde{\gamma}_{j,\tau(K)}$.

\item Now, for $k < K$, suppose that we know $\gamma_{j,l} = \tilde{\gamma}_{j,\tau(l)}$ for $l \ge k+1$. We determine $\tilde{\gamma}_{j,k}$ by focusing on any $P \in \mathcal{Q}_{k-1}$. 
Here, when $k = 1$, we write $\mathcal{Q}_{0} = \{[2^K]\}$ for notational convenience. Similar to the base case with $k = K$, the number of distinct columns indexed by $\sigma_\Pi^{(-1)}(P)$ in $M_j$ are equal to $2^{\sum_{l=k}^K {\gamma}_{j,l}}$. This value must be equal to the number of distinct columns indexed by $P$ in $\tilde{M}_j$. By the characterization of $P$ in \eqref{eq:Q_k alternative}, this value is equal to $2^{\sum_{l=k}^K \tilde{\gamma}_{j,\tau(l)}}$. Hence, 
$$\sum_{l=k}^K {\gamma}_{j,l} = \sum_{l=k}^K \tilde{\gamma}_{j,\tau(l)},$$
and we get $\gamma_{j,k} = \tilde{\gamma}_{j,\tau(k)}$ by the induction hypothesis.
\end{itemize}
Thus, we have shown the $k$-th column of $\Gamma$ is identical to the $\tau(k)$-th column of $\tilde{\Gamma}$.

\emph{Step 3:} Identify $\PP(H), \PP_{j,h},$ and the latent DAG $\Lambda$.

Now, we identify the remaining model components by additionally using the subset condition. Without the loss of generality, assume $\tau$ is the identity permutation on $[K]$, which gives $\Gamma = \tilde{\Gamma}.$ Consequently, we omit the $\tilde{\cdot}$ notation for $\Gamma$ and $H$.
Our main idea is to use the subset condition on $\Gamma$ to %
show that for each $k$, the set of columns indexed by $\tilde{h}_k = 0/1$ must match that with $h_k = 0/1$:
\begin{align}\label{eq:column index suffices}
    \big\{\{\tilde{h}: \tilde{h}_k = 0\}, \{\tilde{h}: \tilde{h}_k = 1\} \big\} = \big\{\{\sigma_\Pi (h): h_k=0\}, \{\sigma_\Pi (h): h_k=1\}\big\}.
\end{align}

Without loss of generality, let $k=1$. This is purely for notational convenience, and does not overlap with the indexing for the triangular $\Gamma$-matrix, which will be not used in this proof segment. 
Let $\mathcal{J}_1 := \{j \in [J]: \gamma_{j,1} = 1\}$ denote the row indices for the children of $H_1$.
For each $j \in \mathcal{J}_1$, we cluster $\{0,1\}^K$ based on identical columns $\tilde{M}_j$. Then, we merge the clusters by combining clusters that share common elements. By the subset condition on $\Gamma$, $(0, \tilde{h}_{(-1)})$ belongs in the same merged cluster for any $\tilde{h}_{(-1)} \in \{0,1\}^{K-1}$. To see this, observe that the subset condition implies that the column vector $\gamma_{\mathcal{J}_1, k}$ must include at least one zero, for all $k \neq 1$. Let $j_k'$ be a row index in $\mathcal{J}_1$ such that $\gamma_{j_k',k} = 0$. Then, $X_{j_k'}$ is not a child of $H_k$, and the all-zero vector $0_K$ and the basis vector $e_k = (0, \ldots, 1, \ldots, 0)$ (with 1 in the $k$th coordinate) must belong in the same cluster. By repeating this argument for each $k > 1$, we can conclude that $0_K$ and $(0, \tilde{h}_{(-1)})$ belong in the same cluster. Since for each $j \in \mathcal{J}_1$, $X_j$ measures the latent variable $H_1$, $0_K$ and $(1, \tilde{h}_{(-1)})$ can never be in the same cluster. 
Thus, after merging, there are two final clusters: $\{\tilde{h} : \tilde{h}_1 = 0\}$ and $\{\tilde{h}: \tilde{h}_1 = 1\}$.

This procedure inputs the matrices $\tilde{M}_j$ without any column indexing, and characterizes $\{\{\tilde{h} : \tilde{h}_k = 0\}, \{\tilde{h}: \tilde{h}_k = 1\}\}$ for all $k$. Recalling that $M_j = \tilde{M}_j \Pi^{-1} $ have an identical set of columns as $\tilde{M}_j$ and applying the same characterization to $M_j$, we get \eqref{eq:column index suffices}. Thus, the indexings $h$ and $\sigma_\Pi(h)$ are identical up to a potential swapping of the meaning of $H_k = 0/1$. Without loss of generality, ignore the sign-flipping and assume $\{\tilde{h}: \tilde{h}_k = 0\} = \{\sigma_\Pi({h}): {h}_k = 0\}$ and $\{\tilde{h}: \tilde{h}_k = 1\} = \{\sigma_\Pi({h}): {h}_k = 1\}$ for all $k$.

Now, given a matching column indexing, $\pi_h = \tilde{\pi}_h$ follows from eq. \eqref{eq:kruskal consequence}. Having identified the latent proportion $\PP(H)$, $\Lambda$ can also be identified up to Markov equivalence. This is the consequence of the faithfulness \cref{assmp:causal markov}, and is a well-known property alongside many existing algorithms \citep{pearl2014probabilistic, spirtes2001causation}. To this extent, it is possible to replace the faithfulness assumption to others that can recover the DAG $\Lambda$ from the pmf.

Finally, it remains to identify the conditional distributions $\PP_{j,h}$ from the discretized values. As the above argument can be applied for any discretization $(C_{1,j}, \ldots, C_{\kappa_j,j})$, we have 
$$\PP_{j,h}(C_j) = \tilde{\PP}_{j,h}(C_j), \quad \forall C_j \in \mathcal{C}_j.$$
Note that all discretizations are subject to the same ambiguities (label permutation and sign flipping) since the alternative model $\tilde{\PP}$ does not depend on the specific discretization.
This implies $\PP_{j,h} = \tilde{\PP}_{j,h}$, since \cref{lem:separating class} states that $\mathcal{C}_j$ is a separating class that uniquely defines a probability measure on $\mathcal{X}_j$. Hence, all model components can be identified.
\end{proof}

\begin{proof}[Proof of \cref{prop:identifiability monotone}]
    Recall that $C_j$ denotes a baseline set such that $\PP_{j,h}(C_j) > \PP_{j,h'}(C_j)$ for any $h \succ h'$.
    Without the loss of generality, assume that $C_j \in \mathcal{C}_j = \{C_{1,j}, \ldots, C_{\kappa_j,j})$, and index it as $C_j = C_{1,j}$.
    Steps 1, 2 in the proof of \cref{thm:identifiability} still hold, and it suffices to modify step 3. We proceed via induction, and show that for each $k \in [K]$, the column indices corresponding to $\{h: {h}_k=0\}$ and $\{h: {h}_k=1\}$ can be recovered from the matrix ${M}_{j_k}$ (without using the column indexing of $M_{j_k}$). Recall from step 2 of the proof of \cref{thm:identifiability} that $\{j_1, \ldots, j_K\}$ denotes the row indices corresponding to the triangular structure $\Gamma_2$, and that $\mathcal{P}_k$ denotes the partition of the column indices $\{0,1\}^K$ based on identical columns of $M_{j_k}$.

    First, for $k = 1$, the monotonicity conditions alongside $\gamma_{j_1} = (1, 0,\ldots, 0)$ imply 
    $${M}_{j_1}\Big(1, (1,h_{(-1)})\Big) = \PP_{{j_1},(1,h_{(-1)})}(C_{j_1}) > \PP_{{j_1},(0,h_{(-1)})}(C_{j_1}) = {M}_{j_1}\Big(1, (0,h_{(-1)})\Big),$$
    for any $h_{(-1)} \in \{0,1\}^{K-1}.$ Then, $\{h: {h}_1=0\}$ corresponds to the column indices of $M_{j_1}$ with a smaller value of $M_{j_1}(1, \cdot)$.
    
    Now, for any $2 \le k \le K-1$, assume that the claim holds for $l \le k$. We show that we can recover the set of indices $\{h: {h}_{k+1}=0\}$ and $\{h: {h}_{k+1}=1\}$ from $M_{j_{k+1}}$. By the induction hypothesis, we are able to map each partition $\mathcal{P}_k$ from step 2 of \cref{thm:identifiability} to each $h_{1:k} \in \{0,1\}^k$.
    For any $h_{1:k} \in \{0,1\}^k$ and $s \in \{0,1\}^{K-k-1}$, the monotonicity assumption gives
    $$M_{j_{k+1}}\Big(1, (h_{1:k}, 1, s)\Big) = \PP_{j_{k+1},(h_{1:k}, 1, s)}(C_{j_{k+1}}) > \PP_{j_{k+1},(h_{1:k}, 0, s)}(C_{j_{k+1}}) = M_{j_{k+1}}\Big(1, (h_{1:k}, 0, s)\Big).$$
    Thus, for any $P \in \mathcal{Q}_k = \cap_{l \le k} \mathcal{P}_l$ (that corresponds to $h_{1:k}$ in the sense of \eqref{eq:Q_k}), we can further partition it into two sets based on the value of $M_{j_{k+1}}(1,h)$ for $h \in \mathcal{P}$. The set with the larger probability must correspond to $(h_{1:k}, 1)$, and we denote it as $Q_{(h_{1:k}, 1)}$.
    Now, the set of column indices corresponding to $h_{k+1} = 1$ can be recovered as $\cup_{h_{1:k}} Q_{(h_{1:k}, 1)}$, and the induction is complete.
    
    Given the column indexing, we again have $\pi_h = \PP(H=h) = \tilde{\PP}(\tilde{H}=h) = \tilde{\pi}_h$ and the same argument in step 3 of \cref{thm:identifiability} completes the proof. Note that while \cref{thm:identifiability} has determined the column indexing up to sign-flipping, this proof does not suffer from sign-flipping.
\end{proof}

\subsection{Proof of Theorems \ref{thm:necessary} and \ref{thm:subset necessary}}
We prove \cref{thm:necessary} under a generalized BLCM, where $H = (H_1, \ldots, H_k)$ are \emph{categorical} latent variables (not necessarily binary). For each $k$, assume $H_k \in \Omega_k$, where $2 \le |\Omega_k| < \infty$. Our key observation is the following non-identifiability result for latent class models with two observed variables.

\begin{lemma}[Non-identifiability of LCMs with $K=1, J=2$]\label{lem:product mixture nonidentifiable}
    Consider a latent class model (LCM) with one latent categorical variable $H_1 \in \Omega_1$ and two observed categorical variables $X_1, X_2$ which are conditionally independent given $H_1$. Given the number of latent classes $|\Omega_1|$, the model parameters $\theta := \big(\PP(H_1), \PP(X_1 \mid H_1), \PP(X_2 \mid H_1)\big)$ are not even locally identifiable in the sense that for any $\epsilon > 0$ and true parameter $\theta_0$ that satisfy \cref{assmp:nondegenerate}, there exists an alternative parameter $\theta_1$ such that $\|\theta_0-\theta_1\| < \epsilon$.
\end{lemma}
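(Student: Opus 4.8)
The plan is to exhibit an explicit one-parameter family of alternative parameters passing through $\theta_0$, thereby showing local (in fact, global) non-identifiability. Write $L := |\Omega_1|$, and denote the true parameters by $\pi^0_\ell := \PP(H_1 = \ell)$, $a^0_\ell := \PP(X_1 \mid H_1 = \ell) \in \mathbb{R}^{|\mathcal{X}_1|}$, $b^0_\ell := \PP(X_2 \mid H_1 = \ell) \in \mathbb{R}^{|\mathcal{X}_2|}$, for $\ell \in \Omega_1$. The joint observed distribution is the matrix (or equivalently the sum of rank-one terms)
\begin{align*}
    \PP(X_1, X_2) = \sum_{\ell \in \Omega_1} \pi^0_\ell \, a^0_\ell (b^0_\ell)^\top = A^0 \diag(\pi^0) (B^0)^\top,
\end{align*}
where $A^0, B^0$ have columns $a^0_\ell, b^0_\ell$. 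The key observation is that this expression is invariant under a large class of transformations: for any invertible matrix $R$ with appropriate normalization, one can absorb $R$ into $A^0$ and $R^{-1}$ into $(B^0)^\top$. Concretely, I would pick two distinct indices, say $\ell = 1, 2$, and introduce the $L \times L$ matrix $R_t$ acting nontrivially only on the $\{1,2\}$-block as a small rotation/shear by parameter $t$ (with $R_0 = I_L$), chosen so that (i) the new mixing weights $\pi^1 := $ (appropriate rescaling) remain a valid probability vector summing to one, (ii) the new columns $a^1_\ell, b^1_\ell$ remain valid probability vectors, and (iii) $A^1 \diag(\pi^1)(B^1)^\top = A^0\diag(\pi^0)(B^0)^\top$ exactly. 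Since $\theta_0$ satisfies Assumption \ref{assmp:nondegenerate}, the columns $a^0_1 \neq a^0_2$ and $b^0_1 \neq b^0_2$ and $\pi^0_1, \pi^0_2 > 0$, which gives the slack needed to perturb in a full neighborhood of $t = 0$ while preserving the simplex constraints.

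The cleanest concrete construction I would use: fix the new proportions $\pi^1_\ell = \pi^0_\ell$ for $\ell \geq 3$, keep $b^1_\ell = b^0_\ell$ for all $\ell$, keep $a^1_\ell = a^0_\ell$ for $\ell \geq 3$, and on the $\{1,2\}$ block set
\begin{align*}
    a^1_1 &= a^0_1 + t\,(a^0_1 - a^0_2), \qquad a^1_2 = a^0_2 - s(t)\,(a^0_1 - a^0_2),
\end{align*}
with $\pi^1_1, \pi^1_2$ and the scalar $s(t)$ solved so that $\pi^1_1 a^1_1 + \pi^1_2 a^1_2 = \pi^0_1 a^0_1 + \pi^0_2 a^0_2$ and $\pi^1_1 + \pi^1_2 = \pi^0_1 + \pi^0_2$; this is two linear equations that determine $\pi^1_1, \pi^1_2$ as explicit rational functions of $t$ with value $(\pi^0_1, \pi^0_2)$ at $t = 0$. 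Because $b^1_1 = b^0_1$ and $b^1_2 = b^0_2$ are unchanged while $\pi^1_1 a^1_1 + \pi^1_2 a^1_2 = \pi^0_1 a^0_1 + \pi^0_2 a^0_2$, the rank-one decomposition is unchanged term-by-term in the sense that $\sum_\ell \pi^1_\ell a^1_\ell (b^1_\ell)^\top = \sum_\ell \pi^0_\ell a^0_\ell (b^0_\ell)^\top$ --- wait, this needs $b^0_1 = b^0_2$, which fails. So instead I would symmetrize: also perturb the $b$'s analogously, $b^1_1 = b^0_1 + t(b^0_1 - b^0_2)$ etc., and verify the cross terms cancel; alternatively, and more robustly, I would invoke the classical fact that a generic $m \times n$ matrix of rank $L$ (here $\PP(X_1,X_2)$, which has rank at most $\min(|\mathcal{X}_1|,|\mathcal{X}_2|,L)$) admits a continuum of nonnegative rank-$L$ factorizations whenever it has a strictly positive entry in the relevant cone, which is exactly the nondegeneracy situation. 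For a fully self-contained argument I would simply count: the parameter $\theta$ lives in a space of dimension $(L-1) + L(|\mathcal{X}_1|-1) + L(|\mathcal{X}_2|-1)$, the observed distribution $\PP(X_1,X_2)$ has only $|\mathcal{X}_1||\mathcal{X}_2| - 1$ free entries, and the map $\theta \mapsto \PP(X_1,X_2)$ has Jacobian of rank strictly less than $\dim(\theta)$ at any $\theta_0$ satisfying Assumption \ref{assmp:nondegenerate} — hence its fibers are positive-dimensional locally, giving the desired $\theta_1$ within distance $\epsilon$.

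The main obstacle is maintaining the simplex and nonnegativity constraints while perturbing: the transformation-invariance argument is trivial over $\mathbb{R}$ but the new columns and weights must stay valid probability distributions. This is where Assumption \ref{assmp:nondegenerate}(a)–(b) is essential — strict positivity of $\pi^0$ and distinctness (hence a genuine gap, not just inequality, once we fix a finite sample space) of the conditional vectors provides an open neighborhood of $t=0$ in which all constraints hold strictly. I expect the write-up to spend most of its effort checking this feasibility, via a continuity/open-map argument: the constraint set is open around $\theta_0$ and the perturbation $t \mapsto \theta_1(t)$ is continuous with $\theta_1(0) = \theta_0$, so for $|t|$ small enough $\theta_1(t)$ is a valid BLCM parameter with $\|\theta_1(t) - \theta_0\| < \epsilon$ and identical observed law, completing the proof. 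A secondary subtlety is ensuring $\theta_1 \neq \theta_0$, i.e. that the perturbation direction is not itself a pure label-permutation/relabeling symmetry; choosing the perturbation to change $\pi_1$ (while $L \geq 2$ and the $a_\ell$ are distinct) rules this out.
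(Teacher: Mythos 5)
Your proposal circles the right intuition --- the gauge freedom in the factorization $A\,\diag(\pi)\,B^\top$ is indeed the source of non-identifiability, and an openness/continuity argument is indeed what handles the simplex constraints --- but as written it has genuine gaps, and it is in any case a different route from the paper's. The paper does not construct an explicit perturbation at all: it invokes the classical criterion that an LCM is locally identifiable if and only if the Jacobian of the likelihood map has full column rank \citep[Theorem~1 of][]{ouyang2022identifiability}, and then exhibits an \emph{explicit linear dependence} among the Jacobian's columns (equation \eqref{eq:lcm}), valid at every nondegenerate $\theta$, by direct computation of all partial derivatives. That identity is the entire content of the proof.

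The concrete problems with your version. (1) Your final fallback, the parameter count comparing $\dim(\theta)=(L-1)+L(|\mathcal{X}_1|-1)+L(|\mathcal{X}_2|-1)$ against $|\mathcal{X}_1||\mathcal{X}_2|-1$, does not establish anything: for $L=2$ and $|\mathcal{X}_1|=|\mathcal{X}_2|=10$ you get $37<99$, so the observed distribution has far \emph{more} degrees of freedom than the parameter. The relevant comparison is to the dimension of the variety of rank-$\le L$ matrices, $L(|\mathcal{X}_1|+|\mathcal{X}_2|-L)-1$, which does exceed nothing automatically --- the rank deficiency of the Jacobian is exactly the claim that must be \emph{proved}, and your count does not prove it. (2) Even granting rank deficiency at $\theta_0$, concluding that the fiber is positive-dimensional requires the Goodman/Ouyang equivalence (or a constant-rank argument); a single rank-deficient point does not by itself yield a curve in the fiber. (3) Your explicit perturbation is not carried out: you correctly observe that your first construction fails, and the ``symmetrize and verify the cross terms cancel'' step is not verified --- and it is not automatic. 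The correct gauge action is $A\mapsto AR$, $\pi\mapsto R^{-1}\pi$, $B^\top\mapsto \diag(R^{-1}\pi)^{-1}R^{-1}\diag(\pi)B^\top$ for $R$ near $I$ with unit column sums; note that $R^{-1}$ necessarily introduces \emph{negative} mixing coefficients into the transformed columns of $B$, so nonnegativity of the new conditionals requires strict interiority of $\theta_0$ (a point your openness argument silently assumes but Assumption~\ref{assmp:nondegenerate} does not literally grant). If you fix these three points, the gauge-action construction would give a legitimately more elementary, self-contained proof than the paper's, avoiding the appeal to the local-identifiability criterion; as it stands, the proposal does not yet constitute a proof.
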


While this result may appear familiar, we were unable to find a solid reference and we provide a proof in \cref{sec:pf of lemmas}. A related result dates back to the study of latent class models in 1950s where a set of \emph{sufficient} conditions for local identifiability were provided \citep{mchugh1956efficient, goodman1974exploratory}. More recently, sufficient conditions for generic identifiability were proposed \citep{allman2009}. However, these results do not imply the impossibility claim in \cref{lem:product mixture nonidentifiable}.

\begin{proof}[Proof of \cref{thm:necessary}]
We show that three observed children per  latent variable is necessary for identifiability.
It suffices to consider the case when there exists some latent variable $H_1$ that has exactly two observed children $X_1, X_2$.
We prove non-identifiability by showing that there exists alternative model parameters that define the same pmf of $\PP(X)$.

First, we claim that it suffices to construct alternative parameters that define the same $\PP(X_1, X_2 \mid H_{(-1)})$.
To see this, write the marginal likelihood as
\begin{align*}
    \PP(X) &= \sum_{h =(h_1, h_{(-1)})} \PP(H = h) \PP(X_1, X_2 \mid H=h) \prod_{j \ge 3} \PP(X_j \mid H_{(-1)} = h_{(-1)}) \\
    &=\sum_{h_{(-1)}} \Big[\sum_{h_1} \PP(X_1, X_2 \mid H = h) \PP(H_1 \mid H_{(-1)}) \Big]\PP(H_{(-1)} = h_{(-1)}) \prod_{j \ge 3} \PP(X_j \mid H_{(-1)} = h_{(-1)}) \\
    &= \sum_{h_{(-1)}}  \PP(X_1, X_2 \mid H_{(-1)} = h_{(-1)}) \PP(H_{(-1)} = h_{(-1)}) \prod_{j \ge 3} \PP(X_j \mid H_{(-1)} = h_{(-1)}).
\end{align*}
The last equality follows by writing the summand inside the square bracket in the second row above as
\begin{align*}
    &\sum_{h_1 \in \Omega_1} \PP(X_1, X_2 \mid H = h) \PP(H_1 = h_1 \mid H_{(-1)} = h_{(-1)}) \\
    =& \sum_{h_1 \in \Omega_1} \PP(X_1, X_2, H_1 = h_1 \mid H_{(-1)} = h_{(-1)}) \\
    =&  \PP(X_1, X_2 \mid H_{(-1)} = h_{(-1)}).
\end{align*}
Thus, any alternative model with the same value of $\PP(X_1, X_2 \mid H_{(-1)})$, $\PP(H_{(-1)})$, and $\PP(X_j \mid H_{(-1)})$ for $j \ge 3$ defines the same marginal distribution of $X$.

For each configuration $h_{(-1)} \in \prod_{k=2}^K \Omega_k$, \cref{lem:product mixture nonidentifiable} shows that we can further specify a distinct set of parameters 
$$\tilde{\PP}(H_1 \mid H_{(-1)} = h_{(-1)}), ~ \tilde{\PP}(X_1 \mid H_1, H_{(-1)} = h_{(-1)}), ~ \tilde{\PP}(X_2 \mid H_1, H_{(-1)} = h_{(-1)})$$ that define the same value of 
$\PP(X_1, X_2 \mid H_{(-1)} = h_{(-1)}) = \tilde{\PP}(X_1, X_2 \mid H_{(-1)} = h_{(-1)})$. Hence, there exists an alternative set of parameters, and the model is non-identifiable. %

Note that the above argument immediately implies non-identifiability of both the graph $E = (\Gamma, \Lambda)$ as well as the conditional distributions $\PP(X_1 \mid H), \PP(X_2 \mid H)$.
\end{proof}

\begin{proof}[Proof of \cref{thm:subset necessary}]
    Suppose there exists some $k \neq l \in [K]$ such that $\gamma_k \succeq \gamma_l$. Without loss of generality, let $k=1, l=2$.
    Denoting the true latent pmf as $\pi$, one cannot distinguish the values $(\pi_{(0,0,h_{3:K})}, \pi_{(0,1,h_{3:K})})$ and $(\pi_{(1,0,h_{3:K})}, \pi_{(1,1,h_{3:K})})$ up to a common sign flip of $H_2$, for each realization $h_{3:K} \in \{0,1\}^{K-2}$. To see this, for any $h_{3:K}$, define alternative proportion parameters
    \begin{align*}
        &\tilde{\pi}_{(0,0,h_{3:K})} := \pi_{(0,0,h_{3:K})}, \quad \tilde{\pi}_{(0,1,h_{3:K})} := \pi_{(0,1,h_{3:K})}, \\
        &\tilde{\pi}_{(1,0,h_{3:K})} := \pi_{(1,1,h_{3:K})}, \quad \tilde{\pi}_{(1,1,h_{3:K})} := \pi_{(1,0,h_{3:K})},
    \end{align*}
    and alternative conditional distributions with
    \begin{align*}
        &\tilde{\PP}_{j,(0,0,h_{3:K})} = \PP_{j,(0,0,h_{3:K})}, \quad \tilde{\PP}_{j,(0,1,h_{3:K})} = \PP_{j,(0,1,h_{3:K})}, \\
        &\tilde{\PP}_{j,(1,0,h_{3:K})} = \PP_{j,(1,1,h_{3:K})}, \quad \tilde{\PP}_{j,(1,1,h_{3:K})} = \PP_{j,(1,0,h_{3:K})}, \quad \forall j \in [J].
    \end{align*}
    The observed distribution under the alternative parameters is identical since
    \begin{align*}
        \PP(X) = \sum_{h} \pi_h \prod_{j=1}^J \PP_{j,h} (X_j) = \sum_{h} \tilde{\pi}_h \prod_{j=1}^J \tilde{\PP}_{j,h} (X_j) = \tilde{\PP}(X).
    \end{align*}
    Now, it suffices to check that the alternative conditional distributions $\tilde{\PP}_{j,\cdot}$ are faithful to the graphical structure in $\Gamma$. To see this, first consider any row-vector $\gamma_j$ with $\gamma_{j,2} = 0$. Then, $H_2$ is not measured and the conditional distributions under $\tilde{\PP}$ are identical to that under $\PP$:
     \begin{align*}
        \tilde{\PP}_{j,(0,0,h_{3:K})} = \tilde{\PP}_{j,(0,1,h_{3:K})} = \PP_{j,(0,\cdot,h_{3:K})}, \\
        \tilde{\PP}_{j,(1,0,h_{3:K})} = \tilde{\PP}_{j,(1,1,h_{3:K})} = \PP_{j,(1,\cdot,h_{3:K})}.
    \end{align*}
    Next, suppose that $\gamma_{j,2} = 1$. Then, by the assumption $\gamma_1 \succeq \gamma_2$, we also have $\gamma_{j,1} = 1$. Since the nondegeneracy \cref{assmp:nondegenerate} implies that the four conditional distributions in the above display are all distinct, the distribution $\tilde{\PP}_{j,\cdot}$ is faithful to the row-vector $\gamma_j$.

    We note that the above construction generalizes the counterexample in Appendix C.3 of \cite{kivva2021learning}, which focuses on the specific instance of $K=2, J=2$.
\end{proof}

\section{Proof of lemmas}\label{sec:pf of lemmas}

\begin{proof}[Proof of \cref{lem:full rank}]
    Recall the lower-triangular assumption on the first $K$ rows of $\Gamma$.
    Parametrize each conditional distribution by setting 
    $$\theta_{k, (h_1, \ldots, h_k)} := \PP(X_k = 1 \mid H_{1:k} = (h_1, \ldots, h_k)) \in [0,1].$$
    
    We proceed via an induction on the latent dimension $K$. The claim is trivial for $K=1$. For any $K \ge 2$, suppose that the claim holds when there are $K-1$ latent variables. Define a $2^K \times 2^K$ matrix $T$ as:
    $$T(x, h) = \PP(X \succeq x \mid H = h), \quad \forall x, h\in \{0,1\}^K.$$
    Without the loss of generality, suppose that the row/columns of $T$ are indexed based on the increasing order of $\sum_{k=1}^K x_k 2^{k-1}$ / $\sum_{k=1}^K h_k 2^{k-1}$ (from top to bottom/left to right).
    For example, when $K=3$, the row/columns are indexed as follows:
    $$(0,0,0) < (1,0,0) < (0,1,0) < (1,1,0) < (0,0,1) < (1,0,1) < (0,1,1) < (1,1,1).$$
    Additionally, write
    $$T = \begin{pmatrix}
        T_{00} & T_{01} \\
        T_{10} & T_{11}
    \end{pmatrix},$$
    where each $T_{ab}$ are $2^{K-1} \times 2^{K-1}$ matrices.
    See the below table for an example expression of $T$ when $K=3$ (where each block corresponds to $T_{00}, T_{10}, T_{01}, T_{11}$):

\begin{table}[h!]
    \centering
    \begin{tabular}{cc|cccc}
    & $x \setminus h$ & (0,0,0) & (1,0,0) & (0,1,0) & (1,1,0) \\
    \cmidrule{2-6}
    \multirow{4}{*}{$T_{00} =$} & $(0,0,0)$ & 1 & 1 & 1 & 1 \\
    & $(1,0,0)$ & $\theta_{1,0}$ & $\theta_{1,1}$ & $\theta_{1,0}$ & $\theta_{1,1}$ \\ 
    & $(0, 1, 0)$ & $\theta_{2,00}$ & $\theta_{2,10}$ & $\theta_{2,01}$ & $\theta_{2,11}$ \\ 
    & $(1,1,0)$ & $\theta_{1,0}\theta_{2,00}$ & $\theta_{1,1} \theta_{2,10}$ & $\theta_{1,0} \theta_{2,01}$ & $\theta_{1,1} \theta_{2,11}$ \\
    \cmidrule{2-6}
    \multirow{4}{*}{$T_{10} =$} & $(0, 0, 1)$ & $\theta_{3,000}$ & $\theta_{3,100}$ & $\theta_{3,010}$ & $\theta_{3,110}$ \\
    & $(1, 0, 1)$ & $\theta_{1,0} \theta_{3,000}$ & $\theta_{1,1} \theta_{3,100}$ & $\theta_{1,0} \theta_{3,010}$ & $\theta_{1,1} \theta_{3,110}$ \\
    & $(0, 1, 1)$ & $\theta_{2,00} \theta_{3,000}$ & $\theta_{2,10} \theta_{3,100}$ & $\theta_{2,01} \theta_{3,010}$ & $\theta_{2,11} \theta_{3,110}$ \\
    & $(1, 1, 1)$ & $\theta_{1,0} \theta_{2,00} \theta_{3,000}$ & $\theta_{1,1} \theta_{2,10} \theta_{3,100}$ & $\theta_{1,0} \theta_{2,01} \theta_{3,010}$ & $\theta_{1,1} \theta_{2,11} \theta_{3,110}$ \\
    \cmidrule{2-6}
    \addlinespace
    & $x \setminus h$ & (0,0,1) & (1,0,1) & (0,1,1) & (1,1,1) \\ %
    \cmidrule{2-6}
    \multirow{4}{*}{$T_{01} =$} & $(0,0,0)$ & 1 & 1 & 1 & 1 \\
    & $(1,0,0)$ & $\theta_{1,0}$ & $\theta_{1,1}$ & $\theta_{1,0}$ & $\theta_{1,1}$ \\ 
    & $(0, 1, 0)$ & $\theta_{2,00}$ & $\theta_{2,10}$ & $\theta_{2,01}$ & $\theta_{2,11}$ \\ 
    & $(1,1,0)$ & $\theta_{1,0}\theta_{2,00}$ & $\theta_{1,1} \theta_{2,10}$ & $\theta_{1,0} \theta_{2,01}$ & $\theta_{1,1} \theta_{2,11}$ \\
    \cmidrule{2-6}
    \multirow{4}{*}{$T_{11} =$} & $(0, 0, 1)$  & $\theta_{3,001}$ & $\theta_{3,101}$ & $\theta_{3,011}$ & $\theta_{3,111}$ \\
    & $(1, 0, 1)$ & $\theta_{1,0} \theta_{3,001}$ & $\theta_{1,1} \theta_{3,101}$ & $\theta_{1,0} \theta_{3,011}$ & $\theta_{1,1} \theta_{3,111}$ \\
    & $(0, 1, 1)$ & $\theta_{2,00} \theta_{3,001}$ & $\theta_{2,10} \theta_{3,101}$ & $\theta_{2,01} \theta_{3,011}$ & $\theta_{2,11} \theta_{3,111}$ \\
    & $(1, 1, 1)$ & $\theta_{1,0} \theta_{2,00} \theta_{3,001}$ & $\theta_{1,1} \theta_{2,10} \theta_{3,101}$ & $\theta_{1,0} \theta_{2,01} \theta_{3,011}$ & $\theta_{1,1} \theta_{2,11} \theta_{3,111}$ \\
    \cmidrule{2-6}
    \end{tabular}
\end{table}

    \noindent For any $x_{1:K-1}$ and $h_{1:K-1}$, the lower-triangular structure implies
    \begin{align*}
        T_{00}\big(x_{1:K-1}, h_{1:K-1} \big) &= \PP \big(X_1 \succeq x_1, \ldots, X_{K-1} \succeq x_{K-1} \mid H_{1:K-1} = h_{1:K-1}, H_K = 0 \big) \\
        &= \PP \big(X_1 \succeq x_1, \ldots, X_{K-1} \succeq x_{K-1} \mid H_{1:K-1} = h_{1:K-1}\big) \\
        &= \PP \big(X_1 \succeq x_1, \ldots, X_{K-1} \succeq x_{K-1} \mid H_{1:K-1} = h_{1:K-1}, H_K = 1 \big) \\
        &= T_{01}\big(x_{1:K-1}, h_{1:K-1} \big),
    \end{align*}
    and we have $T_{00} = T_{01}$. 
    
    Now, note that $T$ can be obtained from the conditional probability matrix $\PP(X_{1:K} \mid H)$ via elementary row operations. Also, by elementary column operations, $T$ can be reduced to
    $$\bar{T} := \begin{pmatrix}
        T_{00} & 0 \\
        T_{10} & T_{11}-T_{10}.
    \end{pmatrix}.$$
    Hence, we have
    $$\rk(\PP(X_{1:K} \mid H)) = \rk(T) = \rk(\bar{T}),$$
    and by the property of determinant for block matrices, it suffices to show that $$\det(\bar{T}) = \det(T_{00}) \det(T_{11}-T_{10}) \neq 0.$$

    For notational simplicity, for each $h_{(-1)} = h_{1:K-1} \in \{0,1\}^{K-1}$, define $$\eta_{h_{(-1)}} := \theta_{K,(h_{(-1)},1)} - \theta_{K,(h_{(-1)},0)}\neq 0.$$
    The last inequality is a consequence of \cref{assmp:nondegenerate}. 
    Noting that
    \begin{align*}
        T_{11}\big(x_{1:K-1}, h_{(-1)} \big) &= \PP \big(X_1 \succeq x_1, \ldots, X_{K-1} \succeq x_{K-1}, X_{K} = 1 \mid h_{(-1)} = h_{(-1)}, H_K = 1 \big) \\
        &= T_{00}\big(x_{1:K-1}, h_{(-1)} \big) \theta_{K,(h_{(-1)},1)}, \\
        T_{10}\big(x_{1:K-1}, h_{(-1)} \big) &= \PP \big(X_1 \succeq x_1, \ldots, X_{K-1} \succeq x_{K-1}, X_{K} = 1 \mid h_{(-1)} = h_{(-1)}, H_K = 0 \big) \\
        &= T_{00}\big(x_{1:K-1}, h_{(-1)} \big) \theta_{K,(h_{(-1)},0)},
    \end{align*}
    we have 
    $$(T_{11}-T_{10})\big(x_{1:K-1}, h_{(-1)} \big) = T_{00}\big(x_{1:K-1}, h_{(-1)} \big) \eta_{(h_{(-1)})}.$$
    In other words, each column of $T_{11}-T_{10}$ is equal to the corresponding column of $T_{00}$ times $\eta_{h_{(-1)}}$. For example, in our running example with $K = 3$, $T_{11}-T_{10}$ can be written as:
    \begin{table}[h!]
        \centering
        \begin{tabular}{cc|cccc}
            & $x \setminus h$ & (0,0,1) & (1,0,1) & (0,1,1) & (1,1,1) \\ %
            \cmidrule{2-6}
            \multirow{4}{*}{$T_{11}-T_{10} =$} & $(0,0,1)$ & $\eta_{00}$ & $\eta_{10}$ & $\eta_{01}$ & $\eta_{11}$ \\
            & $(1, 0, 1)$ & $\theta_{1,0} \eta_{00}$ & $\theta_{1,1} \eta_{10}$ & $\theta_{1,0} \eta_{01}$ & $\theta_{1,1} \eta_{11}$ \\
            & $(0, 1, 1)$ & $\theta_{2,00} \eta_{00}$ & $\theta_{2,10} \eta_{10}$ & $\theta_{2,01} \eta_{01}$ & $\theta_{2,11} \eta_{11}$ \\
            & $(1, 1, 1)$ & $\theta_{1,0} \theta_{2,00} \eta_{00}$ & $\theta_{1,1} \theta_{2,10} \eta_{10}$ & $\theta_{1,0} \theta_{2,01} \eta_{01}$ & $\theta_{1,1} \theta_{2,11} \eta_{11}$ \\
            \cmidrule{2-6}
        \end{tabular}
    \end{table}
    
    Thus, $$\det(T_{11}-T_{10}) = \det(T_{00}) \prod_{h_{(-1)} \in \{0,1\}^{K-1}} \eta_{h_{(-1)}},$$
    and, we have 
    $$\det(\bar{T}) = \det(T_{00}) \det(T_{11}-T_{10}) = \det(T_{00})^2 \prod_{h_{(-1)} \in \{0,1\}^{K-1}} \eta_{h_{(-1)}} \neq 0.$$
    Here, the inequality follows from the induction hypothesis and the observation that $\eta_{h_{(-1)}} \neq 0$.
\end{proof}

\begin{proof}[Proof of \cref{lem:identify gamma}]
We separately show \eqref{eq:Q_k} and \eqref{eq:Q_k alternative}. First, to show \eqref{eq:Q_k}, note that the $h$-th column of $M_j$ (denoted as $M_j(:, h)$) is identical to the $\sigma_\Pi(h)$-th column of $\tilde{M}_j$. Thus, each $P \in \mathcal{Q}_k$ can be represented as $P = \{\sigma_\Pi(h): M_{j_k}(:, h) = M_{j_k}(:, h_0) \}$ for some $h_0$. Now, the simplification in \eqref{eq:Q_k} follows from recalling the form of $\gamma_{j,k}$ alongside part (b) of \cref{assmp:nondegenerate}.

The second equality in \eqref{eq:Q_k alternative} follows by induction on $k$. We inductively show that for each $k \le K$, \eqref{eq:Q_k alternative} holds. %

\begin{itemize}
    \item Suppose $k=1$. As the corresponding row of $\Gamma_2$ is $\gamma_{j_1} = (1, 0, \ldots, 0)$, we have $|\mathcal{Q}_1| = |\mathcal{P}_1| =2$. Then, the nondegeneracy \cref{assmp:nondegenerate} gives $\sum_{l=1}^K \tilde{\gamma}_{j_1,l} = 1.$ In other words, $\tilde{\gamma}_{j_1}$ is a standard basis vector $e_l$. Let $\tau(1)$ be this $l$. Then, \eqref{eq:Q_k alternative} holds by the definition of $\mathcal{P}_1$.
    
    \item Now, for each $1 \le k < K$, suppose that \eqref{eq:Q_k alternative} holds. By the expression in \eqref{eq:Q_k}, $\mathcal{Q}_{k+1} \not\subseteq \mathcal{Q}_{k}$. So, $\mathcal{P}_{k+1} \not\subseteq \mathcal{Q}_k$, and there must be some $l \neq \tau(1), \ldots, \tau(k)$ such that $\tilde{\gamma}_{j_{k+1}, l} \neq 0$. Suppose that there exist $l_1, l_2 \neq \tau(1), \ldots, \tau(k)$ with $\tilde{\gamma}_{j_{k+1}, l_1} = \tilde{\gamma}_{j_{k+1}, l_2} = 1$. This implies that each element in $\mathcal{P}_{k+1}$ must have identical values of $\tilde{h}_{l_1}, \tilde{h}_{l_2}$. But, recalling the induction hypothesis that any $P \in \mathcal{Q}_k$ must be written as $P = \{\tilde{h}: \tilde{h}_{\tau(1)}=c_1, \ldots, \tilde{h}_{\tau(k)}=c_k\}$ for some $c_{1:k}$, we have $$2^{k+1} = |\mathcal{Q}_{k+1}| = |\mathcal{Q}_k \cap \mathcal{P}_{k+1}| \ge 2^{k} \times 2^2,$$
    and hence contradiction. Thus, we must have exactly one $l \neq \tau(1), \ldots, \tau(k)$ with $\tilde{\gamma}_{j_{k+1}, l} =1$. Define $\tau(k+1)$ to be this $l$. Then, \eqref{eq:Q_k alternative} follows since $\mathcal{Q}_{k+1} = \mathcal{Q}_k \cap \mathcal{P}_{k+1}$.

\end{itemize}
\end{proof}

\begin{proof}[Proof of \cref{lem:product mixture nonidentifiable}]
    For notational convenience, let $L := |\Omega|$ be the number of latent classes and write $\Omega = [L], \mcx_1 = [M], \mcx_2 = [N]$. 
    For each $l \in \Omega, m \in \mcx_1, n \in \mcx_2$, define 
    $$p_l := \PP(H_1 = l), ~q_{m,l} := \PP(X_1 = m \mid H_1 = l), ~ r_{n,l} := \PP(X_2 = n \mid H_1 = l).$$
    Then, we can write the model parameter as $$\theta = ((p_l : l \le L-1), (q_{m,l}: m \le M-1, l \le L), (r_{n,l}: n \le N-1, l \le L)) \in (0,1)^{(M+N-1)L-1},$$ where we ignore redundant parameters $p_L, (q_{M,l}: l \in [L]), (r_{N,l}: l \in [L])$ that arise from probability constraints (such as $\sum_{l \in [L]} p_l = 1)$.
    Define the log-likelihood $\ell(\theta)$ as
    $$\ell(\theta \mid X) := \log \mathcal{L}(\theta \mid X) = \log \Big[\sum_{l \in [L]} p_l \prod_{m=1}^M q_{m,l}^{I(X_1 = m)} \prod_{n=1}^N r_{n,l}^{I(X_2 = n)}\Big].$$
    It is well known that latent class models are not locally identifiable if and only if the Jacobian matrix $$J(X, \theta) := \frac{\partial \ell(\theta \mid X)}{\partial \theta} = \frac{1}{\mathcal{L}(\theta \mid X)} \frac{\partial \mathcal{L}(\theta \mid X)}{\partial \theta}$$ does not have full column rank (see \cite{goodman1974exploratory} or Theorem 1 in \cite{ouyang2022identifiability}). We claim that for all $\theta$ that satisfy \cref{assmp:nondegenerate}, $J(\theta)$ does not have full-column rank.

    Instead of working directly with $Y(X, \theta) := \frac{\partial \mathcal{L}(\theta \mid X)}{\partial \theta}$, we define a transformed $MN \times |\theta|$ matrix $Z$ that has the same column rank. 
    Recalling that $X = (X_1, X_2) \in [M]\times [N]$, each row in $J(X, \theta)$ can be indexed by the integer pair $(m, n)$ where $m \le M, n \le N$. As we ignore the redundant conditional probability parameters $q_{M,l}, r_{N,l}$, we define each row of $Z$ as follows:
    \begin{itemize}
        \item $Z((M,N), \theta) = 0,$
        \item define the row corresponding to $(X_1,X_2) =(M,n)$ (for $n <N$) as 
        $$Z((M,n), \theta) := \frac{\partial}{\partial \theta} \PP(X_1 \in [M], X_2 = n) = \frac{\partial}{\partial \theta} \PP(X_2 = n),$$
        \item define the row corresponding to $(X_1,X_2) =(m,N)$ (for $m<M$) as 
        $$Z((m,N), \theta) := \frac{\partial}{\partial \theta} \PP(X_1 =m, X_2 \in[N]) = \frac{\partial}{\partial \theta} \PP(X_2 = n),$$
        \item define the row corresponding to $(X_1,X_2) =(M,n)$ (for $m<M, n <N$) as
        $$Z((m,n), \theta) := Y((m,n), \theta) = \frac{\partial}{\partial \theta} \PP(X_1 =m, X_2 = n).$$
    \end{itemize}
    Note that $Z$ can be obtained from $Y$ by elementary row-operations, so the matrix rank is preserved. 

    Write the column vectors of $Z$ as $Z = ((\vec{z}_{p_l}: l \le L-1), (\vec{z}_{q_{m,l}}: m \le M-1, l \le L), (\vec{z}_{r_{n,l}}: n \le N-1, l \le L))$, where the indexing corresponds to each partial derivative. For any $l\in[L-1]$, we show that 
    \begin{align}\label{eq:lcm}
        \vec{z}_{p_l} = \sum_{m=1}^{M-1} \frac{q_{m,l}-q_{m,L}}{p_l} \vec{z}_{q_{m,l}} + \sum_{n=1}^{N-1} \frac{r_{n,l}-r_{n,L}}{p_L} \vec{z}_{r_{n,L}},
    \end{align}
    which immediately implies linear dependence of the columns. To see this, first note that direct computations give
    \begin{align*}
        z_{(m', n'), p_{l}} &= \begin{cases}
                0 & \text{if} \quad m' = M, n' = N, \\
                r_{n', l}-r_{n',L} & \text{if} \quad m' = M, n' < N, \\
                q_{m', l}-q_{m',L} & \text{if} \quad m' < M, n'= N, \\
                q_{m', l} r_{n', l}-q_{m',L}r_{n',L} & \text{otherwise},
            \end{cases} \\
        z_{(m', n'), q_{m,l}} &= \begin{cases}
           p_l r_{n', l} I(m = m') & \text{if} \quad n' < N, \\
           p_l I(m = m') & \text{if} \quad n' = N,
        \end{cases} \\
        z_{(m', n'), r_{n,l}} &= \begin{cases}
           p_l q_{m', l} I(n = n') & \text{if} \quad m' <M, \\
           p_l I(n = n') & \text{if} \quad m' = M,
        \end{cases}
    \end{align*}
    for all $l \in L$, where each row of $Z$ is indexed by $(m', n')$. We use this expression to check that each $(m',n')$-th element in the LHS/RHS of the vector equation \eqref{eq:lcm} are equal.
    \begin{itemize}
        \item When $m' = M, n' = N$, both sides of \eqref{eq:lcm} are 0.
        \item When $m' = M, n' < N$, the only nonzero summand in the RHS is when $n = n'$. Thus, the RHS simplifies to $r_{n',l} - r_{n',L}$, which is equal to the LHS.
        \item When $m' < M, n' = N$, the only nonzero summand in the RHS is when $m = m'$. Thus, the RHS simplifies to $q_{m',l} - q_{m',L}$, which is equal to the LHS.
        \item When $m'< M, n'< N$, the only nonzero summand in the RHS is when $m = m'$ or $n = n'$. Then, the RHS simplifies to
        $$(q_{m',l} - q_{m',L}) r_{n',l} + (r_{n',l} - r_{n',L}) q_{m',L} = q_{m',l} r_{n',l} - q_{m',L}r_{n',L},$$
        which is the LHS.
    \end{itemize}
    This completes the proof.    
\end{proof}

\section{Simulation details}\label{sec:simulation details}
\subsection{True parameters}\label{subsec:true param} The structural parameters are set as in the main text and here we describe the true distributional assumptions. The latent proportions $\pi$ under each latent DAG $\Lambda$ are set as follows: 
\begin{enumerate}[(a)]
    \item Chain: $\PP(H_1 = 1) = 2/3$, $\PP(H_2 = H_1 \mid H_1) = 2/3$ for both $H_1 = 0, 1$, $\PP(H_3 = H_2 \mid H_2) = 2/3$ for both $H_2 = 0, 1$.
    \item Collider: $\PP(H_1 = 1) = \PP(H_3 = 1) = 2/3$, $$\PP(H_2 = 1 \mid H_1, H_3) = \begin{cases}
        0.2 & \text{if} \quad (H_1,H_3) = (1,1), \\
        0.4 & \text{if} \quad (H_1,H_3) = (1,0), \\
        0.6 & \text{if} \quad (H_1,H_3) = (0,1), \\
        0.8 & \text{if} \quad (H_1,H_3) = (0,0).
    \end{cases}$$
    \item Dependent: consider an hierarchical binary latent variable $H_0$ such that $\PP(H_0=1)=2/3$, and define $\PP(H_a = H_0 \mid H_0) = 2/3$ for all $a=1,2,3$. Define $\pi = \PP(H_1, H_2, H_3)$ by marginalizing out $H_0$.
\end{enumerate}

The true conditional probabilities $\PP_{j,h}$ are set as follows: $X_j \mid (H=h) \sim \text{Ber}(\mu_{j,h})$ for $j = 1,2,3,4$, $X_j \mid (H=h) \sim \text{N}(\mu_{j,h}, 1)$ for $j = 5,6$, $X_j \mid (H=h) \sim \text{Cauchy}(\mu_{j,h}, 1)$ for $j = 7,8$, where $\mu_{j,h}$ is given in \cref{tab:pmf simulation}.
\begin{table}[h!]
\centering
\begin{tabular}{ccccccccc}
\toprule
$j \setminus h$ & (0,0,0) & (0,0,1) & (0,1,0) & (0,1,1) & (1,0,0) & (1,0,1) & (1,1,0) & (1,1,1) \\
\midrule
1 & 0.1     & 0.1     & 0.1     & 0.1     & 0.9     & 0.9     & 0.9     & 0.9     \\
2 & 0.05    & 0.05    & 0.4     & 0.4     & 0.7     & 0.7     & 0.95    & 0.95    \\
3 & 0.05    & 0.7     & 0.05    & 0.7     & 0.4     & 0.95    & 0.4     & 0.95    \\
4 & 0.985   & 0.857   & 0.714   & 0.571   & 0.429   & 0.285   & 0.143   & 0.014  \\
5 & -2 & -0.5 & -2 & -0.5 & 2 & 0.5 & 2 & 0.5 \\
6 & -1.5 & -1.5 & 1.5 & 1.5 & -1.5 & -1.5 & 1.5 & 1.5 \\
7 & 0.5 & -0.5 & 2 & -2 & 0.5 & -0.5 & 2 & -2 \\
8 & 0.5 & 0.5 & 0.5 & 0.5 & 0.5 & 0.5 & 0.5 & 0.5 \\
\bottomrule
\end{tabular}
\vspace{2mm}
\caption{True parameter values of $\mu_{j,h}$ for each $j,h$.
}
\label{tab:pmf simulation}
\end{table}
Note that we use the same conditional probabilities $\PP_{j,h}$ for all $\Lambda$. 

For the ablation studies, we consider the following three possible structures for $\Gamma$ while fixing the latent DAG $\Lambda$ to be chain ($H_1 \to H_2 \to H_3$). Here, $\Gamma^{\text{DT}}$ is the bipartite graph in \cref{fig:intro} and satisfies the condition in \cref{thm:identifiability}. The bold entries in $\Gamma^{\text{dense}}, \Gamma^{\text{sparse}}$ indicate the changes made from $\Gamma^{\text{DT}}$. Note that $\Gamma^{\text{dense}}$ and $\Gamma^{\text{sparse}}$ does not satisfy the necessary condition in Theorems \ref{thm:subset necessary}, \ref{thm:necessary}, respectively. 
\begin{align}\label{eq:gamma formula}
    \Gamma^{\text{DT}}= \begin{pmatrix} 1 & 0 & 0 \\ 1 & 1 & 0 \\ 1 & 0 & 1 \\ 1 & 1 & 1 \\ 1 & 0 & 1 \\ 0 & 1 & 0 \\ 0 & 1 & 1 \\ 0 & 0 & 0 \\ \end{pmatrix}, \quad \Gamma^{\text{dense}}= \begin{pmatrix} 1 & 0 & 0 \\ 1 & 1 & \textbf{1} \\ 1 & 0 & 1 \\ 1 & 1 & 1 \\ 1 & 0 & 1 \\ 0 & 1 & \textbf{1} \\ 0 & 1 & 1 \\ 0 & 0 & 0 \\ \end{pmatrix}, \quad \Gamma^{\text{sparse}}= \begin{pmatrix} 1 & 0 & 0 \\ 1 & 1 & 0 \\ 1 & 0 & \textbf{0} \\ 1 & 1 & 1 \\ 1 & 0 & \textbf{0} \\ 0 & 1 & 0 \\ 0 & 1 & 1 \\ 0 & 0 & 0 \\ \end{pmatrix}.
\end{align}
We also modified the parameter values accordingly from that in \cref{tab:pmf simulation} for $\Gamma = \Gamma^{\text{dense}}, \Gamma^{\text{sparse}}$.

\subsection{Algorithm details}\label{subsec:algo}
While our identifiability results (\cref{thm:identify K} and \cref{thm:identifiability}) naturally lead to an algorithm, its practical implementation is challenging. The main point of concern is computing the rank $2^K$ tensor CP decomposition in \eqref{eq:tensor decomposition}. In general, computing tensor decompositions are challenging nonconvex optimization problems. While there exists methods such as alternating least squares or Jennrich's algorithm \citep{kolda2009tensor, harshman1970foundations}, its finite sample accuracy was not satisfactory for our purpose. Moreover, the computational complexity suffers from potentially large values of the latent and observed dimensions $K, J$. Based on these concerns, we instead implement a simple score-based estimation procedure. {Our algorithm is a two-step procedure, where we first maximize the penalized likelihood to estimate the bipartite graph $\Gamma$ and the latent proportion vector $\pi$. Then, we recover the latent causal graph $\Lambda$ from the estimated latent proportions $\hat{\pi}$. 
Here, the first step utilizes \cite{ma2023learning}, and the second step is motivated by a similar procedure in \cite{zhang2025causalrep}.
} See \cref{algo:pem} for the general pipeline.

\begin{algorithm}
\caption{Full algorithm to learn the model components.}
\label{algo:pem}
\SetKwInOut{Input}{Input}
\SetKwInOut{Output}{Output}

\KwData{Observed data $X^{(1)}, \ldots, X^{(N)}$
}
Set the truncation parameter $\epsilon = 0.125$ and the number of pseudo-samples $N_{\text{sample}} = 2000$
\begin{enumerate}
    \item Define the discretized data $Y^{(1)},\ldots,Y^{(N)}$ by thresholding  at $0$.
    \item Apply the penalized EM algorithm (Algorithm 1 in \cite{ma2023learning}) to estimate ${\pi}$ and ${\Theta}$.
    \item For each $j \in [J], k \in [K]$, estimate $\gamma_{j,k}$ by \eqref{eq:estimate gamma}
    \item Given $\hat{\pi}$, we generate $N_{\text{sample}}$ pseudo-samples of $H$ and estimate $\Lambda$ by the GES algorithm.
\end{enumerate}
\Output{Estimated DAG $(\Gamma,\Lambda)$ and latent proportion $\pi$.}
\end{algorithm}

We elaborate on each step.

\emph{Step 1: Discretize.} For the continuous responses with $j \ge 5$, discretize each entry by setting $Y_{j,h} := 1$ if and only if $X_{j,h} > 0$. The threshold can be chosen differently, e.g. one may threshold at the sample average $\bar{X_j}$. 

\emph{Step 2: Penalized EM.} %
For each $j,h$, define the Bernoulli success probability as $\theta_{j,h} := \PP(Y_{j,h} = 1) \in (0,1)$ and let $\Theta$ be the $J \times 2^K$ matrix with entries $\theta_{j,h}$. Denoting $\ell_N(\pi, \Theta; Y)$ as the discretized log-likelihood (see eq. (9) in \cite{ma2023learning}), we maximize the following penalized log-likelihood
$$\ell_N(\pi, \Theta; Y) - \lambda \sum_{j=1}^J \sum_{h \neq h'} \min(|\theta_{j,h}-\theta_{j,h'}|, \tau)$$
via a penalized EM algorithm. The above penalty (referred to as the grouped truncated lasso penalty) induces sparsity in each row of $\Theta$.

We modify Algorithm 1 in \cite{ma2023learning} to compute the above estimator. Note that the cited work considers an additional degeneracy in terms of the proportions $\pi_h$ and encourages their sparsity by adding another penalty. As we consider $\pi_h > 0$ for all $h$ (see \cref{assmp:nondegenerate}), we ignore this redundant penalty by setting $\tilde{\lambda}_1 = 0$. Following the choices in \citep[pg. 196,][]{ma2023learning}, we set the grid for other tuning parameters as $\tilde{\lambda}_2 \in \{1, 10, 10^2, 10^3\}, \tau \in \{0.05, 0.1\}$. As the EM algorithm is known to suffer from convergence up to local optima, we initialize both $\theta$ and $\pi$ to be close enough to the truth. We initialize them as the linear combination of the true parameter and random noise (each with weight $0.7$, $0.3$). This initialization also fixes the label permutation and sign-flipping ambiguities mentioned in \cref{sec:setup}. We set the convergence criteria of the EM algorithm as either satisfying (a) the $l^2$ norm of the proportion estimates $\hat{\pi}$ are smaller than $0.005$, or (b) the algorithm reaches the 10th iteration.

\emph{Step 3: Learn $\Gamma$.} Under Assumptions \ref{assmp:causal markov} and \ref{assmp:nondegenerate}, $\gamma_{j,k} = 0$ if and only if $\theta_{j, (h_{(-k),1})} = \theta_{j, (h_{(-k),0})}$ for all $h_{(-k)} \in \{0,1\}^{K-1}$. Thus, we use the estimated $\hat{\theta}_{j,h}$s to recover $\gamma$. To account for the finite-sample estimation error for each entry of $\hat{\theta}$, we estimate $\gamma_{j,k}$ based on the median of all possible values of $\hat{\theta}_{j, (h_{(-k),1})} - \hat{\theta}_{j, (h_{(-k),0})}$ as follows:
\begin{align}\label{eq:estimate gamma}
    \hat{\gamma}_{j,k} := \begin{cases}
        1 & \text{if} \quad \text{median}\big(\{|\hat{\theta}_{j, (h_{(-k),1})} - \hat{\theta}_{j, (h_{(-k),0})}| :h_{(-k)}\} \big) > \epsilon, \\
        0 & \text{otherwise}
    \end{cases}.
\end{align}
In practice, we choose the threshold $\epsilon = 0.125$ based on the observation that the maximum threshold in the penalty function in step 2 is $\tau = 0.1$ (and allowing a small margin of error). Empirically, this choice of $\epsilon$ leads to the best accuracy for $\Gamma$ compared to other thresholds.

\emph{Step 4: Learn $\Lambda$.} To learn the latent causal DAG $\Lambda$ from $\hat{\pi}$, we generate 2000 pseudo-samples from the multinomial distribution on $\{0,1\}^K$ with probability $\hat{\pi}$ and apply the Greedy Equivalence Search (GES) algorithm \citep{chickering2002optimal}. 
{This structure learning method has been recently proposed in \cite{zhang2025causalrep}, alongside consistency gurantees.}
We implemented this procedure via Python, using the package \texttt{ges}. We compute the final SHD metric between the estimated CPDAG and true $\Lambda$ using the \texttt{count\_accuracy} function from \cite{kivva2021learning}\footnote{https://github.com/30bohdan/latent-dag}.

\begin{remark}[Extensions]
Note that we assume that $K = 3$ is known for the above estimation pipeline. In practice, the latent dimension $K$ can be chosen based on standard information criteria (such as BIC, EBIC) as well as domain knowledge and interpretability. Additionally, one may modify the above algorithm by considering finer discretization for continuous responses, which may improve estimation accuracy.
\end{remark}

\subsection{Comparison studies}\label{subsec:comparison}
We compare the performance of \cref{algo:pem} with existing methods, namely the mixture oracle. We were unable to compare against tensor rank based causal discovery methods \citep{chen2024learning,chen2025identification} due to code unavailability. As the mixture oracle does not exist for binary data as discussed in \cref{sec:literature review}, we slightly modified the simulation setup and assumed that all $X_j$ are continuous and generated them from isotropic Gaussians with varying means. For fair comparison, we assumed that the latent dimension $K$ is known in both methods. Under the chain latent structure, we report the SHD for both methods (ours and mixture oracle) below. Here, the maximum possible value of the SHD is 24, so the values given by the mixture oracle are marginally better than randomly guessing.
\begin{table}[h!]
    \centering
    \begin{tabular}{cccc}
        \toprule
         Method $\setminus ~N$ & 1k & 5k & 10k \\
        \midrule
        Proposed & 1.7 & 1.7 & 1.4 \\
        MixOracle & 11.4 & 11.2 & 10.8 \\
        \bottomrule
    \end{tabular}
    \caption{SHD for estimating $\Gamma$, under the proposed method and the mixture oracle.}
    \label{tab:mixoracle}
\end{table}

We believe one critical reason for the unsatisfactory performance of the mixture oracle is the challenge of estimating the number of mixture components for low ($\le 3$)-dimensional marginals. For instance, it frequently estimated 2 mixture components whereas the true number is as large as $2^K=8$. In contrast, our proposed method simultaneously learns the entire mixture of $J$-dimensional distributions (without combining separate information from the marginals), and does not suffer from such issues.

We do note that the simulation setting is somewhat favorable to ours, in the sense that our method knows the cardinality of latent variables, whereas the mixture oracle also estimates this from data. Additionally, the mixture oracle approach has the advantages of allowing general categorical latent variables and estimating their cardinality. However, we believe that our method will be particularly suitable in the challenging case with not-well-separated marginals, as illustrated by the above simulation results. The strength of our method lies in its simplicity and generality, as it only utilizes minimal information required to identify causal structures and can allow arbitrary response types.

\subsection{Runtime}
\begin{table}[h!]
    \centering
    \begin{tabular}{cccc}
    \toprule
        $\Lambda \setminus N$ & 1k & 5k & 10k \\
    \midrule
    Chain & 1.6 & 3.2 & 5.1 \\
    Collider & 2.5 & 5.4 & 8.0 \\
    Dependent & 2.3 & 4.1 & 6.0 \\
    \bottomrule
    \end{tabular}
    \vspace{0.2cm}
    \caption{Average runtime (in seconds) for steps 1-3 in \cref{algo:pem}.}
    \label{tab:runtime}
\end{table}

We report the average runtime (over the 300 replications) in \cref{tab:runtime}. As Step 4 above was implemented in a different language and did not vary across simulation settings, we only report the runtime of steps 1--3. The simulations were implemented in \texttt{MATLAB} on a personal laptop device (Lenovo ThinkPad X1). 

To additionally assess scalability of the proposed method under a larger number of observed/latent variables, we have performed preliminary simulations with $K=8,J=24$. Under this setting, the method only took 3 minutes to run, while accurately estimating 92\% of the entries in $\Gamma$.

\subsection{Entrywise results and uncertainty}
The following table reports more detailed entrywise estimation accuracy of the $8 \times 3$ bipartite graph $\Gamma$ under the setting of \cref{tab:robust} with $N = 10000$. Instead of reporting the aggregated SHD metric, we report the entrywise error (the empirical average of $I(\hat{\gamma}_{j,k} \neq \gamma_{j,k})$) under each scenario of $\Gamma$. Recall that three true graphical structures for $\Gamma$ are considered: (i) $\Gamma^{DT}$ satisfies our identifiability conditions in Theorem \ref{thm:identifiability}, whereas (ii) $\Gamma^{dense}$, (iii) $\Gamma^{sparse}$ do not satisfy the necessary conditions in Theorems \ref{thm:subset necessary}, \ref{thm:necessary}, respectively (see \eqref{eq:gamma formula} for the exact matrices).

In particular, for the dense case, the subset-condition between the second and third columns (indexed by $k=2,3$) does not hold, which results in several large error values in these two columns. For the sparse case, the latent variable $H_3$ only has two observed children, which leads to a non-trivial error for the entire third column. These results illustrate that theoretical non-identifiability results in an accuracy drop for the graphical structure associated with non-identifiable latent variables.

\begin{table}[h!]
    \centering
    \begin{tabular}{c|ccc|ccc|ccc} 
    \toprule
    $\Gamma$ && $\Gamma^{\text{DT}}$ &&& $\Gamma^{\text{dense}}$ &&& $\Gamma^{\text{sparse}}$ & \\ 
    $j\backslash k$ & $1$ & $2$ &$3$&$1$& ${2}$ & ${3}$ &$1$&$2$& ${3}$ \\ 
    \midrule $1$ &&& &&& &&0.16&0.23 \\ 
    $2$&&&& 0.36 && 0.70 &&&0.27\\ 
    $3$&&0.22&&&0.36&&&0.24&0.28\\ 
    $4$&&&0.41&&&0.73&&&0.40\\ 
    $5$&&&0.13&& 0.17 &0.23&&0.26&0.33\\ 
    $6$&&&&0.24&0.51&&0.23&&0.30\\ $7$&0.18&0.17&&0.29&0.57&&0.18&0.56& 0.12\\ 
    $8$&&&&&&&&0.12&0.14 \\
    \bottomrule
    \end{tabular}
    \caption{Entry-wise estimation error for the bipartite graph $\Gamma$, where three $\Gamma$s are considered. Values < 0.1 are omitted, and larger values mean worse accuracy.}
    \label{tab:gamma-accuracy-entry}
\end{table}

\begin{figure}[h!]
    \centering
    \includegraphics[width=0.46\linewidth]{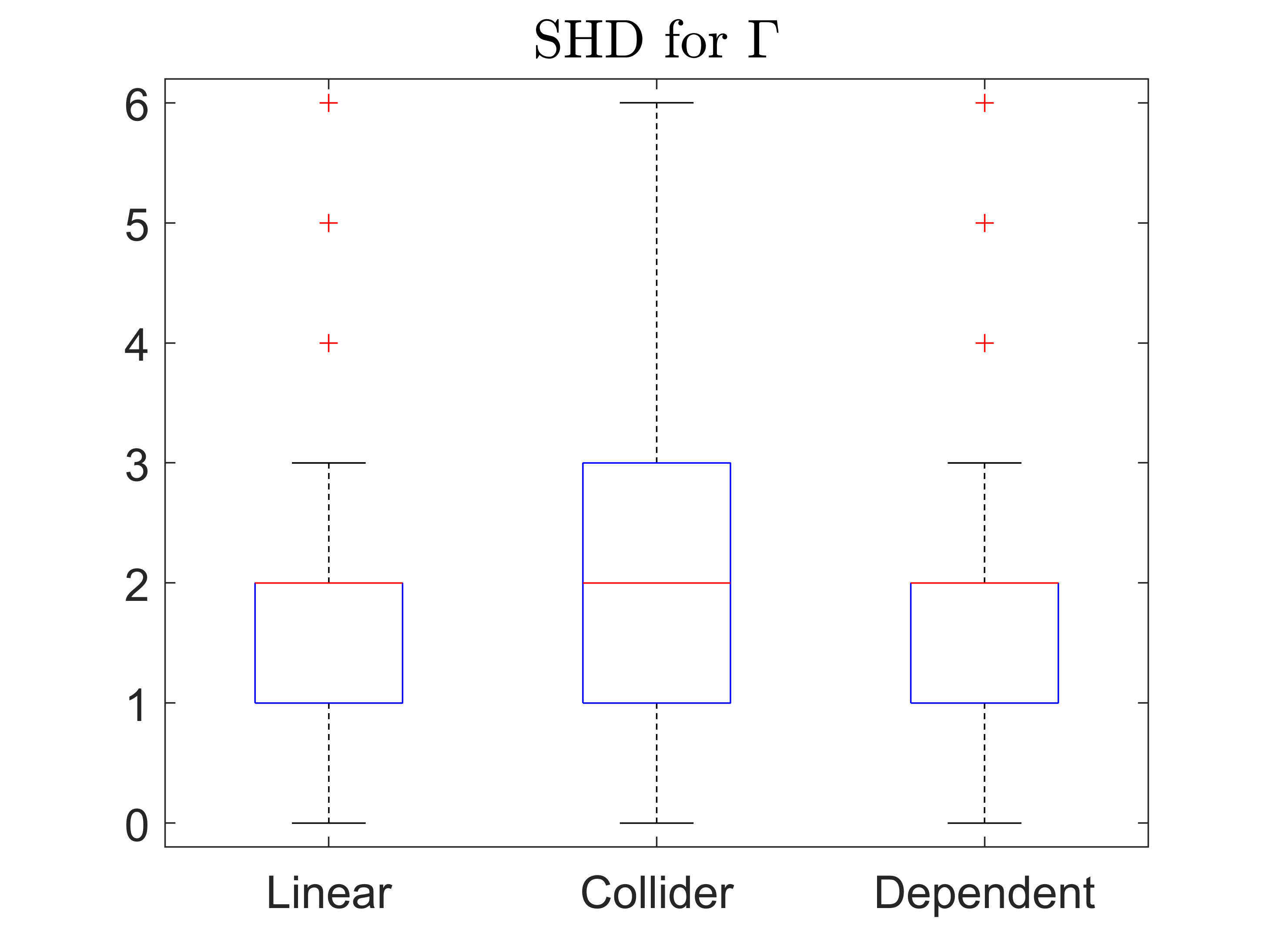}
    \quad
    \includegraphics[width=0.46\linewidth]{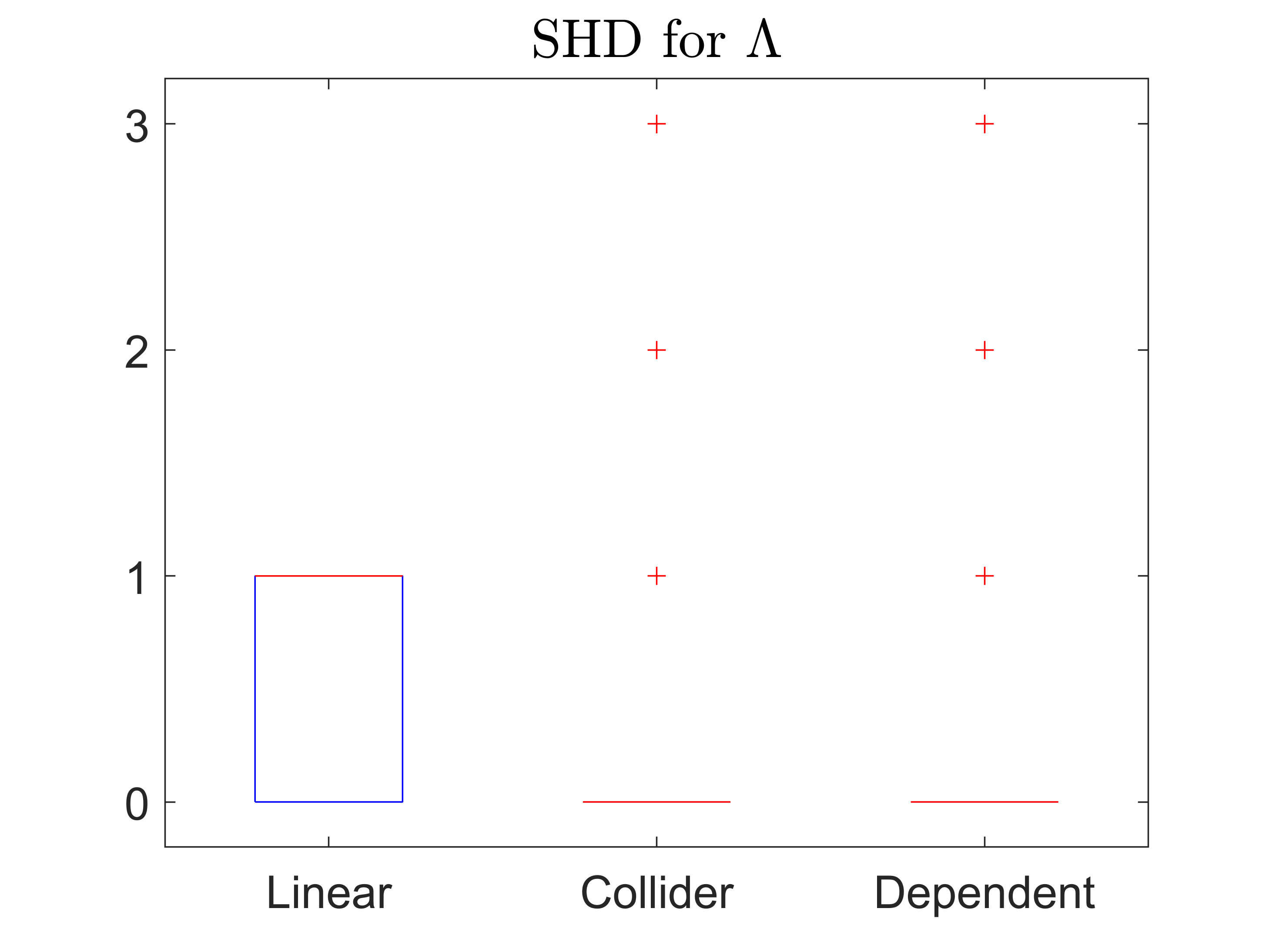}
    \caption{Box plots of the SHD values corresponding to \cref{tab:shd}, where $N = 5000$. \textbf{Smaller is better.}}
    \label{fig:boxplot}
\end{figure}
We additionally report box plots to complement the average values reported in \cref{tab:shd}. Note that the SHD is integer-valued, so we chose to report the box plots as opposed to the standard deviation. As the quantiles and outliers are also integer-valued, the box plots look very similar for all three sample sizes that we consider. Thus, we only report the box plots for $N = 5000$.
Note that $\Gamma$ is a bipartite graph between 8 observed variables and 3 latent variables, so the SHD value of 2 means that two edges are incorrect among the 24 possibilities. The SHD value for $\Lambda$ computes the discrepancy between the estimated CPDAG and $\Lambda$, and has the maximum value of 3.

\section{Real data details}\label{sec:real data details}
The fraction subtraction dataset is publicly available through the \texttt{R} package \texttt{cdm} \citep{george2016r}. We have focused on $17$ of the $20$ items provided in the dataset, to match the analysis in \cite{chen2015statistical}.

Next, we provide the detailed estimates of the $17 \times 3$ bipartite graphical matrix $\Gamma$ under the fraction subtraction dataset. In \cref{tab:fraction}, each row corresponds to one of the $J = 17$ questions, and each column corresponds to a latent skill. Our result resembles that in Table 11 of \cite{chen2015statistical} (see the `existing' panel in \cref{tab:fraction}), except for a few entries which are indicated in bold. Similar to \cite{chen2015statistical}, we can understand each latent skill as: computing common denominator, writing integer as fraction, and subtracting integers. For example, questions 1-3 ask to compute fractions that do not involve integers $\frac{5}{3} - \frac{3}{4}, \frac{3}{4} - \frac{3}{8}, \frac{5}{6}-\frac{1}{9}$, so the estimated row of $\Gamma$ for these questions are $(1,0,0)$. However, for more complex questions such as question 17 which requires computing $3 \frac{3}{8} - 2 \frac{5}{6}$, all skills are required with the corresponding row in $\Gamma$ being $(1,1,1)$. Indeed, there exists only one pure child of the second skill in both estimates of $\Gamma$, as it is typically measured alongside the third skill.

\begin{table}[h!]
    \centering
    \begin{tabular}{c|ccc|ccc}
        \toprule
        \multirow{2}{*}{$j \setminus k$} &  \multicolumn{3}{c}{Proposed}  &  \multicolumn{3}{c}{Existing} \\
        \cmidrule{2-7}
         & 1 & 2 & 3 & 1 & 2 & 3 \\
        \midrule
        1 & 1 & 0 & 0 & 1 & 0 & 0 \\
        2 & 1 & 0 & 0 & 1 & 0 & 0 \\
        3 & 1 & 0 & 0 & 1 & 0 & 0 \\
        4 & 0 & 1 & 0 & 0 & 1 & 0 \\
        5 & \textbf{1} & 0 & 1 & 0 & 0 & 1 \\
        6 & 0 & 0 & 1 & 0 & 0 & 1 \\
        7 & 0 & 0 & 1 & 0 & 0 & 1 \\
        8 & 1 & 0 & \textbf{0} & 1 & 0 & 1 \\
        9 & 1 & 0 & 1 & 1 & 0 & 1 \\
        10 & 1 & 0 & 1 & 1 & 0 & 1 \\
        11 & 0 & 1 & 1 & 0 & 1 & 1 \\
        12 & 0 & 1 & 1 & 0 & 1 & 1 \\
        13 & 0 & 1 & 1 & 0 & 1 & 1 \\
        14 & \textbf{1} & 1 & 1 & 0 & 1 & 1 \\
        15 & 0 & 1 & 1 & 0 & 1 & 1 \\
        16 & \textbf{1} & 1 & 1 & 0 & 1 & 1 \\
        17 & 1 & 1 & 1 & 1 & 1 & 1 \\
        \bottomrule
    \end{tabular}
    \caption{Comparison of estimated $\Gamma$ from our method and that in \cite{chen2015statistical}. The bold entries are those that differ.}
    \label{tab:fraction}
\end{table}

\section{Justification for the binary latent variable assumption}\label{sec:binary}

The binary latent assumption simplifies our technical proof and algebraic requirements by allowing a clean, \emph{non-generic} statement for Lemma \ref{lem:full rank}. This is our key property that establishes the full-rankness of the augmented conditional distribution $\PP(X_{1:K} \mid H_{1:K})$ when the corresponding observed-to-latent graph is lower-triangular. Its proof relies on showing a non-zero determinant $\text{det} \PP(X_{1:K} \mid H_{1:K}) \neq 0$. When we instead consider categorical latent variables that take $V>2$ values, there are possible counterexamples.

For example, consider a toy setting with $V=3,K=J=2$ and a triangular structure with $H_1 \to X_1, H_2 \to X_1, X_2$, where each $X_k,H_k$ takes values in $\{0,1,2\}$. Setting $\theta_{x,h_1}:=\PP(X_1=x \mid H_1 = h_1), \eta_{x,(h_1,h_2)} := \PP(X_2=x \mid H_1=h_1, H_2=h_2)$, we can show that $\text{det} \PP(X_{1:K} \mid H_{1:K}) \neq 0$ if and only if $$\theta_{0,0}(\theta_{1,1} - \theta_{1,2})+\theta_{0,1}(\theta_{1,2}-\theta_{1,0})+\theta_{0,2}(\theta_{1,0}-\theta_{1,1}) \neq 0,$$ $$\eta_{0,(h,0)}(\eta_{1,(h,1)}-\eta_{1,(h,2)})+\eta_{0,(h,1)}(\eta_{1,(h,2)}-\eta_{1,(h,0)})+\eta_{0,(h,2)}(\eta_{1,(h,0)}-\eta_{1,(h,1)}) \neq 0, \quad \forall h = 0,1,2.$$ In general, we must assume that the values of such alternating polynomials (of degree $V-1$) are nonzero for each observation $X_j$. While this is a generic statement that holds almost surely under fully categorical observations, such conclusions cannot be stated for the continuous or other general responses considered in our paper. Furthermore, this restricts the identifiability guarantees for parametric sub-models. For example, we cannot even take a linear parametrization $\theta_{x, h_1} = \PP(X_1 = x \mid H_1 = h_1) = \beta_{0,x} + h_1 \beta_{1,x}$, under which the above equation for $\theta$ becomes exactly zero. In contrast, under binary latents with $V=2$, this requirement is immediately satisfied by the usual nondegeneracy assumption for the conditional distributions (see Assumption 3(b)).

For Lemma 1 to hold under categorical latents, we also need each $X_k$ to take at least $V$ possible values. If $V>2$, this rules out binary responses. This restriction can be troublesome in many practical scenarios for causal discovery in observational studies. For example, our real data example considers binary responses (correct/incorrect) arising in educational assessments, and it is known to be extremely challenging to establish identifiability when general categorical latent variables are used to model binary data. As another illustration, binary observations/confounders are also common in medical studies, e.g., the binary observation of a patient exhibiting a symptom or not is confounded by binary latent gene mutations.
\end{document}